\crefname{equation}{}{}
\crefname{lemma}{Lem.}{Lems.}
\crefname{section}{Sec.}{Secs.}
\crefname{appendix}{App.}{Apps.}
\crefname{table}{Tab.}{Tabs.}
\crefname{theorem}{Thm.}{Thms.}
\crefname{proposition}{Prop.}{Props.}
\crefname{assumption}{Assump.}{Assumps.}
\crefname{corollary}{Cor.}{Cors.}
\crefname{remark}{Rmk.}{Rmks.}
\crefname{definition}{Def.}{Defs.}
\crefname{algorithm}{Alg.}{Algs.}
\numberwithin{equation}{section}
\theoremstyle{plain}
\newtheorem{theorem}{Theorem}[section]
\newtheorem{proposition}[theorem]{Proposition}
\newtheorem{lemma}[theorem]{Lemma}
\newtheorem{corollary}[theorem]{Corollary}
\newtheorem{definition}[theorem]{Definition}
\newtheorem{assumption}[theorem]{Assumption}
\newtheorem{remark}[theorem]{Remark}
\renewcommand{\thefootnote}{\arabic{footnote}}
\def\eqref#1{(\ref{#1})}
\def\floor#1{\lfloor #1 \rfloor}
\newcommand{\dif}{{\mathrm{d}}}
\def\eps{{\epsilon}}
\def\vone{{\bm{1}}}
\def\vp{{\bm{p}}}
\def\vq{{\bm{q}}}
\def\vs{{\bm{s}}}
\def\vx{{\bm{x}}}
\def\mE{{\bm{E}}}
\def\mI{{\bm{I}}}
\def\mQ{{\bm{Q}}}
\def\mX{{\bm{X}}}
\DeclareMathAlphabet{\mathsfit}{\encodingdefault}{\sfdefault}{m}{sl}
\SetMathAlphabet{\mathsfit}{bold}{\encodingdefault}{\sfdefault}{bx}{n}
\def\gB{{\mathcal{B}}}
\def\gF{{\mathcal{F}}}
\def\gL{{\mathcal{L}}}
\def\gO{{\mathcal{O}}}
\def\gP{{\mathcal{P}}}
\def\sD{{\mathbb{D}}}
\def\sQ{{\mathbb{Q}}}
\def\sT{{\mathbb{T}}}
\def\sX{{\mathbb{X}}}
\newcommand{\E}{\mathbb{E}}
\renewcommand{\P}{\mathbb{P}}
\newcommand{\R}{\mathbb{R}}
\newcommand{\KL}{D_{\mathrm{KL}}}
\DeclareMathOperator{\diag}{diag}
\DeclareRobustCommand{\cev}[1]{%
  {\mathpalette\do@cev{#1}}%
}
\newcommand{\do@cev}[2]{%
  \vbox{\offinterlineskip
    \sbox\z@{$\m@th#1 x$}%
    \ialign{##\cr
      \hidewidth\reflectbox{$\m@th#1\vec{}\mkern4mu$}\hidewidth\cr
      \noalign{\kern-\ht\z@}
      $\m@th#1#2$\cr
    }%
  }%
}
\DeclareRobustCommand{\cev}[1]{%
  {\mathpalette\do@cev{#1}}%
}
\newcommand{\RK}{{\mathrm{RK}}}
\newcommand{\trap}{{\mathrm{trap}}}
\newcommand{\roI}{{\mathrm{I}}}
\newcommand{\roII}{{\mathrm{II}}}
\title{Fast Solvers for Discrete Diffusion Models:\\ Theory and Applications of High-Order Algorithms}
\author{
Yinuo Ren$^{1,*}$\qquad
Haoxuan Chen$^{1,*,\dagger}$ \qquad 
Yuchen Zhu$^{2,*}$ \qquad
Wei Guo$^{2,*}$\\
\textbf{Yongxin Chen}$^{2}$ \qquad
\textbf{Grant M. Rotskoff}$^{1}$ \qquad
\textbf{Molei Tao}$^{2}$ \qquad
\textbf{Lexing Ying}$^{1}$ \\
$^1$Stanford University \qquad
$^2$Georgia Institute of Technology \\
\texttt{\{yinuoren, haoxuanc, rotskoff, lexing\}@stanford.edu}\\
\texttt{\{yzhu738, wei.guo, yongchen, mtao\}@gatech.edu}
}
\begin{document}

\maketitle

\def\thefootnote{}\footnotetext[1]{$^*$Equal contribution} 
\def\thefootnote{}\footnotetext[2]{$^\dagger$ Corresponding author}

\begin{abstract}
Discrete diffusion models have emerged as a powerful generative modeling framework for discrete data with successful applications spanning from text generation to image synthesis. However, their deployment faces challenges due to the high dimensionality of the state space, necessitating the development of efficient inference algorithms. Current inference approaches mainly fall into two categories: exact simulation and approximate methods such as $\tau$-leaping. While exact methods suffer from unpredictable inference time and redundant function evaluations, $\tau$-leaping is limited by its first-order accuracy. In this work, we advance the latter category by tailoring the first extension of high-order numerical inference schemes to discrete diffusion models, enabling larger step sizes while reducing error. We rigorously analyze the proposed schemes and establish the second-order accuracy of the $\theta$-Trapezoidal method in KL divergence. Empirical evaluations on GSM8K-level math-reasoning, GPT-2-level text, and ImageNet-level image generation tasks demonstrate that our method achieves superior sample quality compared to existing approaches under equivalent computational constraints, with consistent performance gains across models ranging from 200M to 8B. Our code is available at \url{https://github.com/yuchen-zhu-zyc/DiscreteFastSolver}.
\end{abstract}

\def\thefootnote{\arabic{footnote}}

\section{Introduction}

Diffusion and flow-based models on discrete spaces \cite{chen2022analog, austin2021structured, dieleman2022continuous, floto2023diffusion, hoogeboom2021autoregressive, hoogeboom2021argmax, meng2022concrete, richemond2022categorical, sun2022score, santos2023blackout} have emerged as a cornerstone of modern generative modeling for categorical data, offering unique advantages in domains where continuity assumptions fail. Unlike their continuous counterparts, discrete diffusion models inherently accommodate data with discrete structures, \emph{e.g.}, language tokens, molecular sequences, tokenized images, and graphs, enabling principled generation and inference in combinatorially complex spaces. These models have exerted a large impact on numerous applications, from the design of molecules~\cite{kerby2024training}, proteins~\cite{frey2023protein}, and DNA sequences~\cite{avdeyev2023dirichlet, guo2024plug} under biophysical constraints, to the generation of high-fidelity text~\cite{dat2024discrete} and images~\cite{hu2022global} via autoregressive or masked transitions, \emph{etc.}. Beyond standalone tasks, discrete diffusion models also synergize with methodologies, ranging from tensor networks~\cite{causer2024discrete} to guidance mechanisms~\cite{nisonoff2024unlocking,li2024derivative,schiff2024simple}.

Discrete diffusion models, despite their broad applicability, face a critical bottleneck: \emph{inference inefficiency}.
Current inference methods include: (1) exact simulation methods~\cite{zheng2024masked}, which ensure unbiased sampling from the pre-trained model but suffer from unpredictable inference time and redundant score evaluations, resulting in poor scaling w.r.t. dimensionality; and (2) approximate methods such as $\tau$-leaping~\cite{campbell2022continuous}, which offer simple and parallelizable implementation but, due to their first-order accuracy, requires small step sizes to control discretization error, forcing a stringent trade-off between speed and sample quality.

To address these limitations in possibly computationally constrained environments, we develop high-order numerical schemes tailored for discrete diffusion model inference. Drawing inspirations from acceleration techniques developed for ordinary differential equations (ODEs)~\cite{butcher1987numerical}, stochastic differential equations (SDEs)~\cite{burrage1996high, anderson2009weak}, chemical reaction simulations~\cite{hu2011weaka}, and most recently continuous diffusion~\cite{tachibana2021quasi, lu2022dpm, lu2022dpm++}, our work represents the \emph{first successful adaptation of high-order numerical schemes to the discrete diffusion domain}. Through careful design, these high-order schemes provide unprecedented efficient and versatile solutions for discrete diffusion model inference.

\paragraph{Our Contributions.} The main contributions of this paper are summarized as follows:
\begin{itemize}[leftmargin=10pt,itemsep=2pt, topsep=2pt]
\item We introduce the \emph{first high-order numerical solvers} for discrete diffusion model inference, namely the $\theta$-Runge-Kutta-2 ($\theta$-RK-2) method and the $\theta$-Trapezoidal method;
\item We rigorously establish the theoretical properties of both methods, proving \emph{second-order convergence} of the $\theta$-Trapezoidal method and \emph{conditional second-order convergence} of the $\theta$-RK-2 method;
\item We empirically validate our theoretical results and demonstrate the \emph{superior performance} of the $\theta$-Trapezoidal method through comprehensive evaluations on large-scale text and image generation benchmarks.
\end{itemize}

\subsection{Related Works}

Here we briefly review related works and defer a more detailed discussion to~\cref{app:related_works}.

\paragraph{Discrete Diffusion Models.}
Since their introduction, discrete diffusion models have undergone significant refinements, including the development of score-entropy loss~\cite{lou2024discrete} and flow-matching formulation~\cite{campbell2024generative,gat2024discrete}. These models generally fall into two categories based on their noise distribution: uniform~\cite{lou2024discrete,schiff2024simple} and masked (absorbing state)~\cite{ou2024your,shi2024simplified,sahoo2024simple,zheng2024masked}, each offering unique advantages in modeling discrete distributions. Recent theoretical advances have emerged through studies~\cite{chen2024convergence, zhang2024convergence, ren2024discrete}.

\paragraph{High-Order Scheme for Continuous Diffusion Models.}

The development of high-order numerical schemes for solving ODEs and SDEs represents decades of research, as comprehensively reviewed in \cite{butcher1987numerical,kloeden1992numerical,kloeden2012numerical}. These schemes have recently been adapted to accelerate continuous diffusion model inference, encompassing approaches such as the exponential integrators~\cite{zhang2022fast,zhanggddim,gonzalez2024seeds}, Adams-Bashforth methods~\cite{lu2022dpm++,xue2024sa,zhangsong2023improved}, Taylor methods~\cite{tachibana2021quasi,dockhorn2022genie} and (stochastic) Runge-Kutta methods~\cite{liu2022pseudo,lu2022dpm,karras2022elucidating, zheng2023dpm, li2024accelerating,wu2024stochastic}.

\paragraph{High-Order Scheme for Chemical Reaction Systems.}

Regarding approximate methods for simulating compound Poisson processes and chemical reaction systems with state-dependent intensities, efforts have been made on the $\tau$-leaping method~\cite{gillespie2001approximate}, and its extensions~\cite{cao2004numerical, burrage2004poisson, hu2011weaka, hu2009highly, arns2010numerical}. For a quick review of the problem setting and these methods, one may refer to~\cite{higham2008modeling, weinan2021applied}. The adaptation of these methods to discrete diffusion models presents unique challenges due to the presence of both time and state-inhomogeneous intensities in the underlying Poisson processes.

\section{Preliminaries}

In this subsection, we review several basic concepts and previous error analysis results of discrete diffusion models.

\subsection{Discrete Diffusion Models}

In discrete diffusion models, one considers a continuous-time Markov chain (CTMC) $(\vx_t)_{0 \leq t \leq T}$ on a finite space $\sX$ as the \emph{forward process}. We represent the distribution of $\vx_t$ by a vector $\vp_t \in \Delta^{|\sX|}$, where $\Delta^{|\sX|}$ denotes the probability simplex in $\R^{|\sX|}$. Given a target distribution $\vp_0$, the CTMC satisfies the following equation:
\begin{equation}
	\dfrac{\dif \vp_t}{\dif t} = \mQ_t \vp_t, \quad \text{where}\ \mQ_t = (Q_t(y, x))_{x, y\in \sX}
	\label{eq:forward}
\end{equation}
is the rate matrix at time $t$ satisfying
\begin{equation*}
    \text{(i)}\ Q_t(x, x) = - \sum_{y \neq x} Q_t(y, x),\ \forall x \in \sX;\ \text{(ii)}\ Q_t(x, y) \geq 0,\ \forall x \neq y \in \sX.
\end{equation*}
Below we will use the notation
$\mQ^0_t = \mQ_t - \diag \mQ_t$. It can be shown that the corresponding backward process is of the same form but with a different rate matrix~\cite{kelly2011reversibility}:
\begin{equation}
    \dfrac{\dif \cev{\vp}_s}{\dif s} 
    = \overline{\mQ}_s \cev{\vp}_s 
    ,\quad \text{where}\ 
    \overline Q_s(y, x) = \begin{cases}
        \tfrac{\cev p_s(y)}{\cev p_s(x)} \cev Q_s(x, y),\ &\forall x \neq y \in \sX,\\
        - \sum_{y' \neq x} \overline Q_s(y', x),\ &\forall x = y \in \sX.
    \end{cases}
    \label{eq:backward}
\end{equation}
is the rate matrix and $\cev *_s$ denotes $*_{T-s}$. The rate matrix $\mQ_t$ is often chosen to possess certain sparse structures such that the forward process converges to a simple distribution that is easy to sample from. Popular choices include the uniform and absorbing state cases~\cite{lou2024discrete}, where the forward process~\eqref{eq:forward} converges to the uniform distribution on $\sX$ and a Dirac distribution, respectively.

Common training practice is to define the score function (or the score vector) as $ \vs_t(x) = (s_t(x, y))_{y \in \sX}:= \tfrac{ \vp_t}{p_t(x)}$ for any $x\in\sX$, $t\in[0,T]$ and estimate it by a neural network $ \widehat \vs^\phi_t(x)$, where the parameters $\phi$ are trained by minimizing the score entropy~\cite{lou2024discrete,benton2024denoising} for some weights $\psi_t\geq 0$ as:
\begin{equation}
	\begin{aligned}
		\min_{\phi}\int_0^T \psi_t \E_{x_t \sim p_t} \bigg[ \sum_{y \neq x_t}Q_t(x_t, y)\left( s_t(x_t, y) \log \tfrac{s_t(x_t, y)}{\widehat s^\phi_t(x_t, y)} - s_t(x_t, y) + \widehat s^\phi_t(x_t, y)\right)\bigg] \dif t.
	\end{aligned}
	\label{eq:discrete_loss_function}
\end{equation}

Similar to the continuous case, the backward process is approximated by another CTMC $\frac{\dif \vq_s}{\dif s} = \widehat{\overline \mQ}{\vphantom{\overline \mQ}}_s^\phi \vq_s$, with $\vq_0 = \vp_\infty$ and rate matrix $\widehat{\overline \mQ}{\vphantom{\overline \mQ}}_s^\phi$, where $\widehat{\overline Q}{\vphantom{\overline \mQ}}^\phi_s(y, x) = \cev {\widehat s}{\vphantom{\widehat s}}_s^\phi(x, y) \cev Q_s(x, y)$ for any $x \neq y \in \sX$. The inference is done by first sampling from $\vp_\infty$ and then evolving the CTMC accordingly. For simplicity, we drop the superscript $\phi$ hereafter.

\subsection{Stochastic Integral Formulation of Discrete Diffusion Models}

Discrete diffusion models can also be formulated as stochastic integrals, which is especially useful for their theoretical analysis~\cite{ren2024discrete}. In this section, we briefly recapitulate the relevant results and refer to~\cref{app:background} for the mathematical details. Below, we work on the probability space $(\Omega, \gB, \P)$ and denote the pairwise difference set of the state space $\sX$ by $\sD := \{x-y: x \neq y \in \sX\}$. In this work, we focus on the case where $\sX = [S]^d$ with $d$ data dimensions and $S$ sites along each dimension.

We first introduce the Poisson random measure, a key concept in the formulation.

\begin{definition}[Informal Definition of Poisson Random Measure]
	The random measure $N[\lambda](\dif t, \dif \nu)$ on $\R^+\times \sD$ is called a \emph{Poisson random measure} with \emph{evolving intensity} $\lambda$ w.r.t. a measure $\gamma$ on $\sD$ if, roughly speaking, the number of jumps of magnitude $\nu$ during the infinitesimal time interval $(t, t+\dif t]$ is Poisson distributed with mean $\lambda_t(\nu) \gamma(\dif \nu) \dif t$.
	\label{def:poisson_random_measure_informal}
\end{definition}

The forward process~\eqref{eq:forward} can thus be represented by the following stochastic integral:
\begin{equation*}
x_t = x_0 + \int_0^t \int_{\sD} \nu  N[\lambda](\dif s, \dif \nu),
\end{equation*}
where the intensity $\lambda$ is defined as $\lambda_t(\nu, \omega) = Q^0_t(x_{t^-}(\omega) + \nu, x_{t^-}(\omega))$ if $x_{t^-}(\omega) + \nu \in \sX$ and 0 otherwise. Here, the outcome $\omega \in \Omega$ and $x_{t^-}$ denotes the left limit of the c\`adl\`ag process $x_t$ at time $t$ with $x_{0^-} = x_0$. We will also omit the variable $\omega$, should it be clear from context. The backward process in discrete diffusion models~\eqref{eq:backward} can also be represented similarly as:
\begin{equation}
	y_s = y_0 + \int_0^s \int_{\sD} \nu  N[\mu](\dif s, \dif \nu),
	\label{eq:backward_integral}
\end{equation}
where the intensity $\mu$ is defined as
$\mu_s(\nu, \omega) = \cev{s}_s(y_{s^-}, y_{s^-} + \nu) \cev Q \vphantom{Q}^0_s(y_{s^-}, y_{s^-} + \nu)$
if $y_{s^-} + \nu \in \sX$ and 0 otherwise. During inference,
$\widehat y_s = \widehat y_0 + \int_0^s \int_{\sD} \nu  N[\widehat \mu](\dif s, \dif \nu)$
is used instead of~\eqref{eq:backward_integral}, where the estimated intensity $\widehat \mu$ is defined by replacing the true score $\vs_t$ with the neural network estimated score $\widehat \vs_t$ in $\mu_s(\nu, \omega)$.
\label{prop:discrete_diffusion_integral}. In the following, we also denote the intensity $\mu_s(\nu, \omega)$ at time $s$ by $\mu_s(\nu, y_{s^-})$ with slight abuse of terminology to emphasize its dependency on $\omega$ through $y_{s^-}(\omega)$.

\section{Numerical Schemes for Discrete Diffusion Model Inference}

Before introducing the proposed numerical schemes, we first review existing numerical schemes for discrete diffusion models, including exact simulation methods and the $\tau$-leaping method, and discuss their merits and limitations.

\begin{figure}[!t]
\vspace{-2em}

  \centering
  \begin{subfigure}[t]{0.5\textwidth}
    \centering
    \includegraphics[width=1\linewidth]{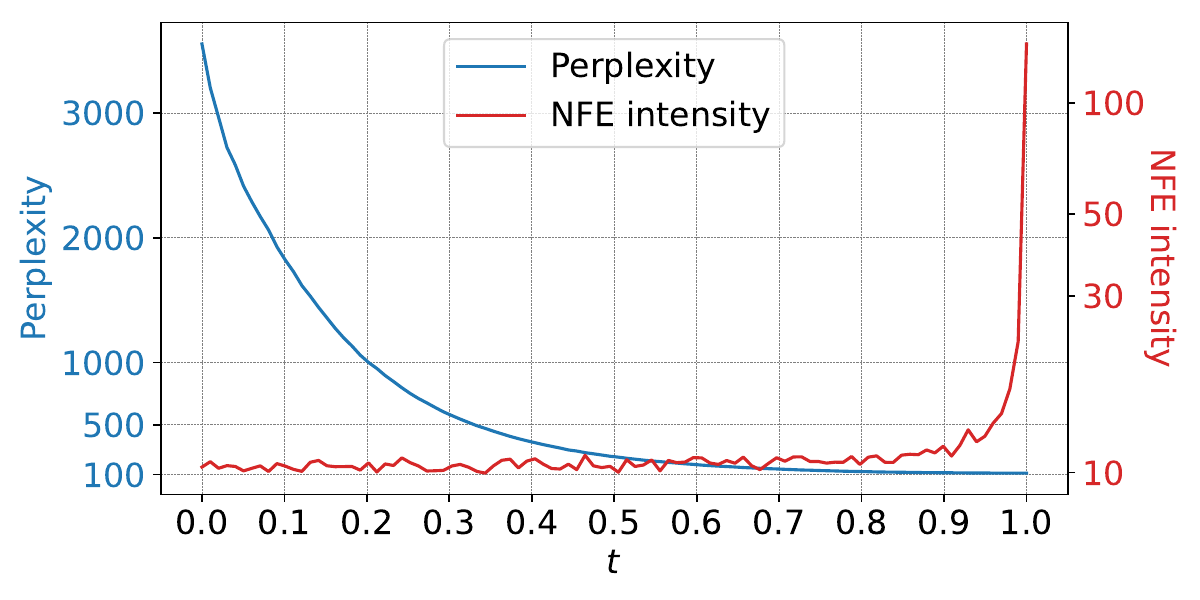}

  \end{subfigure}
  \begin{subfigure}[t]{0.47\textwidth}
    \centering
    \begin{tikzpicture}[decoration={brace,mirror,raise=0.5pt}, scale=0.8]
      \definecolor{seabornred}{RGB}{192,57,43}
      \definecolor{seabornblue}{RGB}{89,133,182}
      \draw[dotted,thick,gray](1,-2)--(1,2);
      \draw[dotted,thick,gray](3,-2)--(3,2);
      \draw[dotted,thick,gray](8,-2)--(8,2);
      \draw[<->](1,-1.8)--(3,-1.8)node[midway,below,font=\small]{$\theta\Delta_n$};
      \draw[<->](3,-1.8)--(8,-1.8)node[midway,below,font=\small]{$(1-\theta)\Delta_n$};
      \node at(1,-2.3){$\widehat y_{s_n}$};
      \node at(3,-2.3){$\widehat y_{\rho_n}^*$};
      \node at(8,-2.3){$\widehat y_{s_{n+1}}$};
      \draw[->](2,2.3)--(7,2.3);
      \node[font=\small]at(4.5,2.6){Inference Process};
      \draw[thick](.7,1.4)--(8.3,1.4);
      \node[anchor=east,font=\bfseries]at(8,1.9){$\tau$-Leaping};
      \draw[thick](1,1.2)--(1,1.6)(8,1.2)--(8,1.6);
      \draw[->](1,1.5)--node[midway,above,font=\small]{$\widehat\mu_{s_n}$}(8,1.5);
      \draw[thick](.7,0.4)--(8.3,0.4);
      \node[text=seabornblue,anchor=east,font=\bfseries]at(8,0.8){$\theta$-RK-2};
      \draw[thick](1,0.2)--(1,0.6)(3,0.2)--(3,0.6)(8,0.2)--(8,0.6);
      \draw[->,seabornblue](1,0.5)--node[midway,above,font=\small]{(i) $\widehat\mu_{s_n}$}(3,0.5);
      \draw[->,seabornblue](1,0.3)--node[pos=0.6,below,font=\small]{(ii) $(1-\tfrac{1}{2\theta})\widehat\mu_{s_n}+\tfrac{1}{2\theta}\widehat\mu^*_{\rho_n}$}(8,0.3);
      \draw[thick](.7,-1.1)--(8.3,-1.1);
      \node[text=seabornred,anchor=east,font=\bfseries]at(8,-0.7){$\theta$-Trapezoidal};
      \draw[thick](1,-1.3)--(1,-.9)(3,-1.3)--(3,-.9)(8,-1.3)--(8,-.9);
      \draw[->,seabornred](1,-1.0)--node[midway,above,font=\small]{(i) $\widehat\mu_{s_n}$}(3,-1.0);
      \draw[->,seabornred](3,-1.2)--node[midway,below,font=\small]{(ii) $\alpha_1\widehat\mu^*_{\rho_n}-\alpha_2\widehat\mu_{s_n}$}(8,-1.2);
    \end{tikzpicture}

  \end{subfigure}

  \caption{\textbf{Left:} Application of the uniformization algorithm to discrete diffusion models for text generation. The $x$-axis denotes the time of the backward process, and the $y$-axis denotes the frequency of jumps (NFE). Perplexity convergence occurs before the NFE grows unbounded. \textbf{Right:} Comparison between $\tau$-leaping and the proposed second-order schemes ($\theta$-RK-2 and $\theta$-Trapezoidal). }
  \label{fig:combined_methods}
\end{figure}

\subsection{Exact Simulation Methods}

Unlike in continuous diffusion models, where exact simulation is infeasible, discrete diffusion models permit inference without discretization error. Notable examples of unbiased samplers include uniformization~\cite{chen2024convergence,van1992uniformization} for the uniform state case and the First-Hitting Sampler (FHS)~\cite{zheng2024masked} for the absorbing state case. The main idea behind these methods is to first sample the next jump time and then the jump itself. Theoretical analysis~\cite{ren2024discrete} reveals that such schemes \emph{lack guarantees with finite computation budget}, since the number of required jumps (and thus the inference time) follows a random distribution with expectation $\Omega(d)$. This computational restriction may be less favorable for high-dimensional applications, such as generative modeling of DNA or protein sequences.

Furthermore, \emph{the absence of discretization error does not necessarily translate to superior sample quality}, given the inherent estimation errors in neural network-based score functions.
This limitation is further amplified by the \emph{highly skewed distribution} of jumps, with a concentration occurring during the terminal phase of the backward process, when the neural network-based score function exhibits the highest estimation error. This phenomenon stems from the potential singularity of the target distribution $\vp_0$, which induces singularities in the score function, making accurate neural network estimation particularly challenging during that phase (\emph{cf.} Assump.~4.4~\cite{ren2024discrete}).

The left figure in \cref{fig:combined_methods} illustrates an application of the uniformization algorithm to discrete diffusion inference for text generation, with detailed experimental parameters presented in~\cref{sec:imagenet,app:image_exp}. As the process approaches the target distribution ($t\to T$), the number of jumps (in terms of the number of score function evaluations, NFE) grows unbounded, while perplexity improvements become negligible. This skew in computational effort leads to \emph{redundant function evaluations}. Although early stopping is commonly adopted at $T-\delta$ for some small $\delta \ll 1$ to alleviate this inefficiency, this approach introduces challenges in its selection, particularly under computational constraints or when efficiency-accuracy trade-offs are desired. Moreover, the variable jump schedules across batch samples complicate parallelization efforts in exact methods, highlighting the need for more adaptable and efficient algorithmic solutions.

\subsection{Approximate Method: \texorpdfstring{$\tau$}{tau}-Leaping Method}

The $\tau$-leaping method~\cite{gillespie2001approximate, campbell2022continuous} is a widely adopted scheme that effectively addresses both dimensionality scaling and inference-time control challenges.
This Euler-type scheme approximates the backward process with time-dependent intensity $\widehat \mu_t$ via the following updates:
\begin{equation}
	\widehat y_{t+\Delta} = \widehat y_t + \sum_{\nu \in \sD} \nu \gP\left(\widehat \mu_t(\nu)\Delta\right),
	\label{eq:tau_leaping}
\end{equation}
where $\Delta$ denotes the time step and $\gP(\cdot)$ denotes a Poisson random variable.
In general, one may design different discretization schemes for $\tau$-leaping, and the summation in~\eqref{eq:tau_leaping} is parallelizable, underscoring the method's flexibility and efficiency. We refer to~\cref{alg:tau_leaping} and~\cref{app:error_analysis_tau_leaping} for a detailed description of the $\tau$-leaping method for discrete diffusion model inference. Regarding convergence properties as the time discretization becomes increasingly refined, theoretical analyses by~\cite{campbell2022continuous, ren2024discrete} have established the error bounds of the $\tau$-leaping method, the results of which are summarized in the following theorem. Further discussion can be found in \cref{app:error_analysis_tau_leaping}.

\begin{theorem}[Thm.~4.7 in~\cite{ren2024discrete}]
	Under a certain discretization scheme and technical assumptions, and given an $\epsilon$-accurate score function, the following error bound holds:
	\begin{equation}
		\KL(p_{\delta}\| \widehat q_{T-\delta})
		\lesssim  \exp(- T)  + \epsilon + \kappa T,
		\label{eq:tau_leaping_error}
	\end{equation}
	where $\delta \ll 1$ is the early stopping time, $\kappa$ controls the step size, and $T$ is the time horizon. The notation $\lesssim$ indicates the inequality holds up to a constant factor as $\kappa \to 0$.
	\label{thm:tau_leaping}
\end{theorem}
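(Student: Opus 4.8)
The plan is to bound the terminal KL divergence by a path-space relative entropy and then split that quantity into three additive pieces that match the three error sources on the right-hand side of~\eqref{eq:tau_leaping_error}.

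\textbf{Reduction to path space.} Let $\P^{\mathrm{b}}$ denote the law on path space of the \emph{exact} backward process---initialized at the true terminal forward law $p_T$ and driven by the Poisson random measure $N[\mu]$ of~\eqref{eq:backward_integral}---and let $\widehat\P$ denote the law of the inference process, initialized at $\vp_\infty$ and driven by $N[\widehat\mu]$ with intensity frozen at the grid points. Since the marginal at a fixed time is a measurable functional of the trajectory, the data-processing inequality yields $\KL(p_\delta\,\|\,\widehat q_{T-\delta}) \le \KL(\P^{\mathrm{b}}\,\|\,\widehat\P)$, so it suffices to control the path-space relative entropy (here the inference-time marginal $T-\delta$ corresponds to forward time $\delta$).

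\textbf{Girsanov-type decomposition.} I would invoke the change-of-measure formula for Poisson random measures furnished by the stochastic-integral formulation recalled above. Inserting an intermediate law that keeps the exact intensity $\mu$ but starts from $\vp_\infty$, the chain rule for relative entropy gives
\[
\KL(\P^{\mathrm{b}}\,\|\,\widehat\P)
= \KL(p_T\,\|\,\vp_\infty)
+ \E\!\int_0^{T-\delta}\!\sum_{\nu\in\sD}
\Big[\widehat\mu_s(\nu) - \mu_s(\nu) - \mu_s(\nu)\log\tfrac{\widehat\mu_s(\nu)}{\mu_s(\nu)}\Big]\,\dif s,
\]
where the integrand is the nonnegative Bregman density $\mu(r-1-\log r)$ with $r=\widehat\mu/\mu$, and the expectation is taken along the exact backward trajectory.

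\textbf{Bounding the three pieces.} The first term is the initialization error: by exponential ergodicity of the forward CTMC (governed by the spectral gap of $\mQ_t$), $\KL(p_T\,\|\,\vp_\infty)\lesssim\exp(-T)$. For the integral I would interpolate through the continuous-time estimated intensity $\widetilde\mu_s$ that uses $\widehat s$ but is \emph{not} frozen. The discrepancy between $\mu$ and $\widetilde\mu$ is purely the score error; under the $\epsilon$-accuracy hypothesis measured in the score-entropy sense of~\eqref{eq:discrete_loss_function}, its Bregman contribution integrates to $O(\epsilon)$. The discrepancy between $\widetilde\mu$ and the frozen $\widehat\mu$ is the discretization error; a first-order Taylor expansion of $s\mapsto\widetilde\mu_s(\nu,\widehat y_{s^-})$ on each subinterval of length $\Delta_n$ makes the per-step Bregman contribution $O(\Delta_n^2)$, and summation over the grid gives $O(\kappa T)$ with $\kappa$ the step-size parameter. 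Adding the three bounds yields~\eqref{eq:tau_leaping_error}.

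\textbf{Main obstacle.} The crux is the discretization term. Because $\widehat\mu_s(\nu,\widehat y_{s^-})$ depends on both time and the current \emph{jumping} state, freezing it over a step requires a Taylor expansion that accounts for jumps of the process within the interval---one must estimate the generator applied to the intensity and keep the ratio $\widehat\mu/\mu$ bounded away from $0$ and $\infty$ so the logarithm in the Bregman density remains controlled. This is precisely where the technical regularity assumptions on the score and on $\mQ_t$ enter, and it is the step I expect to be most delicate; a secondary technical point is the rigorous justification of absolute continuity of the two path measures underlying the Girsanov formula.
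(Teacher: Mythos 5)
Your outline is sound, but it takes a genuinely different route from the paper. The statement is presented as a citation of Thm.~4.7 of \cite{ren2024discrete}, and the paper's proof (in \cref{app:error_analysis_tau_leaping}) is correspondingly short: it exhibits a concrete nearest-neighbour rate matrix on $[S]^d$, checks that it satisfies Assumptions~4.3(i)--(ii) of the cited work with explicit constants $C=1$, $\underline D=\overline D=2d$, and then simply invokes the cited theorem, finishing with a time rescaling by $d$ to normalize the rate matrix. You instead sketch the \emph{internal} proof of that cited theorem: data-processing to path space, the Girsanov/change-of-measure identity for Poisson random measures yielding the Bregman integrand $\mu_s(\nu)\log\tfrac{\mu_s(\nu)}{\widehat\mu_s(\nu)}-\mu_s(\nu)+\widehat\mu_s(\nu)$, and a three-way split into initialization, score-estimation, and discretization errors with a per-step $O(\Delta_n^2)$ Taylor/generator estimate summing to $O(\kappa T)$. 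This is exactly the machinery the paper itself deploys in \cref{app:proof_trapezoidal,app:proof_midpoint} (\emph{cf.}~\cref{thm:girsanov_trapezoidal} and the Dynkin-formula bounds), so your sketch is consistent with how the bound is actually established in the source; the obstacles you flag (controlling the generator applied to the state-dependent intensity, and keeping $\widehat\mu/\mu$ bounded away from $0$ and $\infty$) are precisely where the cited technical assumptions enter. What the paper's route buys is brevity and correct tracking of the dimension dependence via the explicit constants; what your route buys is a self-contained argument that also explains \emph{why} the discretization term is first order, which is the comparison point the paper needs for its second-order results.
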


The error bound~\eqref{eq:tau_leaping_error} decouples three error sources of the $\tau$-leaping scheme: the truncation error $\gO(e^{-T})$, the score estimation error $\epsilon$, and the discretization error $\gO(\kappa T)$. Similar to the case for the Euler method for ODEs and the Euler-Maruyama scheme for SDEs, the $\tau$-leaping method is a first-order scheme in terms of the discretization error $\gO(\kappa T)$.

\section{Algorithms: High-Order Inference Schemes}
\label{sec:high_order_schemes}

A natural improvement of $\tau$-leaping is to develop high-order schemes for discrete diffusion models. As a foundational example, consider the second-order Runge-Kutta (RK-2) method with two stages~\cite{butcher1987numerical} for solving the ODE $\dif x_t = f_t(x_t)\dif t$. This method represents one of the simplest high-order numerical schemes:
\begin{equation}
	\begin{aligned}
		\widehat x_{t+ \theta \Delta}^* = \widehat x_t  + f_t(\widehat x_t) \theta \Delta,\quad              
		\widehat x_{t+ \Delta} = \widehat x_t +  \left[(1-\tfrac{1}{2\theta}) f_t(\widehat x_t) + \tfrac{1}{2\theta} f_{t+\theta \Delta} (\widehat x_{t+ \theta \Delta}^*) \right]\Delta.
	\end{aligned}
	\label{eq:rk2}
\end{equation}
This scheme reduces to the exact midpoint method when $\theta = \frac{1}{2}$ and Heun's method when $\theta = 1$.
The underlying intuition stems from the observation that for $f \in C^2(\R)$,
$\left[\left(1-\tfrac{1}{2\theta}\right)f(0) + \tfrac{1}{2\theta}f(\theta \Delta)\right] \Delta$
offers a second-order approximation of $\int_0^{\Delta}f(x)\dif x$ in contrast to $f(0)\Delta$, which is only first-order. This approach has been successfully adapted for SDE simulation~\cite{burrage1996high} and continuous diffusion model inference~\cite{karras2022elucidating,lu2022dpm,lu2022dpm++,zheng2023dpm, wu2024stochastic}. Notably, these methods enhance sample quality and computational efficiency without requiring additional model training, making the development of high-order schemes for discrete diffusion inference both theoretically appealing and practically viable.

In this section, we propose two high-order solvers for inference in the discrete diffusion model. We will primarily focus on two-stage algorithms aiming for second-order accuracy. Specifically, we will introduce the $\theta$-RK-2 and $\theta$-Trapezoidal methods. Throughout this section, we assume a time discretization scheme $(s_i)_{i\in[0:N]}$ with
$0 = s_0 < \cdots < s_N = T - \delta$, where $\delta$ is the early stopping time and use the shorthand notations $*_+ = \max\{0, *\}$. For any $s \in (s_n, s_{n+1}]$ and $n \in [0:N-1]$, we define $\floor{s} = s_n$, $\rho_s = (1-\theta)s_n +\theta s_{n+1}$, $\Delta_n = s_{n+1} - s_n$, and \emph{$\theta$-section points} as $\rho_n = (1-\theta)s_n +\theta s_{n+1}$.
We choose $\gamma(\dif \nu)$ to be the counting measure on $\sD$.
\subsection{\texorpdfstring{$\theta$}{theta}-RK-2 Method}

\begin{wrapfigure}{r}{0.65\textwidth}
    \vspace{-1.5em}
    \begin{minipage}{\linewidth}
        \IncMargin{1.5em}
        \begin{algorithm}[H] 
            \caption{$\theta$-RK-2 Method}
            \label{alg:midpoint}
            \Indm
            \KwIn{$\widehat y_0 \sim q_0$, $\theta \in (0,1]$, $(s_n, \rho_n)_{n\in[0:N-1]}$, $\widehat \mu$, $\widehat \mu^*$.}
            \KwOut{A sample $\widehat y_{s_N}\sim \widehat q_{t_N}^\RK$.}
            \Indp
            \For{$n = 0$ \KwTo $N-1$}{
                $\widehat y^*_{\rho_n} \leftarrow \widehat y_{s_n} + \sum_{\nu \in \sD} \nu
                    \gP\left(\widehat\mu_{s_n}(\nu)\theta\Delta_n\right)$\;
                $\widehat y_{s_{n+1}} \leftarrow \widehat y_{s_n} + \sum_{\nu \in \sD}\nu \gP\left(
                    \vone_{\widehat \mu_{s_n} > 0} \left[\left(1-\tfrac{1}{2\theta}\right) \widehat\mu_{s_n} + \tfrac{1}{2\theta}\widehat\mu^*_{\rho_n}\right]_+(\nu) \Delta_n\right)$\;
                \vspace{-1.0em}
            }
        \end{algorithm}
    \end{minipage}
    \vspace{-8pt}
\end{wrapfigure}

We first present the \emph{$\theta$-RK-2 method}, which is simple in design and serves as a natural analog of the second-order RK method for ODEs~\eqref{eq:rk2} in terms of time and state-dependent Poisson random measures, as a warm-up for the $\theta$-Trapezoidal method. We note that similar methods have been proposed for simulating SDEs driven by Brownian motions or Poisson processes, such as the stochastic~\cite{burrage1996high} and the Poisson~\cite{burrage2004poisson} RK methods. A summary of this method is given in~\cref{alg:midpoint}.

Intuitively, the $\theta$-RK-2 method is a two-stage algorithm that:

\begin{enumerate}[label=(\roman*), leftmargin=*, itemsep=0em, topsep=0em, wide=0pt]
	\item Firstly, it runs $\tau$-leaping with step size $\theta\Delta_n$, obtains an \emph{intermediate state} $\widehat y^*_{\rho_n}$ at the $\theta$-section point $\rho_n$, and evaluates the intensity $\widehat \mu^*_{\rho_n}$ there;
	\item Then another step of $\tau$-leaping for a full step $\Delta_n$ is run using a weighted sum of the intensities at the current time point $s_n$ and the $\theta$-section point $\rho_n$.
\end{enumerate}

We emphasize that our method differs from the midpoint method proposed in~\cite{gillespie2001approximate} for simulating chemical reactions, in which the Poisson random variable in the first step is replaced by its expected value. Such modification is in light of the lack of continuity and orderliness of the state space.

\subsection{\texorpdfstring{$\theta$}{theta}-Trapezoidal Method}

As to be shown theoretically and empirically, the conceptually simple $\theta$-RK-2 method may have limitations in terms of both accuracy and efficiency. To this end, we propose the following \emph{$\theta$-Trapezoidal method}, which is developed based on existing methods proposed for simulating SDEs~\cite{anderson2009weak} and chemical reactions~\cite{hu2011weaka}. Below, we introduce two parameters that will be used extensively later:
\begin{equation*}
	\alpha_1 = \tfrac{1}{2\theta(1-\theta)}\text{ and } \alpha_2 = \tfrac{(1-\theta)^2 + \theta^2}{2\theta(1-\theta)},\text{ with }\alpha_1 - \alpha_2 = 1.
\end{equation*}

\begin{wrapfigure}{r}{0.6\textwidth}
    \vspace{-1.5em}
    \begin{minipage}{\linewidth}
        \IncMargin{1.5em}
        \begin{algorithm}[H] 
            \caption{$\theta$-Trapezoidal Method}
            \label{alg:trapezoidal}
            \Indm
            \KwIn{$\widehat y_0 \sim q_0$, $\theta \in (0,1]$, $(s_n, \rho_n)_{n\in[0:N-1]}$, $\widehat \mu, \widehat \mu^*$.}
            \KwOut{A sample $\widehat y_{s_N}\sim \widehat q_{t_N}^\trap$.}
            \Indp
            \For{$n = 0$ \KwTo $N-1$}{
                $\widehat y^*_{\rho_n}\leftarrow \widehat y_{s_n} + \sum_{\nu \in \sD} \nu \gP\left(\widehat\mu_{s_n}(\nu)\theta\Delta_n\right)$\;
                $\widehat y_{s_{n+1}} \leftarrow \widehat y^*_{\rho_n} +$ $\sum_{\nu \in \sD}\nu \gP\left(\left(\alpha_1\widehat\mu^*_{\rho_n} - \alpha_2\widehat\mu_{s_n}\right)_+(\nu)(1-\theta)\Delta_n\right)$\;
                \vspace{-1.0em}
            }
        \end{algorithm}
    \end{minipage}
    \vspace{-2em}
\end{wrapfigure}

The $\theta$-Trapezoidal method is summarized in~\cref{alg:trapezoidal}. Intuitively, it separates each interval $(s_n, s_{n+1}]$ into two sub-intervals $(s_n, \rho_n]$ and $(\rho_n, s_{n+1}]$, on which simulations are detached with different intensities designed in a balanced way.

Compared to the $\theta$-RK-2 method, the $\theta$-Trapezoidal method is also two-stage with an identical first step. The second step, however, differs in two major aspects:
\begin{enumerate}[label=(\arabic*), leftmargin=*, itemsep=0em, topsep=0em, wide=0pt]
	\item The second step starts from the intermediate state $\widehat y^*_{\rho_n}$ instead of $\widehat y_{s_n}$ and only runs for a fractional step $(1-\theta)\Delta_n$ rather than a full step $\Delta_n$;
	\item The weighted sum is comprised of an altered pair of coefficients $(\alpha_1, - \alpha_2)$, performing an \emph{extrapolation} instead of interpolation with coefficients $(1-\tfrac{1}{2\theta}, \tfrac{1}{2\theta})$ as in the $\theta$-RK-2 method with $\theta \in [\frac{1}{2}, 1]$. This feature will be shown to render the algorithm unconditionally second-order.
\end{enumerate}

Following the common practice in the literature~\cite{campbell2022continuous}, we reject updates with multiple jumps along one dimension in both algorithms, ensuring their well-posedness. A simple analysis shows that rejection only happens with probability $\gO(\kappa)$, and we refer to further details in~\cref{rem:periodicity}.
We refer to~\cref{prop:integral_formulation_midpoint,prop:integral_formulation_trapezoidal} for the stochastic integral formulations of these two algorithms. 
We provide a visual comparison between the $\theta$-RK-2 and the $\theta$-Trapezoidal method in the right figure of ~\cref{fig:combined_methods}.

\section{Theoretical Analysis}

In this section, we provide the theoretical results of the $\theta$-Trapezoidal and $\theta$-RK-2 methods. The goal of this section is to show that under certain conditions, both methods are second-order accurate, improving from the first-order accuracy of the $\tau$-leaping method (\emph{cf.}~\cref{thm:tau_leaping}). Our theoretical analysis also reveals that the $\theta$-Trapezoidal method is more robust to the choice of $\theta$ than $\theta$-RK-2, to be confirmed by our empirical results in~\cref{sec:exp}.

\subsection{Assumptions}

For simplicity, we impose a periodic boundary condition on the state space $\sX = [S]^d$, \emph{i.e.}, embed the state space in the $d$-dimensional torus $\sT^d$, to streamline the proofs (\emph{cf.}~\cref{rem:periodicity}). 

\begin{assumption}[Convergence of Forward Process]
	The forward process converges to the stationary distribution exponentially fast, \emph{i.e.},
	$\KL(p_T\|p_\infty) \lesssim \exp(- T)$.
	\label{ass:exponential}
\end{assumption}
This assumption ensures rapid convergence of the forward process, controlling error when terminated at a sufficiently large time horizon $T$, and is automatically satisfied in the masked state case and the uniform state case, given sufficient connectivity of the graph (\emph{cf.}~\cite{ren2024discrete}).
The exponential rate aligns with continuous diffusion models (\emph{cf.}~\cite{benton2023linear}).

\begin{assumption}[Regularity of Intensity]
	For the true intensity $\mu_s(\nu, y_{s^-})$ and the estimated intensity $\widehat \mu_s(\nu,  y_{s^-})$, it holds almost everywhere w.r.t. $\mu_s(\nu, y_{s^-}) \gamma(\dif \nu) \cev p_{s^-}(\dif y_{s^-})$ that: (1) Both intensities belong to $C^2([0, T-\delta])$; (2) Both intensities are upper and lower bounded on $[0, T-\delta]$.
	\label{ass:smoothness}
\end{assumption}
This assumes two key requirements of the scores: (1) the forward process maintains sufficient smoothness, which is achievable through appropriate time reparametrization; and (2) if and only if a state $y \in \sX$ is achievable by the forward process and $\nu$ is a permissible jump therefrom, then both its true and estimated intensity are bounded, corresponding to Assumps.~4.3(i),~4.4, and~4.5~\cite{ren2024discrete}.

\begin{assumption}[Estimation Error]
	For all grid points and $\theta$-section points, the estimation error of the neural network-based score is small, \emph{i.e.}, for any $s\in \cup_{n\in[0:N-1]}\{s_n, \rho_n\}$, we have   
	\begin{enumerate}[label=(\arabic*), leftmargin=2em, itemsep=0pt, topsep=0pt]
	    \item $\E\left[\displaystyle\int_{\sD} \left( \mu_s(\nu) \left(\log \frac{\mu_s(\nu)}{\widehat \mu_s(\nu)}-1\right) + \widehat \mu_s(\nu) \right) \gamma(\dif \nu)\right] \leq \epsilon_\roI;$
	    \item $\E\left[\displaystyle\int_{\sD} \left| \mu_s(\nu) - \widehat \mu_s(\nu) \right| \gamma(\dif \nu)\right] \leq \epsilon_\roII.$
	\end{enumerate}
	\label{ass:estimation}
\end{assumption}

This assumption quantifies the proximity of the estimated intensity $\widehat \mu$ to the true intensity $\mu$ after sufficient training. Compared with~\cite{ren2024discrete}, the additional $L^\infty$ part in (2) is required for technical reasons, which is similar to~\cite{chen2024probability, wu2024stochastic}. In practice, such additional assumptions may be realized by adding extra penalty terms to the objective function during training.

\subsection{Convergence Guarantees}
The following theorem summarizes our theoretical guarantees for the $\theta$-Trapezoidal method:
\begin{theorem}[Second Order Convergence of $\theta$-Trapezoidal Method]
	Suppose $\theta \in (0, 1]$ and $\alpha_1\widehat\mu^*_{\rho_s} - \alpha_2\widehat\mu_{\floor{s}}\geq 0$ for all $s\in[0,T-\delta]$, then the following error bound holds for~\cref{alg:trapezoidal} under~\cref{ass:exponential,ass:smoothness,ass:estimation}:
	\begin{equation*}
		\KL(p_{\delta}\| \widehat q_{T-\delta}^\trap)
		\lesssim  \exp(- T)  + (\epsilon_\roI + \epsilon_\roII) T + \kappa^2 T,
	\end{equation*}
	where $\delta$ is the early stopping time, $\kappa = \max_{n\in[0:N-1]}\Delta_n$, \emph{i.e.}, the largest stepsize, and $\widehat q_{T-\delta}^\trap$ is the distribution obtained by~\cref{alg:midpoint} as defined in~\cref{prop:integral_formulation_midpoint}.
	\label{thm:trapezoidal}
\end{theorem}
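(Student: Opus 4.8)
The plan is to control the path-space relative entropy between the exact backward process and the law of \cref{alg:trapezoidal}, and then pass to the time-$(T-\delta)$ marginal by the data-processing inequality. Writing $\P^\mu$ for the law of the idealized backward jump process~\eqref{eq:backward_integral} with intensity $\mu$ and $\P^{\widehat\mu^\trap}$ for the law of the scheme (via its stochastic-integral formulation in~\cref{prop:integral_formulation_trapezoidal}), the chain rule for KL gives $\KL(p_\delta\|\widehat q_{T-\delta}^\trap)\leq \KL(p_T\|p_\infty) + \KL(\P^\mu\|\P^{\widehat\mu^\trap})$. The first term is $\lesssim \exp(-T)$ by~\cref{ass:exponential}. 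For the second, I would invoke the Girsanov-type relative-entropy identity for Poisson random measures (\cref{app:background}), which expresses it as $\E_{\P^\mu}\big[\int_0^{T-\delta}\!\int_\sD g\big(\mu_s(\nu),\widehat\mu_s^\trap(\nu)\big)\,\gamma(\dif\nu)\,\dif s\big]$, where $g(a,b)=a\log(a/b)-a+b$ and both intensities are read off along the reference path $y_{s^-}$. Here the positivity hypothesis $\alpha_1\widehat\mu^*_{\rho_s}-\alpha_2\widehat\mu_{\floor{s}}\geq 0$ is precisely what keeps $\widehat\mu_s^\trap$ a legitimate nonnegative intensity, so that this identity is well-defined.

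Next I would split the integrand by inserting the exact-score scheme intensity $\mu_s^\trap$ (the scheme intensity built from the true scores rather than the network), writing $g(\mu_s,\widehat\mu_s^\trap)$ as an estimation-error contribution plus a discretization contribution. The estimation part is bounded, at each grid point $s_n$ and $\theta$-section point $\rho_n$, by $\epsilon_\roI+\epsilon_\roII$ through~\cref{ass:estimation}; since the intensities are uniformly bounded by~\cref{ass:smoothness} and each point governs an interval of length $\gO(\kappa)$, summing over the $\gO(T/\kappa)$ points yields the $(\epsilon_\roI+\epsilon_\roII)T$ term, with $\epsilon_\roII$ needed to absorb the amplification caused by the coefficients $\alpha_1,\alpha_2>1$ in the extrapolation step.

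The crux is the discretization contribution. Here I would work step-by-step on each interval $(s_n,s_{n+1}]$, further split at $\rho_n$ into the two substages of the algorithm, and Taylor-expand $g$ about its minimum, using $g(a,a+e)=\tfrac{e^2}{2a}+\gO(e^3)$. On each substage the exact-score scheme intensity is constant in time while the true intensity $\mu_s(\nu,y_{s^-})$ varies; using the $C^2$-regularity of~\cref{ass:smoothness} together with the identities $\alpha_1-\alpha_2=1$ and $\alpha_1\theta=\tfrac{1}{2(1-\theta)}$, the key computation is that the trapezoidal weights reproduce the true intensity's time profile so that, on the event of no jump within the step, the pointwise discrepancy $e_s$ is $\gO(\kappa)$ with no $\gO(1)$ component, whence the integral of $e_s^2$ over a substage of length $\gO(\kappa)$ is $\gO(\kappa^3)$. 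This is exactly the cancellation that fails for the interpolation weights of the $\theta$-RK-2 method unless $\theta$ is restricted, and it is what renders the trapezoidal scheme second-order for every $\theta\in(0,1]$.

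The main obstacle I anticipate is handling the events on which a jump occurs within a step, since then the frozen/extrapolated state used by the scheme disagrees with the true post-jump state and $e_s$ becomes $\gO(1)$ rather than $\gO(\kappa)$. I would control these by conditioning on the outcome of the first substage and applying the tower property: a jump inside a substage has probability $\gO(\kappa)$ and affects a remaining time window of length $\gO(\kappa)$, and the moment-matching built into $(\alpha_1,\alpha_2)$ -- together with the re-evaluation of the intensity at the intermediate state $\widehat y^*_{\rho_n}$ -- must be shown to cancel the leading jump-induced bias, leaving an $\gO(\kappa^3)$ per-step remainder. Summing these local errors over the $\gO(T/\kappa)$ steps produces the $\kappa^2 T$ term, and combining with the truncation and estimation bounds completes the proof; I expect this jump-bias cancellation, rather than the routine smooth Taylor expansion, to be the genuinely delicate part.
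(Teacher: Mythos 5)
Your skeleton coincides with the paper's: data processing plus the chain rule reduces the marginal KL to the path-measure KL, the Girsanov identity (\cref{thm:girsanov_trapezoidal}) turns that into $\E\big[\int_0^{T-\delta}\!\int_\sD\big(\mu_s\log\tfrac{\mu_s}{\widehat\mu_s^\trap}-\mu_s+\widehat\mu_s^\trap\big)\gamma(\dif\nu)\dif s\big]$, the truncation and estimation pieces are handled essentially as you describe (including the role of $\epsilon_\roII$ and of a Jensen-type inequality for the extrapolated intensity), and the second-order gain must come from a per-step cancellation tied to $\alpha_1-\alpha_2=1$ and $\alpha_1\theta(1-\theta)=\tfrac12$. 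But the step you yourself defer --- the cancellation of the jump-induced bias --- is the entire content of the theorem, and the mechanism you propose for the discretization error cannot produce it. Expanding $g(a,a+e)=\tfrac{e^2}{2a}+\gO(e^3)$ and integrating $\E[e_s^2]$ gives $\gO(\kappa^3)$ per substage only on the no-jump event; on the event that the reference path jumps inside the step (probability $\Theta(\kappa)$) the discrepancy between $\mu_s(\nu,y_{s^-})$ and the frozen scheme intensity is $\Theta(1)$, so $\E[e_s^2]=\Theta(\kappa)$ and this expansion yields $\Theta(\kappa^2)$ per step, i.e.\ only first order. Worse, since $g\ge 0$ pointwise, this jump contribution enters $\E[\int g]$ as a nonnegative additive term, so no choice of weights and no ``moment matching applied to $e_s$'' can cancel it \emph{within} a direct expansion of $g$; the cancellation you are hoping for simply is not available along this route.

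The paper's proof never expands $g$ in the intensity discrepancy. It regroups the integrand on $(s_n,s_{n+1}]$ into signed differences --- for instance $\int_{s_n}^{s_{n+1}}f_s(y_s)\dif s$ minus its two-point quadrature $\theta\Delta_n f_{s_n}(y_{s_n})+(1-\theta)\Delta_n\big(\alpha_1 f_{\rho_n}(y_{\rho_n})-\alpha_2 f_{s_n}(y_{s_n})\big)$, applied to $f_s=\int_\sD(\mu_s\log\mu_s-\mu_s)\gamma(\dif\nu)$ and to $f_s=\int_\sD\mu_s\,\E\big[\log(\alpha_1\widehat\mu^*_{\rho_n}-\alpha_2\widehat\mu_{s_n})\big]\gamma(\dif\nu)$ --- and then applies Dynkin's formula, so that the jump-induced variation appears only linearly through the generator $\gL$ in the $\gO(\Delta_n^2)$ coefficient, where it is annihilated by $\alpha_1\theta(1-\theta)=\tfrac12$ (\cref{lem:integral_error}); this is a cancellation between \emph{expectations of signed terms across the two substages}, not a pointwise smallness of $e_s$. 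The genuinely nonnegative-looking residue $\mathrm{(II.4)}$ is then controlled by the extrapolation inequality $\alpha_1x\log x-\alpha_2y\log y\le(\alpha_1x-\alpha_2y)\log(\alpha_1x-\alpha_2y)$, and the terms coupling the true path to the independently simulated intermediate process ($\mathrm{(I.3)}$, $\mathrm{(II.3)}$, $\mathrm{(II.5)}$) are bounded via the difference of the two generators, which is of size $\epsilon_\roII$. None of this decomposition-plus-generator machinery appears in your sketch, so the core of the second-order argument is missing rather than merely unfinished.
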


The complete proof is presented in~\cref{app:proof_trapezoidal}. The outline is to first bound $\KL(p_{\delta}\| \widehat q_{T-\delta}^\trap)$ by the KL divergence between the corresponding path measures, as established in~\cref{thm:girsanov_trapezoidal}, and then decompose the integral in the log-likelihood and bound respectively, where the primary technique used is \emph{Dynkin's formula} (\cref{thm:dynkin}). With a term-by-term comparison with~\cref{thm:tau_leaping}, we observe a significant improvement in the discretization error term from $\gO(\kappa T)$ to $\gO(\kappa^2 T)$. This confirms that the $\theta$-Trapezoidal method achieves second-order accuracy given a sufficient time horizon $T$ and accurate score estimation, with empirical validation presented in~\cref{sec:exp}. 

\begin{theorem}[Conditional Second-Order Convergence of $\theta$-RK-2 Method]
	Suppose $\theta \in (0, \frac{1}{2}]$ and $(1-\tfrac{1}{2\theta}) \widehat\mu_{\floor{s}} + \tfrac{1}{2\theta}\widehat\mu^*_{\rho_s}\geq 0$ for all $s \in [0,T-\delta]$, then the following error bound holds for~\cref{alg:midpoint} under~\cref{ass:exponential,ass:smoothness,ass:estimation}:
	\begin{equation*}
		\KL(p_{\delta}\| \widehat q_{T-\delta}^\RK)
		\lesssim  \exp(- T)  + (\epsilon_\roI + \epsilon_\roII) T + \kappa^2 T,
	\end{equation*}
	where $\delta$ is the early stopping time, $\kappa = \max_{n\in[0:N-1]}\Delta_n$, \emph{i.e.}, the largest stepsize, and $\widehat q_{T-\delta}^\RK$ is the distribution obtained by~\cref{alg:trapezoidal} as defined in~\cref{prop:integral_formulation_trapezoidal}.
	\label{thm:midpoint}
\end{theorem}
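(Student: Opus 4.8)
The plan is to follow the same three-stage argument used for \cref{thm:trapezoidal}, substituting the stochastic-integral formulation of \cref{alg:midpoint} (\emph{cf.} \cref{prop:integral_formulation_midpoint}) for that of the $\theta$-Trapezoidal method wherever the effective intensity appears. First I would invoke the data-processing inequality together with the Girsanov-type identity established for these schemes (the $\theta$-RK-2 analogue of \cref{thm:girsanov_trapezoidal}) to bound $\KL(p_\delta\|\widehat q^\RK_{T-\delta})$ by the relative entropy of the two path measures, which takes the form $\E\big[\int_0^{T-\delta}\int_\sD ( \mu_s\log\tfrac{\mu_s}{\tilde\mu_s} - \mu_s + \tilde\mu_s)\gamma(\dif\nu)\dif s\big]$, where $\tilde\mu_s = \vone_{\widehat\mu_{\floor s}>0}[(1-\tfrac1{2\theta})\widehat\mu_{\floor s} + \tfrac1{2\theta}\widehat\mu^*_{\rho_s}]_+$ is the piecewise-constant effective intensity produced by the second stage of the algorithm. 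Peeling off the contribution from initializing at $p_\infty$ rather than $p_T$ yields the $\exp(-T)$ truncation term via \cref{ass:exponential}, exactly as in the $\tau$-leaping bound of \cref{thm:tau_leaping}.

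Next I would split $\tilde\mu_s - \mu_s$ into an estimation part and a discretization part. Replacing each $\widehat\mu$ by the true intensity $\mu$ at the grid and $\theta$-section points and applying \cref{ass:estimation}(1)--(2) controls the estimation part by $(\epsilon_\roI + \epsilon_\roII)T$; here the $L^1$ bound $\epsilon_\roII$ is what lets me pass between the log-likelihood form and the raw intensity difference once the positivity truncation $[\cdot]_+$ is removed under the hypothesis $(1-\tfrac1{2\theta})\widehat\mu_{\floor s} + \tfrac1{2\theta}\widehat\mu^*_{\rho_s}\geq 0$. The heart of the argument is the discretization part, where the true-intensity combination $\bar\mu_s := (1-\tfrac1{2\theta})\mu_{\floor s} + \tfrac1{2\theta}\mu_{\rho_s}$ must be compared to $\mu_s$ along the path. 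Using \cref{ass:smoothness} to expand in time and \emph{Dynkin's formula} (\cref{thm:dynkin}) to expand the intermediate-state intensity $\mu_{\rho_s}(\nu,\widehat y^*_{\rho_s})$ about the anchor state, each evaluation at the $\theta$-section point contributes both a first-order time derivative and a generator (state-jump) term; the weights $(1-\tfrac1{2\theta},\tfrac1{2\theta})$ are precisely those for which these first-order contributions reproduce the exact increment $\int_{\floor{s}}^{\floor{s}+\Delta}\mu_r\,\dif r$ up to $\gO(\Delta^3)$, so that summing the local errors over the $\gO(T/\kappa)$ steps gives the discretization term $\gO(\kappa^2 T)$.

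The main obstacle is to make the second-order cancellation rigorous despite the fact that the intermediate state $\widehat y^*_{\rho_s}$ injects auxiliary randomness and anticipates the interval, so that $\tilde\mu_s$ is not the intensity of a genuine Markov jump process; this is exactly what the dedicated integral formulation and Girsanov statement are designed to handle, and I would control the resulting fluctuation terms through the common Poisson random measure coupling together with the second-moment estimates afforded by \cref{ass:smoothness}. It is at this step that the restriction $\theta\in(0,\tfrac12]$ enters: since the $\theta$-RK-2 second stage runs over the \emph{full} step $\Delta_n$ anchored at $\widehat y_{s_n}$ rather than over the fractional step $(1-\theta)\Delta_n$ anchored at the advanced state $\widehat y^*_{\rho_n}$ as in the $\theta$-Trapezoidal method, the cancellation of the leading discretization error only closes in the extrapolation regime $1-\tfrac1{2\theta}\le 0$, i.e. $\theta\le\tfrac12$. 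This is the precise sense in which the guarantee is \emph{conditional}, in contrast to \cref{thm:trapezoidal}, whose extrapolation coefficients $(\alpha_1,-\alpha_2)$ are tuned to secure the same cancellation for every $\theta\in(0,1]$; the remaining bookkeeping then parallels \cref{app:proof_trapezoidal}.
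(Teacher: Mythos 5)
Your overall architecture matches the paper's: the Girsanov-type bound of \cref{thm:girsanov_midpoint} reducing $\KL(p_\delta\|\widehat q^\RK_{T-\delta})$ to a path-measure integral of $\mu_s\log\tfrac{\mu_s}{\widehat\mu_s^\RK}-\mu_s+\widehat\mu_s^\RK$, the removal of the $[\cdot]_+$ truncation under the positivity hypothesis, the $\exp(-T)$ truncation term, the estimation terms via \cref{ass:estimation}, the Dynkin-formula expansion of the intermediate-state intensity, and the quadrature-type cancellation of the weights $(1-\tfrac{1}{2\theta},\tfrac{1}{2\theta})$ are all the right moves and correspond to the paper's terms $\mathrm{(III.1)}$, $\mathrm{(III.2)}$, $\mathrm{(III.3)}$, $\mathrm{(III.5)}$, $\mathrm{(III.6)}$.

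However, there is a genuine gap in where you locate the restriction $\theta\in(0,\tfrac12]$. You claim that ``the cancellation of the leading discretization error only closes in the extrapolation regime.'' That is not the actual obstruction: the quadrature cancellation (\cref{lem:integral_error_var}, used for $\mathrm{(III.2)}$ and $\mathrm{(III.6)}$ in \cref{cor:III.2,cor:III.6}) holds for \emph{every} $\theta\in(0,1]$, since $(1-\tfrac{1}{2\theta})f_0 + \tfrac{1}{2\theta}f_{\theta\tau}$ matches the time average of $f$ to second order regardless of the sign of $1-\tfrac{1}{2\theta}$. The term your decomposition misses is the cross term $\mathrm{(III.4)}$ in \eqref{eq:III.4}, which arises because the KL integrand is nonlinear in the intensity: one must compare the weighted combination $(1-\tfrac{1}{2\theta})\mu_{s_n}\log\widehat\mu_{s_n}+\tfrac{1}{2\theta}\mu_{\rho_n}\log\widehat\mu_{\rho_n}$ against $\bigl((1-\tfrac{1}{2\theta})\mu_{s_n}+\tfrac{1}{2\theta}\mu_{\rho_n}\bigr)\log\bigl((1-\tfrac{1}{2\theta})\widehat\mu_{s_n}+\tfrac{1}{2\theta}\widehat\mu_{\rho_n}\bigr)$. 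After trading $\mu$ for $\widehat\mu$ at cost $\gO(\Delta_n\epsilon_\roII)$, this is exactly a Jensen defect for the convex function $x\mapsto x\log x$; it has the favorable (nonpositive) sign only in the extrapolation regime $1-\tfrac{1}{2\theta}\le 0$, i.e.\ $\theta\le\tfrac12$, mirroring \cref{prop:II.4} for the Trapezoidal scheme. For $\theta\in(\tfrac12,1]$ the paper's Dynkin expansion shows this defect is genuinely of order $\Delta_n^2$ per step with the wrong sign, which would degrade the global rate to first order — so a proof that omits this term cannot establish (nor correctly delimit) the conditional second-order claim. Supplying the identification of $\mathrm{(III.4)}$ and the convexity argument is the missing step; the rest of your plan then closes as in \cref{app:proof_midpoint}.
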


The proof of the theorem above is provided in~\cref{app:proof_midpoint}. The restricted range of $\theta$ is caused by one specific error term $\mathrm{(III.4)}$~\eqref{eq:III.4} that permits bounding with \emph{Jensen's inequality} only when $\theta \in (0, \frac{1}{2}]$, similar to its counterpart $\mathrm{(II.4)}$~\eqref{eq:II.4} in the $\theta$-Trapezoidal method. The limitation arises partially because the weighted sum with coefficients $(1-\tfrac{1}{2\theta}, \tfrac{1}{2\theta})$ becomes an \emph{extrapolation} only if $1-\tfrac{1}{2\theta} < 0$, a feature that naturally holds for all $\theta \in (0, 1]$ in the $\theta$-Trapezoidal method. These theoretical findings are consistent with the empirical observations in~\cref{fig:theta_rk2} of~\cref{app:image_exp}, where the performance of $\theta$-RK-2 method clearly peaks when $\theta \in (0, \frac{1}{2}]$.

\begin{remark}[Comparison between Trapezoidal and RK-2 Methods]
	Trapezoidal methods were originally proposed by~\cite{anderson2009weak} as a minimal second-order scheme in the weak sense for simulating SDEs. In simulating chemical reaction contexts, \cite{hu2011weaka} claimed that trapezoidal methods also achieve second-order convergence for \emph{covariance error} apart from the weak error, a property not shared by midpoint (RK-2) methods. Our empirical results partly reflect these findings, while we defer theoretical investigation of covariance error convergence in discrete diffusion models to future work.
\end{remark}

\begin{remark}[Remark on the Positivity of Extrapolated Intensity]
    Due to the nature of extrapolation, both our theorems require an additional assumption on the positivity of the extrapolated intensity, which is classically assumed in~\cite{anderson2009weak,hu2011weaka}, and resolving this issue is a long-standing open problem. The best result so far is Prop.~5~\cite{hu2011weaka}, claiming clamping the intensity above 0 only causes an error of order $\gO(\kappa^p)$, for any large integer $p$. We empirically demonstrate the validity of this assumption~\cref{tab:positive_intensity} in practice through the text generation task (\cref{sec:text_generation}) and find that positivity occurs for both methods with high probability over 95\%, approaching 100\% with increasing NFE. We refer to further discussion in~\cref{rem:positivity}.
\end{remark}

\section{Experiments}
\label{sec:exp}

Based on the theoretical analysis, we expect the $\theta$-Trapezoidal method to outperform the $\tau$-leaping method and the $\theta$-RK-2 method in terms of sample quality, given the same number of function evaluations. This section empirically validates the anticipated effectiveness of our proposed $\theta$-Trapezoidal method (\cref{alg:trapezoidal}) through comprehensive evaluations across text and image generation tasks. Our comparative analysis includes established discrete diffusion samplers as baselines, \emph{e.g.}, the Euler method~\cite{ou2024your}, $\tau$-leaping~\cite{campbell2022continuous}, Tweedie $\tau$-leaping~\cite{lou2024discrete}, First-Hitting Sampler (FHS) \cite{zheng2024masked}, Parallel Decoding~\cite{chang2022maskgit}, and Semi-Autoregressive (Semi-AR) sampler \cite{nie2025large}. We benchmark on both uniform and masked discrete diffusion models, with experiment details provided in \cref{app:exp}.

\subsection{15-State Toy Model}

\begin{wrapfigure}{r}{0.55\textwidth}
    \vspace{-1.2em}
    \centering
    \includegraphics[width=.95\linewidth]{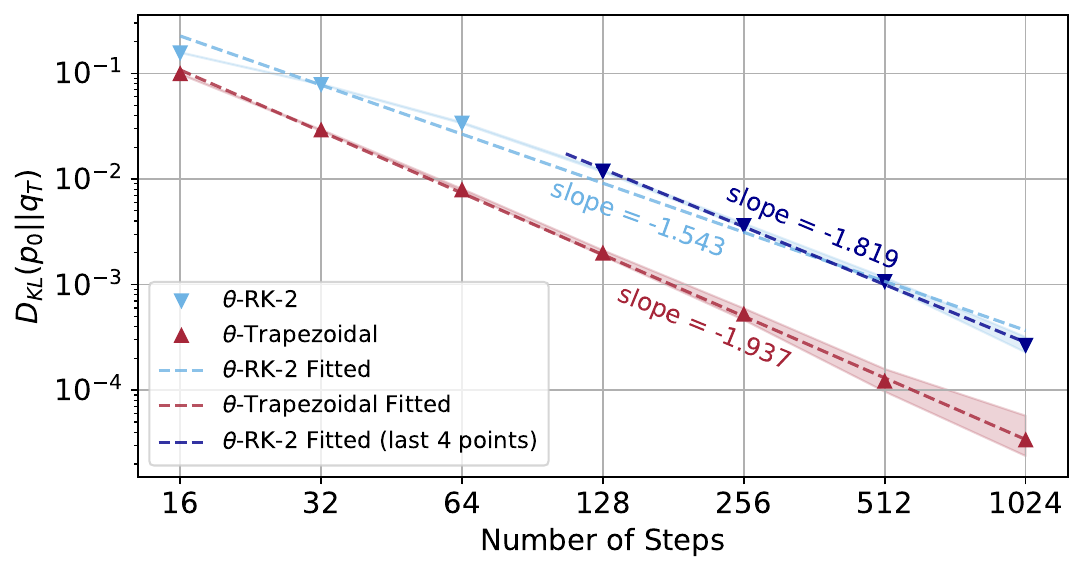}
    \caption{Empirical KL divergence between the true and generated distribution of the toy model vs. number of steps. Data are fitted with linear regression with 95\% confidence interval by bootstrapping.}
    \label{fig:toy_model}
    \vspace{-1.0em}
\end{wrapfigure}

We first evaluate the performance of the $\theta$-Trapezoidal method using a $15$-state toy model ($d=1$, $S = 15$). The target distribution is uniformly generated from $\Delta^{15}$, with rate matrix $\mQ = \frac{1}{15}\mE-\mI$, where $\mE$ is the all-one and $\mI$ is the identity matrix. This setup provides analytically available score functions, allowing isolation and quantification of numerical errors introduced by inference algorithms. We apply both the $\theta$-Trapezoidal and the $\theta$-RK-2 method to generate $10^6$ samples and estimate the KL divergence between the true ground truth $\vp_0$ and the generated distribution $\widehat \vq_T$.

For a fair comparison, we choose $\theta = \frac{1}{2}$ for both methods, and the results are presented in \cref{fig:toy_model}. While both methods exhibit super-linear convergence as the total number of steps grows, the $\theta$-Trapezoidal method outperforms the $\theta$-RK-2 method in terms of both absolute value and convergence rate, while the $\theta$-RK-2 method takes longer to enter the asymptotic regime.
Moreover, the fitted line indicates that the $\theta$-Trapezoidal method approximately converges quadratically with respect to the step count, confirming our theoretical results.

\subsection{Text Generation}
\label{sec:text_generation}

For the text generation task, we employ the pre-trained score function from RADD~\cite{ou2024your} as our base model for benchmarking inference algorithms. RADD is a masked discrete diffusion model with GPT-2-level text generation capabilities~\cite{radford2019language} and is trained on the OpenWebText dataset~\cite{Gokaslan2019OpenWeb} with $d=1024$ and $S = 50258$. Our comparative analysis maintains consistent computational resources across methods, quantified through the number of score function evaluations (NFE), and evaluates the sample quality produced by FHS, the Euler method, $\tau$-leaping, Tweedie $\tau$-leaping, Semi-AR, and our proposed $\theta$-Trapezoidal method. We generate text sequences of $1024$ tokens and measure their generative perplexity (computed with GPT-2 Large \cite{radford2019language}) following the evaluation protocol established in~\cite{ou2024your}.

\begin{wrapfigure}{r}{0.5\textwidth}
    \vspace{-1em}
    \centering
    \captionof{table}{Generative perplexity (on GPT-2 large) of texts generated by different sampling algorithms. Lower values are better, with the best in \textbf{bold}.}
    \vspace{-0.1in}
    \resizebox{\linewidth}{!}{
    \begin{tabular}{lcc}
        \toprule
        Method       & NFE $=128$                 & NFE $=1024$                \\
        \midrule
        FHS                    & $\leq 122.732$             & $\leq 109.406$             \\
        Euler                  & $\leq 86.276$              & $\leq 44.686$              \\
        Tweedie $\tau$-leap.   & $\leq 85.738$              & $\leq 44.257$              \\
        $\tau$-leaping         & $\leq 52.366$              & $\leq 28.797$              \\
        Semi-AR                & $\leq 360.793$              & $\leq 147.406$              \\
        $\theta$-RK-2          & $\leq 64.317$              & $\leq 36.330$              \\
        $\theta$-Trapezoidal   & $\boldsymbol{\leq 49.051}$ & $\boldsymbol{\leq 27.553}$ \\
        \bottomrule
    \end{tabular}
    }
    \label{tab:perplexity}
    \vspace{-1em}
\end{wrapfigure}

\cref{tab:perplexity} presents the results for both low ($128$) and high ($1024$) NFEs, with comprehensive results across additional NFE values in~\cref{tab:perplexity_full_gpt}. The empirical results demonstrate that the $\theta$-Trapezoidal method consistently produces better samples within a fixed computation budget than existing popular inference algorithms. Notably, it outperforms Euler and Tweedie $\tau$-leaping, two of the best-performing samplers adopted by RADD, by a large margin. It also consistently prevails over FHS, which performs exact simulation at high NFE ($1024$), supporting again our observations that being free of discretization error does not necessarily imply better sampling quality. These results validate the practical efficiency and accuracy of \cref{alg:trapezoidal}. Additional evaluation results, including unigram entropy and generative perplexities evaluated under LLaMA 3 \cite{grattafiori2024llama}, are detailed in \cref{app:text_exp}. 

\subsection{Image Generation}
\label{sec:imagenet}

Our experiments on image generation utilize the pre-trained score function from MaskGIT~\cite{chang2022maskgit, besnier2023pytorch} as the base model, which can be converted into a masked discrete diffusion model by introducing a noise schedule (see \cref{app:image_exp}). 
MaskGIT employs a masked image transformer architecture trained on ImageNet~\cite{deng2009imagenet} of $256\times 256$ resolution, where each image amounts to a sequence of $256$ discrete image tokens following VQ-GAN tokenization~\cite{esser2021taming} ($d=256$, $S = 1025$). We evaluate the $\theta$-Trapezoidal method against FHS, the Euler method, $\tau$-leaping, and parallel decoding under equivalent NFE budgets ranging from 4 to 64. Following the setting in~\cite{chang2022maskgit}, we generate $5\times 10^4$ images and compute their Fr\'echet Inception Distance (FID) against the ImageNet validation split.

\begin{figure}[t]
    \centering
    \begin{minipage}{0.48\textwidth}
        \centering
        \includegraphics[width=.95\columnwidth]{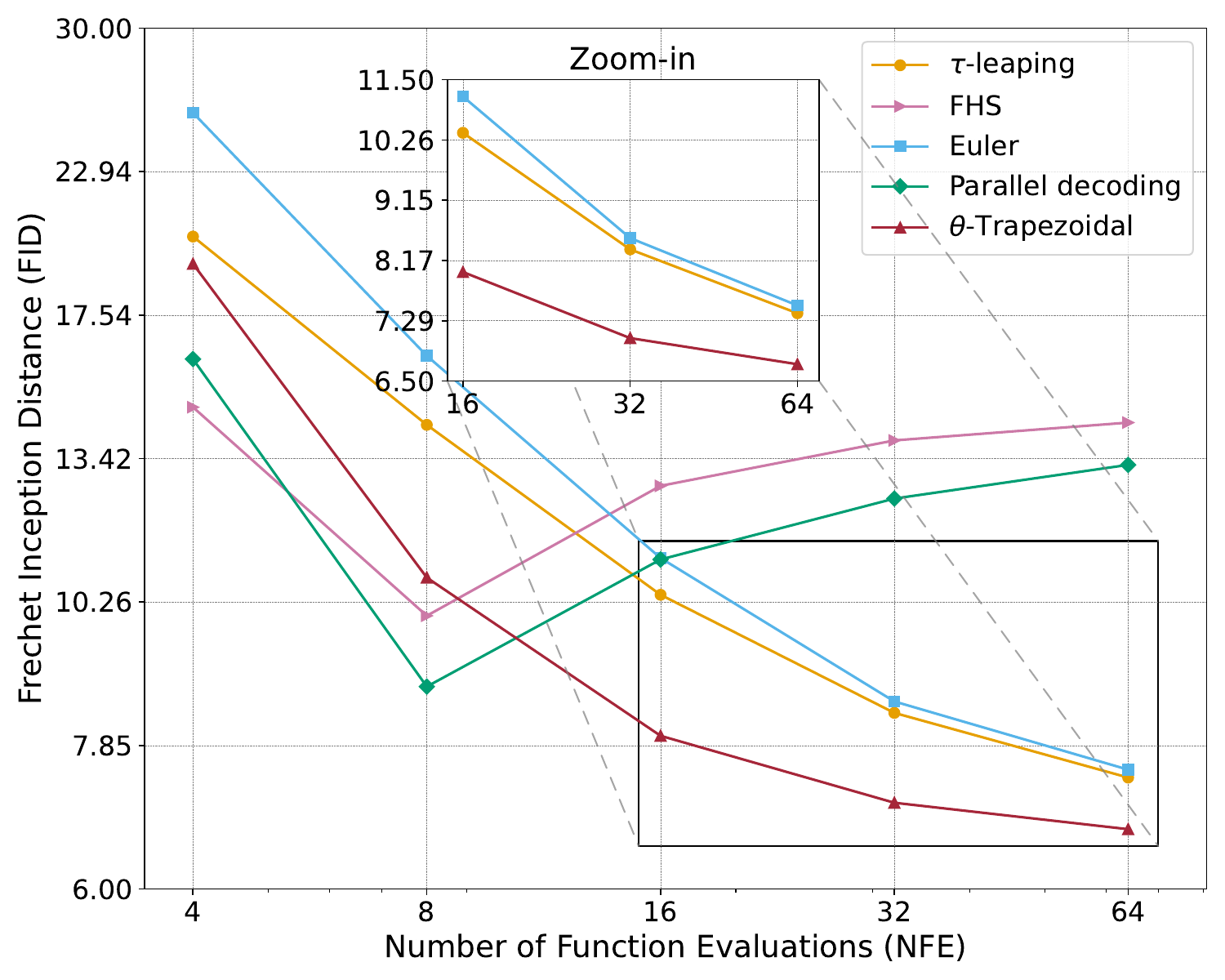}
        \vskip -0.1in
        \captionof{figure}{FID of images generated by different sampling algorithms vs. number of function evaluations (NFE). Lower values are better.}
        \label{fig:imagenet_fid}
    \end{minipage}
    \hfill
    \begin{minipage}{0.48\textwidth}
        \centering
        \includegraphics[width=.95\columnwidth]{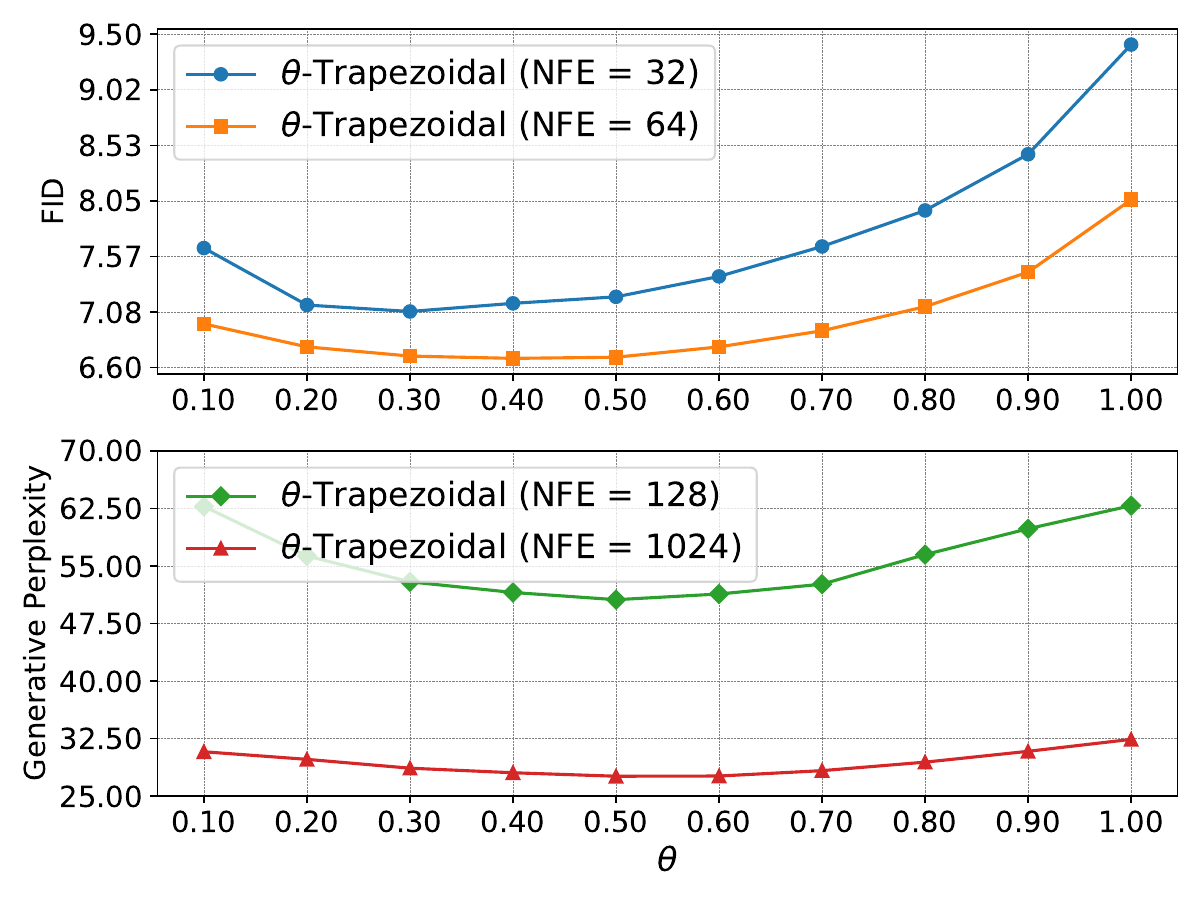}
        \vskip -0.1in
        \captionof{figure}{Sampling quality vs. $\theta\in(0,1]$ in $\theta$-Trapezoid method. {\bfseries Upper}: Image generation (FID). {\bfseries Lower}: Text generation (perplexity). Lower is better.}
        \label{fig:trap_theta_ablation}
    \end{minipage}
\vspace{-1em}
\end{figure}

\cref{fig:imagenet_fid} reveals that $\theta$-Trapezoidal method (\cref{alg:trapezoidal}) consistently achieves lower (and thus better) FID values compared to both the Euler method and $\tau$-leaping across all NFE values. While FHS and parallel decoding show advantages at extremely low NFE ($\leq 8$), their performance saturates with increased computational resources, making them less favorable compared to our rapidly converging method. Additional results, including generated image samples (\cref{fig:image_samples}), are detailed in \cref{app:exp}.

\begin{remark}[Algorithm Hyperparameters]
    We evaluate the performance of the $\theta$-Trapezoidal method across various $\theta$ and NFE values for both text and image generation tasks. As illustrated in~\cref{fig:trap_theta_ablation}, we observe that the $\theta$-Trapezoidal method demonstrates robustness to $\theta$, with a flat landscape near the optimal choice. Our empirical analysis suggests that $\theta \in [0.3, 0.5]$ consistently yields competitive performance across different tasks.
\end{remark}

\subsection{Diffusion Large Language Model and Math Reasoning}

\begin{wrapfigure}{r}{0.6\textwidth}
    \vspace{-1em}
    \centering
    \captionof{table}{Response accuracy on GSM8K with different NFEs. The best results are in \textbf{bold}.}
    \resizebox{\linewidth}{!}{
    \begin{tabular}{cccc}
    \toprule
    Accuracy (\%) & NFE $=64$ & NFE $=128$ & NFE $=256$ \\
    \midrule
    Semi-AR (Conf.) & $33.6$ & $32.0$ & $39.1$ \\
    Semi-AR (Rand.) & $33.8$ & $34.3$ & $\mathbf{40.3}$ \\
    $\theta$-Trapezoidal & $\mathbf{35.1}$ & $\mathbf{38.4}$ & $39.7$ \\
    \bottomrule
    \end{tabular}
    }
    \label{tab:gsm8k_acc}
    \vspace{-1em}
\end{wrapfigure}
To verify the effectiveness of the proposed method on a scale, we additionally benchmark its performance on LLaDA-Instruct \cite{nie2025large}, an 8B masked diffusion LLM with one of the best language modeling performances among discrete diffusion-based LLMs. We examine its performance on GSM8K \cite{cobbe2021training}, a math-reasoning dataset consisting of grade-school-level problems. We compare $\theta$-Trapezoidal to the semi-autoregressive (Semi-AR) sampler therein, with both confidence-based (Conf.) and purely random (Rand.) remasking strategies. For each method, we generate a response of $256$ tokens in a zero-shot prompting manner, with NFE ranging from $64$ to $256$, and report the accuracy in \cref{tab:gsm8k_acc}. $\theta$-Trapezoidal outperforms the Semi-AR sampler in the low NFE regime, where NFE is strictly smaller than the sequence length. At high NFE regime, $\theta$-Trapezoidal exhibits a similarly competitive performance as other solvers. This observation accords with our claim that high-order samplers perform better with lower NFE budgets, and that these advantages persist even when the model size is large. Further implementation details are available at \cref{app:dllm_exp}.

\section{Conclusion and Future Works}

In this work, we introduce the $\theta$-RK-2 and $\theta$-Trapezoidal methods as pioneering high-order numerical schemes tailored for discrete diffusion model inference. Through rigorous analysis based on their stochastic integral formulations, we establish second-order convergence of the $\theta$-Trapezoidal method and that of the $\theta$-RK-2 method under specified conditions. Our analysis indicates that the $\theta$-Trapezoidal method generally provides superior robustness and computational efficiency compared to the $\theta$-RK-2 method. Our empirical evaluations, spanning both a 15-dimensional model with precise score functions and large-scale text and image generation tasks, validate our theoretical findings and demonstrate the superiority performance of our proposed $\theta$-Trapezoidal method over existing samplers in terms of sample quality under equivalent computational constraints. Additionally, we provide a comprehensive analysis of the method's robustness by examining the optimal choice of the parameter $\theta$ in our schemes. 

Future research directions include comparative analysis of these schemes and development of more sophisticated numerical approaches for discrete diffusion model inference, potentially developing inference methods of higher order~\cite{arns2010numerical} or incorporating adaptive step sizes and parallel sampling methodologies. From the perspective of applications, these methods may also show promise for tasks in computational chemistry and biology, particularly in the design of molecules, proteins, and DNA sequences. Moreover, it would also be interesting to explore the usage of higher-order numerical solvers in other problem settings involving the inference of diffusion models, such as sampling via discrete diffusion models~\cite{holderrieth2025leaps,zhu2025mdns,ou2025discrete,guo2025proximal}, inference-time scaling of diffusion models~\cite{uehara2025inference,ma2025scaling,singhal2025general,skreta2025feynman,chen2025solving,ren2025driftlite,tang2025tr2,dang2025inference,ramesh2025test,jain2025diffusion,chen2025rfg,zhang2025inference,hasan2025discrete,wang2025remasking,lee2025debiasing,ou2025inference}, improving the reasoning ability of diffusion large language models~\cite{ye2024diffusion,nie2025large,ye2025dream,zhu2025llada,zhao2025d1,wang2025spg,zhao2025inpainting,zhou2025coevolutionary,zhu2025enhancing}, etc. 

\begin{ack}
    YC is supported by the National Science Foundation under Grants No. ECCS-1942523, DMS-2206576, and CMMI-2450378.
    GMR is supported by a Google Research Scholar Award.
    YZ and MT are grateful for partial supports by NSF Grants DMS-1847802, DMS-2513699, DOE Grants NA0004261, SC0026274, and Richard Duke Fellowship. YR, HC and LY acknowledge support of the National Science Foundation under Award No. DMS-2208163. 
\end{ack}

\newpage
\bibliographystyle{unsrtnat}
\bibliography{neurips2025}


\clearpage

\newpage
\appendix

\section{Further Discussion on Related Works}
\label{app:related_works}

In this section, we provide a more detailed literature review of both continuous and discrete diffusion models, as well as several studies on the numerical methods for SDEs and chemical reaction systems, which are highly related to our work.

\paragraph{Discrete Diffusion Models: Methodology, Theory, and Applications.}

Discrete diffusion and flow-based models~\cite{chen2022analog, austin2021structured, dieleman2022continuous,floto2023diffusion, hoogeboom2021argmax, meng2022concrete, richemond2022categorical, sun2022score, santos2023blackout, campbell2022continuous} have recently been proposed as generalizations of continuous diffusion models to model discrete distributions.

Such models have been widely used in various areas of science and engineering, including but not limited to modeling retrosynthesis~\cite{igashov2023retrobridge}, combinatorial optimization~\cite{li2024distribution,sun2023difusco}, solving inverse problems~\cite{murata2024g2d2,chu2025split} and sampling high-dimensional discrete distributions~\cite{lee2025debiasing,holderrieth2025leaps}, designing molecules, proteins, and DNA sequences~\cite{alamdari2023protein, avdeyev2023dirichlet, emami2023plug, frey2023protein, penzar2023legnet, watson2023novo, yang2023fast, campbell2024generative, stark2024dirichlet, kerby2024training, yi2024graph,zhu2024bridge}, image synthesis~\cite{esser2021imagebart, lezama2022discrete, gu2022vector}, text summarization~\cite{dat2024discrete}, as well as the generation of graph~\cite{seff2019discrete, niu2020permutation, shi2020graphaf, qin2023sparse, vignac2022digress,haefeli2022diffusion, qin2024defog, kim2024discrete}, layout~\cite{inoue2023layoutdm, zhang2023layoutdiffusion}, motion~\cite{chi2024m2d2m, lou2023diversemotion}, sound~\cite{campbell2022continuous,yang2023diffsound}, image~\cite{hu2022global, bond2022unleashing, tang2022improved, zhu2022discrete}, speech~\cite{wu2024dctts}, electronic health record~\cite{han2024guided}, tabular data~\cite{shi2024tabdiff} and text~\cite{he2022diffusionbert,savinov2021step,wu2023ar,gong2023diffuseq, zheng2023reparameterized, zhou2023diffusion, shi2024simplified, sahoo2024simple, xu2024energy, guo2024plug}. Inspired by the huge success achieved by discrete diffusion models in practice, researchers have also conducted some studies on the theoretical properties of these models, such as~\cite{chen2024convergence, zhang2024convergence, ren2024discrete, ren2025unified}.

An extensive amount of work has also explored the possibility of making discrete diffusion models more effective from many aspects, such as optimizing the sampling schedule~\cite{park2024textit}, adding correctors~\cite{zhao2024informed}, developing fast samplers~\cite{chen2024fast}, designing correctors based on information learnt by the model~\cite{zhao2024informed}, simplifying the loss function for training~\cite{zhao2024improving}, adding editing-based refinements~\cite{reid2023diffuser}, synergizing these models with other techniques and methodologies like distillation~\cite{hayakawa2024distillation}, Ehrenfest processes~\cite{winkler2024bridging}, Glauber dynamics~\cite{varma2024glauber}, tensor networks~\cite{causer2024discrete}, enhanced guidance mechanisms~\cite{gruver2024protein, nisonoff2024unlocking, li2024derivative, schiff2024simple}, structured preferential generation~\cite{rissanen2024improving}, the plan-and-denoise framework~\cite{liu2024think} and alternative metrics, \emph{e.g.}, the Fisher information metric \cite{davis2024fisher}. However, to the best of our knowledge, existing work on accelerating the inference of discrete diffusion models is relatively sparse compared to the ones we listed above, which makes it a direction worth exploring and serves as one of the main motivations behind this work.

\paragraph{Numerical Methods for SDEs and Chemical Reaction Systems.}

Below, we review advanced numerical methods proposed for simulating SDEs and chemical reaction systems, which are the main techniques adopted in our work. For the simulation of SDEs driven by Brownian motions, many studies have been performed to design more accurate numerical schemes, which have been widely applied to tackle problems in computational physics, optimization, and Monte Carlo sampling. Examples of such work include the Milstein method~\cite{mil1975approximate}, explicit methods~\cite{abdulle2008s}, multistep methods~\cite{buckwar2006multistep}, extrapolation-type  methods~\cite{talay1990expansion, anderson2009weak}, stochastic Runge Kutta methods~\cite{burrage1996high, burrage2000order, burrage2002predictor, rossler2003runge, rossler2010runge}, splitting methods~\cite{foster2024high}, methods based on gaussian mixtures~\cite{li2021numerical}, randomized midpoint method~\cite{shen2019randomized}, parallel sampling methods~\cite{anari2024fast, yu2024parallelized} as well as high-order methods for stochastic gradient Markov Chain Monte Carlo~\cite{chen2015convergence, durmus2016stochastic}, underdamped and overdamped Langevin Monte Carlo~\cite{li2019stochastic, sabanis2019higher, mou2021high,monmarche2021high,foster2021shifted}. For a more comprehensive list of related numerical methods, one may refer to~\cite{kloeden1992numerical, burrage2004numerical,milstein2004stochastic,kloeden2012numerical,weinan2021applied}.

Regarding the simulation of chemical reaction systems, numerical methods can be categorized into two classes. The first class consists of exact simulation methods, which are similar to the Kinetic Monte Carlo (KMC) method~\cite{bortz1975new} developed for simulating spin dynamics and crystal growth in condensed matter physics. Examples of such methods include the Gillespie algorithm (or the Stochastic Simulation Algorithm, a.k.a. SSA)~\cite{gillespie1976general, gillespie1977exact} and its variants for multiscale modeling~\cite{cao2005multiscale,cao2005slow,liu2005nested,weinan2007nested}, the next reaction method and its variants~\cite{gibson2000efficient, anderson2007modified}, uniformization-based methods~\cite{van1992uniformization,beentjes2019uniformization}, etc. The second class of methods are approximate simulation methods, including but not limited to the $\tau$-leaping method~\cite{gillespie2001approximate} and its variants~\cite{rathinam2003stiffness,gillespie2003improved, cao2004numerical,burrage2004poisson, burrage2004multi,cao2005avoiding,auger2006r,cao2007adaptive,bayati2009d, cao2008slow, xu2008unbiased, hu2009highly, sehl2009accurate, arns2010numerical, iyengar2010accurate, hu2011weaka, anderson2012multilevel,moraes2014hybrid, padgett2016adaptive,lipkova2019s}.
For a subset of the methods listed above, numerical analysis has also been performed in many works~\cite{rathinam2005consistency, li2007analysis,hu2011weakb,anderson2014complexity, chen2017error} to justify their validity.

\paragraph{Continuous Diffusion Models: Methodology, Theory, and Acceleration.}

Continuous diffusion and probability flow-based models~\cite{sohl2015deep, zhang2018monge, song2019generative, ho2020denoising, song2020score, song2021maximum, lipman2022flow, liu2022flow, albergo2022building, albergo2023stochastic} have also been the most popular methods in generative modeling, with a wide range of applications in science and engineering. For a list of related work on the theoretical studies and applications of these models, one may refer to the literature review conducted in~\cite{chen2024accelerating, ren2024discrete}. Here we will only review studies on accelerating the inference of continuous diffusion models, which motivates our work.

An incomplete list of accelerating methods includes approximate mean direction solver~\cite{zhou2024fast}, restart sampling~\cite{xu2023restart}, gaussian mixture solvers~\cite{guo2024gaussian}, self-consistency~\cite{heek2024multistep, song2023consistency, song2023improved, lu2024simplifying}, knowledge distillation~\cite{luhman2021knowledge,meng2023distillation, salimans2022progressive, tong2024learning, frankel2025s4s}, combination with underdamped Langevin dynamics~\cite{dockhorn2021score}, operator learning~\cite{zheng2023fast} and more recently ideas from accelerating large language models (LLMs) like caching~\cite{ma2024deepcache} and speculative decoding~\cite{de2025accelerated}. Among all the proposed accelerating methods, one major class of methods are developed based on techniques from numerical analysis like adaptive step sizes~\cite{jolicoeur2021gotta}, exponential integrators~\cite{zhang2022fast, zhanggddim, gonzalez2024seeds}, predictor-corrector solver~\cite{zhao2024unipc}, Adams-Bashforth methods~\cite{lu2022dpm++,xue2024sa,zhangsong2023improved}, Taylor methods~\cite{tachibana2021quasi,dockhorn2022genie}, Picard iteration and parallel sampling~\cite{shih2024parallel,chung2023parallel,tang2024accelerating, cao2024deep, selvam2024self, chen2024accelerating}, (stochastic) Runge-Kutta methods~\cite{liu2022pseudo,lu2022dpm, karras2022elucidating, zheng2023dpm, li2024accelerating,wu2024stochastic} and randomized midpoint method~\cite{kandasamy2024poisson,gupta2024faster}. In contrast, there have been fewer studies on the acceleration of discrete diffusion models via techniques from numerical analysis, which inspires the study undertaken in this paper.

\section{Mathematical Background}
\label{app:background}

In this section, we provide the mathematical background for the stochastic integral formulation of discrete diffusion models, the error analysis of the $\tau$-leaping method, and useful lemmas for the theoretical analysis of high-order schemes for discrete diffusion models.

\subsection{Stochastic Integral Formulation of Discrete Diffusion Models}
\label{app:stochastic_integral}

Throughout this section, we will assume that $(\Omega, \gF, \P)$ is a probability space, $\sX$ is a finite-state space, and denote the pairwise difference set of the state space by $\sD:= \{x-y: x \neq y \in \sX\}$. We also assume that the pairwise difference set $\sX$ is equipped with a metric $\|\cdot\|$, a finite measure $\gamma$, and a $\sigma$-algebra $\gB$.

As a warm-up, we introduce the definition of the Poisson random measure for a time-homogeneous counting process.

\begin{definition}[{Poisson Random Measure~\cite[Definition~A.1]{ren2024discrete}}]
	The random measure $N(\dif t, \dif \nu)$ on $\R^+\times \sD$ is called a \emph{Poisson random measure} w.r.t. measure $\gamma$ if it is a random counting measure satisfying the following properties:
	\begin{enumerate}[leftmargin=2em, label=(\roman*)]
		\item For any $B \in \gB$ and $0\leq s<t$, $$N((s,t]\times B)\sim \gP\left(\gamma(B)(t-s)\right);$$
		\item For any $t\geq 0$ and pairwise disjoint sets $\{B_i\}_{i\in[n]} \subset \gB$,
		      $$\left\{N_t(B_i):= N((0, t] \times B_i)\right\}_{i\in[n]}$$
		      are independent stochastic processes.
	\end{enumerate}
	\label{def:poisson_random_measure_app}
\end{definition}

Then we define the Poisson random measure with evolving intensities. The term ``evolving'' refers to that the intensity is both time and state-dependent.

\begin{definition}[{Poisson Random Measure with Evolving Intensity~\cite[Definition~A.3]{ren2024discrete}}]
	Suppose $\lambda_t(y)$ is a non-negative predictable process on $\R^+\times \sD \times \Omega$ satisfying that for any $0 \leq T < \overline T$,
	$\int_0^T \lambda_t(\nu) \dif t < \infty$, a.s..

	The random measure $N[\lambda](\dif t, \dif \nu)$ on $\R^+\times \sD$ is called a \emph{Poisson random measure} with \emph{evolving intensity} $\lambda_t(\nu)$ w.r.t. measure $\gamma$ if it is a random counting measure satisfying the following properties:
	\begin{enumerate}[leftmargin=2em, label=(\roman*)]
		\item For any $B \in \gB$ and $0\leq s<t$, $$N[\lambda]((s,t]\times B)\sim \gP\left(\int_s^t \int_B \lambda_\tau(\nu) \gamma(\dif \nu) \dif \tau \right);$$
		\item For any $t\geq 0$ and pairwise disjoint sets $\{B_i\}_{i\in[n]} \subset \gB$,
		      $$
			      \left\{N_t[\lambda](B_i):= N[\lambda]((0, t] \times B_i)\right\}_{i\in[n]}
		      $$ are independent stochastic processes.
	\end{enumerate}
	\label{def:poisson_random_measure_evolving}
\end{definition}

\begin{remark}[Construction of Poisson Random Measure with Evolving Intensity]

	As discussed in Thm.~A.4 in~\cite{ren2024discrete} and originally proposed by~\cite{protter1983point}, the Poisson random measure with evolving intensity can be constructed in the following way.

	One first augments the $(\sX, \gB, \nu)$ measure space to a product space $(\sD \times \R, \gB \times \gB(\R), \gamma \times m)$, where $m$ is the Lebesgue measure on $\R$, and $\gB(\R)$ is the Borel $\sigma$-algebra on $\R$.
	The Poisson random measure with evolving intensity $\lambda_t(\nu)$ can be defined in the augmented measure space as
	\begin{equation}
		N[\lambda]((s, t] \times B) := \int_s^t \int_{B} \int_\R \vone_{0\leq \xi \leq \lambda_\tau(\nu)} N(\dif \tau, \dif \nu, \dif \xi),
		\label{eq:poisson_one}
	\end{equation}
	where $N(\dif \tau, \dif \nu, \dif \xi)$ is the Poisson random measure on $\R^+\times \sD \times \R$ w.r.t. measure $\nu(\dif y)\dif \xi$.
	\label{rem:construction}
\end{remark}

The following theorem provides the change of measure theorem for Poisson random measure with evolving intensity, which is crucial for the theoretical analysis of numerical schemes for discrete diffusion models.

\begin{theorem}[{Change of Measure for Poisson Random Measure with Evolving Density~\cite[Thm.~3.3]{ren2024discrete}}]
	Let $N[\lambda](\dif t, \dif \nu)$ be a Poisson random measure with evolving intensity $\lambda_t(\nu)$, and $h_t(\nu)$ a positive predictable process on $\R^+\times \sD \times \Omega$.
	Suppose the following exponential process is a local $\gF_t$-martingale:
	\begin{equation}
		Z_t[h] := \exp\bigg(\int_0^t \int_\sD \log h_t(\nu) N[\lambda](\dif t \times \dif \nu) - \int_0^t \int_{\sD} (h_t(\nu) - 1) \lambda_t(\nu) \gamma(\dif \nu) \bigg),
		\label{eq:Z_t}
	\end{equation}
	and $\sQ$ is another probability measure on $(\Omega, \gF)$ such that $\sQ \ll \P$ with Radon-Nikodym derivative $\dif \sQ/\dif \P|_{\gF_t} = Z_t[h]$.

	Then the Poisson random measure $N[\lambda](\dif t, \dif \nu)$ under the measure $\sQ$ is a Poisson random measure with evolving intensity $\lambda_t(\nu) h_t(\nu)$.
	\label{thm:change_of_measure}
\end{theorem}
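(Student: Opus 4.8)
The plan is to prove the result through the exponential (Doléans-Dade) martingale characterization of the intensity, which packages both the conditionally Poisson law and the independence-over-disjoint-sets property of \cref{def:poisson_random_measure_evolving} into a single tractable statement. For a bounded predictable integrand $g_s(\nu)$ and a candidate intensity $\beta_s(\nu)$, I would write
\[
\gE^g_t[\beta] := \exp\bigg(\int_0^t\int_\sD g_s(\nu)\,N[\lambda](\dif s,\dif\nu) - \int_0^t\int_\sD \big(e^{g_s(\nu)}-1\big)\beta_s(\nu)\,\gamma(\dif\nu)\,\dif s\bigg).
\]
The standard characterization states that $N[\lambda]$ has evolving intensity $\beta$ under a given probability measure if and only if $\gE^g[\beta]$ is a local martingale under that measure for every bounded predictable $g$. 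By hypothesis $N[\lambda]$ has intensity $\lambda$ under $\P$, so $\gE^g[\lambda]$ is a $\P$-local martingale for all such $g$; the goal is to show that $\gE^g[\lambda h]$ is a $\sQ$-local martingale for all such $g$, which by the same characterization yields the assertion.

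The bridge between the two measures is the abstract Bayes rule for martingales: since $\dif\sQ/\dif\P|_{\gF_t} = Z_t[h]$ with $Z[h]$ the density process, a process $X$ is a $\sQ$-local martingale if and only if $X\,Z[h]$ is a $\P$-local martingale. I would therefore compute the product $\gE^g_t[\lambda h]\cdot Z_t[h]$. First observe that $Z_t[h] = \gE^{\log h}_t[\lambda]$, since $e^{\log h}-1 = h-1$ matches the compensator term in~\eqref{eq:Z_t} exactly. Multiplying the two exponentials simply adds their exponents: the jump parts combine to $\int_0^t\int_\sD (g_s+\log h_s)\,N[\lambda](\dif s,\dif\nu)$, while the compensator parts combine through the pointwise algebraic identity
\[
\big(e^{g}-1\big)\lambda h + (h-1)\lambda = \big(e^{g}h - 1\big)\lambda = \big(e^{g+\log h}-1\big)\lambda.
\]
Hence $\gE^g_t[\lambda h]\cdot Z_t[h] = \gE^{g+\log h}_t[\lambda]$. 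Because $g+\log h$ is again predictable, the right-hand side is a $\P$-local martingale by the $\P$-intensity hypothesis applied with integrand $g+\log h$, and so by the Bayes rule $\gE^g[\lambda h]$ is a $\sQ$-local martingale. Letting $g$ range over all bounded predictable integrands completes the argument.

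The main obstacle is the clean establishment and two-sided use of the exponential-martingale characterization in the first step: one must justify that the Doléans-Dade local-martingale property for all bounded predictable $g$ is genuinely equivalent to \cref{def:poisson_random_measure_evolving}, which I would anchor in the explicit thinning construction of~\cref{rem:construction}, so that under $\sQ\ll\P$ the object $N[\lambda]$ remains the same counting measure and only its compensator is altered. The remaining technical care is localization and integrability bookkeeping: the integrand $g+\log h$ must be admissible, which is precisely where the standing hypotheses that $Z[h]$ is well-defined and a local martingale (and that $\sQ$ is a bona fide probability measure, i.e.\ $\E_\P[Z_t[h]]=1$) enter, ensuring both that the combination of exponents is legitimate and that the passage from the local-martingale identities to the claimed statement about $\sQ$ is valid.
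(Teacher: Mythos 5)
The paper does not prove this statement: it is imported verbatim from~\cite[Thm.~3.3]{ren2024discrete}, so there is no in-paper proof to compare against. Judged on its own, your argument is the standard and correct Girsanov-for-point-processes route: the algebra $\left(e^{g}-1\right)\lambda h+(h-1)\lambda=\left(e^{g+\log h}-1\right)\lambda$ is right, the identity $\gE^{g}_t[\lambda h]\cdot Z_t[h]=\gE^{g+\log h}_t[\lambda]$ follows, and the abstract Bayes rule then transfers the local-martingale property across measures exactly as you say. The two places where the real work is hidden are the ones you flag yourself, and they deserve to be named as genuine obligations rather than bookkeeping: (i) the \emph{converse} direction of the Watanabe-type characterization --- that the local-martingale property of $\gE^{g}[\beta]$ for all bounded predictable $g$ implies the full content of \cref{def:poisson_random_measure_evolving}, i.e.\ the (conditionally) Poisson law of $N[\lambda]((s,t]\times B)$ with the stated mean \emph{and} independence over disjoint sets --- which one establishes by taking $g$ deterministic and piecewise constant and computing conditional Laplace functionals; and (ii) the admissibility of the integrand $g+\log h$, since $\log h$ need not be bounded, which requires either a localization by stopping times at which $\int\!\!\int(h+|\log h|)\lambda\,\gamma(\dif\nu)\,\dif s$ is controlled or a direct It\^o-formula verification that $\gE^{g+\log h}[\lambda]$ is a stochastic integral against the compensated measure. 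With those two points filled in (both standard, cf.\ Jacod--Shiryaev or Br\'emaud), the proof is complete; as written it is a correct and well-organized sketch.
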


\subsection{Error Analysis of \texorpdfstring{$\tau$}{tau}-leaping}
\label{app:error_analysis_tau_leaping}

The $\tau$-leaping method was originally proposed by~\cite{gillespie2001approximate} and adopted for the inference of discrete diffusion models by~\cite{campbell2022continuous}.
A summary of the algorithm is given in~\cref{alg:tau_leaping}. In this subsection, we provide a sketch for the error analysis of the $\tau$-leaping method when applied to discrete diffusion models, which will be compared with that of high-order schemes later on.

\IncMargin{1.5em}
\begin{algorithm}[!ht]
	\caption{$\tau$-Leaping Method for Discrete Diffusion Model Inference}
	\label{alg:tau_leaping}
	\Indm
	\KwIn{$\widehat y_0 \sim q_0$, $\theta \in [0,1]$, time discretization $(s_n, \rho_n)_{n\in[0:N-1]}$, $\widehat \mu$, $\widehat \mu^*$ as defined in~\cref{prop:integral_formulation_midpoint}.}
	\KwOut{A sample $\widehat y_{s_N}\sim \widehat q_{t_N}^\RK$.}
	\Indp
	\For{$n = 0$ \KwTo $N-1$}{
		$\displaystyle\widehat y_{s_{n+1}}  \leftarrow \widehat y_{s_n} + \sum_{\nu \in \sD} \nu
			\gP\left(\widehat\mu_{s_n}(\nu)\Delta_n\right)$\;
	}
\end{algorithm}

\begin{proof}[Proof of~\cref{thm:tau_leaping}]
	As we are considering the case where $\sX = [S]^d$, \emph{i.e.} the state space is a $d$-dimensional grid with $S$ states along each dimension, we have $\log |\sX| = d \log S$. Then we consider a simple time-homogeneous transition matrix $\mQ_t \equiv \mQ$ that allows jumps between neighboring states with equal probability. Specifically, we have
	\begin{equation*}
		Q(y, x) =
		\begin{cases}
			1,   & \|x - y\|_1 = 1, \\
			-2d, & x = y,           \\
		\end{cases}
	\end{equation*}
	which can be verified to satisfy Assumption~4.3(i) in~\cite{ren2024discrete} with $C = 1$ and $\underline D = \overline D = 2d$. Assumption~4.3(ii) is also satisfied, as shown in Example~B.10 of~\cite{ren2024discrete}.

	Then we may apply Thm.~4.7 in~\cite{ren2024discrete} by using the required time discretization scheme according to the properties of the target distribution and plugging in the corresponding values of $C, \underline D, \overline D$. The result follows by scaling the transition matrix $\mQ$ by $\frac{1}{d}$, equivalent to scaling the time by $d$.
\end{proof}

\section{Proofs}
\label{app:proofs_theoretical_results}

In this section, we provide the missing proofs in the main text. We will first provide the proofs of the stochastic integral formulations of high-order schemes for discrete diffusion models in~\cref{app:proofs_integral_formulation}. Then we will provide the proofs of the main results for the $\theta$-Trapezoidal method in~\cref{app:proof_trapezoidal} and the $\theta$-RK-2 method in~\cref{app:proof_midpoint}. We remark that the proof for the $\theta$-Trapezoidal method requires more techniques and is more involved, to which the proof for the $\theta$-RK-2 method is analogous. In~\cref{app:lemmas}, we provide the detailed lemmas and computations omitted in the proofs of~\cref{thm:trapezoidal,thm:midpoint}.

\subsection{Stochastic Integral Formulations of High-Order Schemes}
\label{app:proofs_integral_formulation}

In order to rigorously analyze the $\theta$-RK-2 method, we need the following definition:
\begin{definition}[Intermediate Process]
	We define the intermediate process $\widehat y^*_{s}$ piecewisely on $(s_n, s_{n+1}]$ as follows:
	\begin{equation}
		\widehat y^*_{s} = \widehat y_{s_n} + \int_{s_n}^{s} \int_{\sD} \nu N\left[
			\widehat \mu_{s_n} \right](\dif s, \dif \nu),
		\label{eq:intermediate_process}
	\end{equation}
	where the intensity $\widehat \mu_{s_n}$ is given by $\widehat \mu_{s_n}(\nu, \widehat y_{s_n}) = \cev{\widehat s}{\vphantom{\widehat s}}_{s_n}(\widehat y_{s_n}, \widehat y_{s_n}+\nu) \cev Q \vphantom{Q}^0_{s_n}(\widehat y_{s_n}, \widehat y_{s_n}+\nu)$,
	\emph{i.e.},  $\widehat y^*_{s}$ is the process obtained by performing $\tau$-leaping from time $s_n$ to $s$ with intensity $\widehat \mu$.
\end{definition}

The following proposition provides the stochastic integral formulation of this method. 

\begin{proposition}[Stochastic Integral Formulation of $\theta$-RK-2 Method]
	The $\theta$-RK-2 method (\cref{alg:midpoint}) is equivalent to solving the following stochastic integral:
	\begin{equation}
		\widehat y_s^\RK = \widehat y_0^\RK + \int_0^s \int_{\sD} \nu  N\left[
			\widehat \mu^\RK   \right](\dif s, \dif \nu),
		\label{eq:interpolating_process_midpoint}
	\end{equation}
	in which the intensity $\widehat \mu^\RK$ is defined as a weighted sum
	\begin{equation}
		\widehat \mu_s^\RK(\nu) = (1-\tfrac{1}{2\theta}) \widehat\mu_{\floor{s}}(\nu, \widehat y^\RK_{\floor{s}}) + \tfrac{1}{2\theta}\widehat\mu^*_{\rho_s}(\nu, \widehat y^*_{\rho_s}),
		\label{eq:midpoint_intensity}
	\end{equation}
	and the intermediate intensity $\widehat \mu^*$ is defined piecewisely as
	\begin{equation}
		\widehat \mu^*_s(\nu, \widehat y^*_s) = \cev{\widehat s}{\vphantom{\widehat s}}_s(\widehat y^*_s, \widehat y^*_s+\nu) \cev Q \vphantom{Q}^0_s(\widehat y^*_s, \widehat y^*_s+\nu),
		\label{eq:intermediate_intensity}
	\end{equation}
	with the intermediate process $\widehat y^*_s$ defined in~\eqref{eq:intermediate_process} for the corresponding interval.
	We will call $\widehat y_s^\RK$ the \emph{interpolating process} of the $\theta$-RK-2 method and denote the distribution of $\widehat y_s^\RK$ by $\widehat q_s^\RK$.
	\label{prop:integral_formulation_midpoint}
\end{proposition}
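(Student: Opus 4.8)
The plan is to verify directly that the Poisson random variables drawn in \cref{alg:midpoint} coincide, both in distribution and in their coupling, with the jump counts produced by the two Poisson random measures appearing in~\eqref{eq:interpolating_process_midpoint} and~\eqref{eq:intermediate_process}. The essential observation is that on each grid cell $(s_n, s_{n+1}]$ the intensity $\widehat\mu^\RK_s$ is \emph{constant in time}: by~\eqref{eq:midpoint_intensity} it equals the weighted combination $(1-\tfrac{1}{2\theta})\widehat\mu_{s_n}(\cdot, \widehat y^\RK_{s_n}) + \tfrac{1}{2\theta}\widehat\mu^*_{\rho_n}(\cdot, \widehat y^*_{\rho_n})$, whose two summands are frozen at the left endpoint $s_n$ and at the $\theta$-section point $\rho_n$ (note $\rho_s = \rho_n$ for every $s$ in the cell), and hence do not vary as $s$ ranges over $(s_n, s_{n+1}]$.

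First I would handle the intermediate process. Because $\gamma$ is the counting measure on $\sD$ and $\widehat\mu_{s_n}$ is constant on $(s_n, \rho_n]$, property (i) of \cref{def:poisson_random_measure_evolving} gives, for each fixed jump magnitude $\nu$,
\[
N[\widehat\mu_{s_n}]\big((s_n, \rho_n] \times \{\nu\}\big) \sim \gP\!\left(\int_{s_n}^{\rho_n}\widehat\mu_{s_n}(\nu)\,\dif\tau\right) = \gP\big(\widehat\mu_{s_n}(\nu)\,\theta\Delta_n\big),
\]
using $\rho_n - s_n = \theta\Delta_n$, while property (ii) makes these counts independent across distinct $\nu$. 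Evaluating~\eqref{eq:intermediate_process} at $s = \rho_n$ then reproduces exactly the first line of \cref{alg:midpoint}. The main process is treated identically: applying the same property to $N[\widehat\mu^\RK]$ over the full cell $(s_n, s_{n+1}]$ of length $\Delta_n$ yields, for each $\nu$, a count distributed as $\gP\big(\widehat\mu^\RK_{s_n}(\nu)\Delta_n\big)$, which under the standing positivity hypothesis $\widehat\mu^\RK \geq 0$ (so that the truncation $[\cdot]_+$ and the indicator $\vone_{\widehat\mu_{s_n}>0}$ act trivially) matches the Poisson parameter in the second line of the algorithm. Summing over $\nu$ and telescoping across the grid, an induction on $n$ identifies $\widehat y^\RK_{s_{n+1}}$ with the algorithmic update and establishes the claimed equivalence.

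The step I expect to require the most care is the measurability and predictability of the evolving intensity $\widehat\mu^\RK$. Its value on $(s_n, s_{n+1}]$ depends on $\widehat y^*_{\rho_n}$, which is itself generated by the \emph{intermediate} measure $N[\widehat\mu_{s_n}]$ over the subinterval $(s_n, \rho_n]$; naively this looks like the integrand on a cell depends on randomness injected inside that same cell. To make $\widehat\mu^\RK$ a bona fide predictable integrand — and to guarantee that the two Poisson variates drawn per step in the algorithm are coupled precisely as in the continuous formulation — I would take the two random measures driving $\widehat y^*$ and $\widehat y^\RK$ to be \emph{independent}, enlarge the filtration to adjoin both, and invoke the thinning construction of \cref{rem:construction}. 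With this setup $\widehat\mu^\RK_s$ is constant on each cell and measurable with respect to the (enlarged) information available at $s_n$, hence qualifies as a predictable process, and the jump counts of the main measure inherit the conditional independence from the intermediate measure that the algorithm assumes between its two sampling steps. Once this filtration is fixed, both stochastic integrals are well-defined and the per-step distributional identities above combine into the stated equivalence.

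I would finally remark that the analogous statement for the $\theta$-Trapezoidal method (\cref{prop:integral_formulation_trapezoidal}) follows by the identical argument, with the second cell replaced by the subinterval $(\rho_n, s_{n+1}]$ of length $(1-\theta)\Delta_n$ and the combined intensity $\alpha_1\widehat\mu^*_{\rho_n} - \alpha_2\widehat\mu_{s_n}$ in place of the interpolated one.
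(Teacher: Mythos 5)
Your proposal is correct and follows essentially the same route as the paper's proof: condition on the information at $s_n$, observe that the intensity is constant in time on each cell (depending only on the state at $s_n$ and the independent randomness driving the intermediate process), so the Poisson random measure integral over the cell collapses to a single Poisson draw per $\nu$ with mean $\widehat\mu^\RK_{s_n}(\nu)\Delta_n$, and sum over $\nu$ using the counting measure $\gamma$. Your explicit treatment of predictability via filtration enlargement and independence of the two driving measures is a slightly more careful rendering of what the paper states in one sentence, not a different argument.
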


The following proposition establishes the stochastic integral formulation of the $\theta$-Trapezoidal method, whose proof can be found in~\cref{app:proofs_integral_formulation}.

\begin{proposition}[Stochastic Integral Formulation of $\theta$-Trapezoidal Method]
	The $\theta$-Trapezoidal method (\cref{alg:trapezoidal}) is equivalent to solving the following stochastic integral:
	\begin{equation}
		\widehat y_s^\trap = \widehat y_0^\trap + \int_0^s \int_{\sD} N [\widehat \mu^\trap] (\dif s, \dif \nu)
		\label{eq:interpolating_process_trapezoidal}
	\end{equation}
	where the intensity $\widehat \mu^\trap$ is defined piecewisely as
	\begin{equation}
        \widehat \mu_s^\trap(\nu) = \vone_{s < \rho_s} \widehat\mu_{\floor{s}}(\nu, \widehat y^\trap_{\floor{s}})  + \vone_{s \geq \rho_s} \left(\alpha_1\widehat\mu^*_{\rho_s}(\nu, \widehat y^*_{\rho_s}) - \alpha_2\widehat\mu_{\floor{s}}(\nu, \widehat y^\trap_{\floor{s}})\right)_{+}.
		\label{eq:trapezoidal_intensity}
	\end{equation}
	Above, $\vone_{(\cdot)}$ denotes the indicator function and the intermediate process $\widehat y^*_s$ is defined in~\eqref{eq:intermediate_process} for the corresponding interval. We will call the process $\widehat y_s^\trap$ the \emph{interpolating process} of the $\theta$-Trapezoidal method and denote the distribution of $\widehat y_s^\trap$ by $\widehat q_s^\trap$.
	\label{prop:integral_formulation_trapezoidal}
\end{proposition}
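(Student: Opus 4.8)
The plan is to prove the equivalence by induction over the time intervals $(s_n, s_{n+1}]$, exploiting the fact that the evolving intensity $\widehat\mu^\trap_s$ defined in~\eqref{eq:trapezoidal_intensity} is \emph{piecewise constant in $s$} on each of the two sub-intervals $(s_n, \rho_n]$ and $(\rho_n, s_{n+1}]$. On each sub-interval the intensity depends on the process only through the frozen grid value $\widehat y^\trap_{\floor{s}} = \widehat y^\trap_{s_n}$ and the frozen section value $\widehat y^*_{\rho_s} = \widehat y^*_{\rho_n}$, both of which are determined by the driving Poisson measure up to the left endpoint of the sub-interval. Consequently $\widehat\mu^\trap_s$ is a non-negative predictable process (non-negativity being enforced by the $(\cdot)_+$ truncation on the second sub-interval), so~\eqref{eq:interpolating_process_trapezoidal} is a well-defined stochastic integral against a Poisson random measure with evolving intensity in the sense of~\cref{def:poisson_random_measure_evolving}. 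The induction hypothesis is that the law of $(\widehat y^\trap_{s_0}, \dots, \widehat y^\trap_{s_n})$ coincides with that of the first $n+1$ iterates of~\cref{alg:trapezoidal}; assuming $\widehat y^\trap_{s_n} = \widehat y_{s_n}$ in distribution, I advance to $s_{n+1}$.

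On the first sub-interval $(s_n, \rho_n]$ the intensity is the constant $\widehat\mu_{s_n}(\cdot, \widehat y^\trap_{s_n})$, which is precisely the frozen intensity defining the intermediate process in~\eqref{eq:intermediate_process}. Since both $\widehat y^\trap_s$ and $\widehat y^*_s$ are driven by the same Poisson random measure, with the same intensity and the same initial value, they coincide pathwise on this sub-interval; in particular $\widehat y^\trap_{\rho_n} = \widehat y^*_{\rho_n}$. By the defining property in~\cref{def:poisson_random_measure_evolving}, for each $\nu \in \sD$ the number of $\nu$-jumps on $(s_n, \rho_n]$ is Poisson with mean $\widehat\mu_{s_n}(\nu)(\rho_n - s_n) = \widehat\mu_{s_n}(\nu)\theta\Delta_n$, and these counts are independent across $\nu$ by the disjoint-set independence. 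Hence the increment $\widehat y^\trap_{\rho_n} - \widehat y^\trap_{s_n}$ has exactly the distribution $\sum_{\nu\in\sD}\nu\,\gP(\widehat\mu_{s_n}(\nu)\theta\Delta_n)$ prescribed by the first line of~\cref{alg:trapezoidal}.

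On the second sub-interval $(\rho_n, s_{n+1}]$ the intensity freezes at $(\alpha_1\widehat\mu^*_{\rho_n}(\cdot, \widehat y^*_{\rho_n}) - \alpha_2\widehat\mu_{s_n}(\cdot, \widehat y^\trap_{s_n}))_+$, which is settled by time $\rho_n$ and is therefore constant and predictable on this interval. Applying the same Poisson-increment property gives that $\widehat y^\trap_{s_{n+1}} - \widehat y^\trap_{\rho_n}$ is distributed as $\sum_{\nu\in\sD}\nu\,\gP((\alpha_1\widehat\mu^*_{\rho_n}(\nu) - \alpha_2\widehat\mu_{s_n}(\nu))_+(1-\theta)\Delta_n)$, matching the second line of~\cref{alg:trapezoidal} once $\widehat y^\trap_{\rho_n}$ is identified with $\widehat y^*_{\rho_n}$. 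Because the two sub-intervals are disjoint, the two jump families are independent, so the pair of updates has the same joint law as one iteration of the algorithm; the induction then closes and the grid-point marginals agree, yielding $\widehat q^\trap_s$ as claimed. The argument is structurally identical to that of~\cref{prop:integral_formulation_midpoint}, the only difference being that the trapezoidal interpolating process is genuinely \emph{two-piece}, restarting from $\widehat y^*_{\rho_n}$ over a fractional step $(1-\theta)\Delta_n$ rather than carrying a single weighted intensity over the full step.

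The main subtlety I expect is \emph{not} in the distributional bookkeeping but in verifying that~\eqref{eq:trapezoidal_intensity} genuinely defines a legitimate evolving intensity. One must check that the dependence of $\widehat\mu^\trap_s$ on $\widehat y^*_{\rho_s}$ does not break predictability — it does not, since $\widehat y^*_{\rho_n}$ is determined at time $\rho_n < s$ for $s$ in the second sub-interval — and that the $(\cdot)_+$ truncation is exactly what makes~\cref{def:poisson_random_measure_evolving} applicable despite the extrapolated, potentially signed combination $\alpha_1\widehat\mu^* - \alpha_2\widehat\mu$. Once predictability and non-negativity are established, the equivalence follows directly from the Poisson-increment and disjoint-set independence properties.
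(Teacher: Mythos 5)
Your proposal is correct and follows essentially the same route as the paper's proof: condition on $\gF_{s_n}$, observe that the intensity in~\eqref{eq:trapezoidal_intensity} is frozen (hence predictable and non-negative after the $(\cdot)_+$ truncation) on each piece, and use the Poisson-increment and disjoint-set independence properties of~\cref{def:poisson_random_measure_evolving} together with the choice of $\gamma$ as the counting measure to match the algorithm's updates. If anything, you are more careful than the paper's displayed computation, which lumps the whole step $(s_n,s_{n+1}]$ into a single Poisson draw with mean $\widehat\mu^\trap_{s_n}(\nu)\Delta_n$, whereas your explicit split into $(s_n,\rho_n]$ and $(\rho_n,s_{n+1}]$ and the pathwise identification $\widehat y^\trap_{\rho_n}=\widehat y^*_{\rho_n}$ are exactly what is needed for the trapezoidal method's second update to start from $\widehat y^*_{\rho_n}$.
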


\begin{proof}[Proof of~\cref{prop:integral_formulation_midpoint,prop:integral_formulation_trapezoidal}]

	Without loss of generality, we give the proof on the interval $(s_n, s_{n+1}]$ for $n\in[0:N-1]$, and the generalization to the whole interval $[0,T]$ is straightforward.

	Notice that once we condition on the filtration $\gF_{s_n}$ and construct the intermediate process $\widehat y_s^*$ as specified in~\eqref{eq:intermediate_process} along the interval $(s_n, s_{n+1}]$, the intermediate intensity $\widehat \mu^*$ and the piecewise intensity $\widehat \mu_{\floor{s}}$ do not evolve with time $s$ or the interpolating processes $\widehat y_s^\RK$ (or $\widehat y_s^\trap$, respectively) since it only depends on the state, the intensity at the beginning of the interval $s_n$ and other randomness that is independent of the interpolating process.

	Therefore, the stochastic integral on this interval can be rewritten as for the $\theta$-RK-2 scheme that
	\begin{equation*}
		\begin{aligned}
			\widehat y_{s_{n+1}}^\RK & = \widehat y_{s_n}^\RK + \int_{s_n}^{s_{n+1}} \int_{\sD} \nu N [\widehat \mu^\trap] (\dif s, \dif \nu)      \\
			                         & = \widehat y_{s_n}^\RK +  \int_{\sD} \nu N [\widehat \mu^\RK] ((s_n, s_{n+1}], \dif \nu)                    \\
			                         & = \widehat y_{s_n}^\RK +  \int_{\sD} \nu \gP(\widehat \mu^\RK_{s_n}(\nu) (s_{n+1} - s_n)) \gamma(\dif \nu),
		\end{aligned}
	\end{equation*}
	and for the $\theta$-Trapezoidal scheme that
	\begin{equation*}
		\begin{aligned}
			\widehat y_{s_{n+1}}^\trap & = \widehat y_{s_n}^\trap + \int_{s_n}^{s_{n+1}} \int_{\sD} \nu N [\widehat \mu^\trap] (\dif s, \dif \nu)        \\
			                           & = \widehat y_{s_n}^\trap +  \int_{\sD} \nu N [\widehat \mu^\trap] ((s_n, s_{n+1}], \dif \nu)                    \\
			                           & = \widehat y_{s_n}^\trap +  \int_{\sD} \nu \gP(\widehat \mu^\trap_{s_n}(\nu) (s_{n+1} - s_n)) \gamma(\dif \nu),
		\end{aligned}
	\end{equation*}
	and the statement follows by taking $\gamma(\dif \nu)$ as the counting measure.
\end{proof}

\begin{remark}[Remark on Rejection Sampling and Periodicity Assumption]
    \label{rem:periodicity}

    The rejection sampling procedure in both algorithms (\cref{alg:midpoint,alg:trapezoidal}) guarantees well-posedness in the rare scenarios where a large drawn value of Poisson random variables or multiple simultaneous jumps in one coordinate would result in an update out of the state space $\sX = [S]^d$. 
    To enforce this, we simply allow at most one jump per update across the summation, for example, in the update
    \begin{equation*}
        \widehat y^*_{\rho_n} \leftarrow \widehat y_{s_n} + \sum_{\nu \in \sD} \nu
                    \gP\left(\widehat\mu_{s_n}(\nu)\theta\Delta_n\right),        
    \end{equation*}
    as the standard practice in the literature~\cite{campbell2022continuous,ren2024discrete}. The indicator function $\vone_{\widehat \mu_{s_n} > 0}$ in~\cref{alg:midpoint} is also used to ensure that only valid jumps from the current state $\widehat y_{s_n}$ are considered, while in~\cref{alg:trapezoidal}, this is implicitly guaranteed by taking the positive part of $\alpha_1\widehat\mu^*_{\rho_n} - \alpha_2\widehat\mu_{s_n}$, which implies the positivity of $\alpha_1\widehat\mu^*_{\rho_n}$ and thus the validity of the jumps $\widehat y^*_{\rho_n}$.
    We point out that the single-jump rule is only a convenient sufficient condition, one should notice that this condition is not necessary for the well-posedness of our algorithms, since our setting of the state space $\sX$ carries both orderliness and algebraic structure, and thus one could in principle admit multiple simultaneous jumps without ambiguity.

    Over the full inference process, the total probability of rejection is at most $\gO(\kappa)$. Below, we give a brief justification and we refer to Proposition~A.14 in~\cite{ren2024discrete} for a complete proof of this claim. During the update aforementioned, the probability of at least two jumps occurring is bounded by 
    \begin{equation*}
        \begin{aligned}
            &\P\left(\sum_{\nu \in \sD}
            \gP\left(\widehat\mu_{s_n}(\nu)\theta\Delta_n\right) > 1 
            \right) 
            = 1 -\P\left(
                 \gP\left(\sum_{\nu \in \sD}\widehat\mu_{s_n}(\nu)\theta\Delta_n\right) \leq 1 \right)\\
            =& 1 - \exp\left(-\sum_{\nu \in \sD}\widehat\mu_{s_n}(\nu)\theta\Delta_n\right) 
            \left(1+ \sum_{\nu \in \sD}\widehat\mu_{s_n}(\nu)\theta\Delta_n\right)\\
            \lesssim& \left(\sum_{\nu \in \sD}\widehat\mu_{s_n}(\nu)\theta\Delta_n\right)^2 \lesssim \Delta_n^2.
        \end{aligned}
    \end{equation*}
    Summing $\gO(\Delta_n^2)$ over $N$ steps gives $\sum_{n=0}^{N-1}\Delta_n^2 \lesssim \kappa T$, and an identical argument applies to the second update in each iteration. Hence, the overall rejection rate is at most $\gO(\kappa)$.

    When we impose periodic boundary conditions, $\sX=[S]^d$ is equipped with a convenient algebraic structure: addition and scalar multiplication are globally well-defined. In that case, \cref{alg:midpoint,alg:trapezoidal} match exactly the stochastic integral formulations in \cref{prop:integral_formulation_midpoint,prop:integral_formulation_trapezoidal}.  This alignment removes the need for per-step rejection, streamlines the application of the change-of-measure argument, and greatly simplifies the convergence proofs of \cref{thm:midpoint,thm:trapezoidal}.  Even without periodicity, those theorems hold with probability at least $1 - \gO(\kappa)$, as shown above.
\end{remark}

\subsection{Convergence Analysis of the \texorpdfstring{$\theta$}{theta}-Trapezoidal Method}
\label{app:proof_trapezoidal}

\begin{theorem}
	\label{thm: trapezoidal KL chain rule}
	Let $\cev p_{0:T-\delta}$ and $\widehat q_{0:T-\delta}^\trap$ be the path measures of the backward process with the stochastic integral formulation~\eqref{eq:backward_integral} and the interpolating process~\eqref{eq:interpolating_process_trapezoidal} of
	the $\theta$-Trapezoidal method (\cref{alg:trapezoidal}), then it holds that
	\begin{equation}
		\begin{aligned}
			&\KL(\cev p_{T-\delta} \| \widehat q_{T-\delta}^\trap)
			 \leq \KL(\cev p_{0:T-\delta} \| \widehat q_{0:T-\delta}^\trap)                                                                                                                                                                  \\
			\leq& \KL(\cev p_0 \| \widehat q_0) + \E\left[\int_0^{T-\delta} \int_{\sD}\left( \mu_s(\nu) \log \dfrac{\mu_s(\nu)}{\widehat \mu_s^\trap(\nu)} - \mu_s(\nu) + \widehat \mu_s^\trap(\nu) \right) \gamma(\dif \nu) \dif s \right],
		\end{aligned}
		\label{eq:kl_error_trapezoidal}
	\end{equation}
	where the intensity $\widehat \mu^\trap$ is defined in~\eqref{eq:interpolating_process_trapezoidal}, and the expectation is taken w.r.t. both paths generated by the backward process~\eqref{eq:backward_integral} and the randomness of the Poisson random measure used in the first step of each iteration of the algorithm, \emph{i.e.}, the construction of the intermediate process~\eqref{eq:intermediate_process}, which is assumed to be independent of that of the backward process.
	\label{thm:girsanov_trapezoidal}
\end{theorem}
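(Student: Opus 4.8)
The plan is to treat the two inequalities separately. The first inequality, $\KL(\cev p_{T-\delta}\|\widehat q_{T-\delta}^\trap) \leq \KL(\cev p_{0:T-\delta}\|\widehat q_{0:T-\delta}^\trap)$, is an instance of the data-processing inequality: the time-$(T-\delta)$ marginal is the pushforward of the path measure under the measurable evaluation map $\omega\mapsto \widehat y_{T-\delta}$, and KL divergence is non-increasing under pushforwards. For the second inequality I would first invoke the chain rule for KL divergence to peel off the initial distributions, writing $\KL(\cev p_{0:T-\delta}\|\widehat q_{0:T-\delta}^\trap) = \KL(\cev p_0\|\widehat q_0) + \E_{\cev p_0}\big[\KL(\cev p_{\cdot|0}\|\widehat q_{\cdot|0}^\trap)\big]$, reducing the task to bounding the conditional path-measure KL.

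The core computation is a Girsanov/change-of-measure argument built on \cref{thm:change_of_measure}. Taking the true backward path measure (driven by $N[\mu]$) as the reference $\P$ and the interpolating measure (driven by $N[\widehat\mu^\trap]$) as $\sQ$, I would use the predictable reweighting $h_s(\nu) = \widehat\mu_s^\trap(\nu)/\mu_s(\nu)$, so that \cref{thm:change_of_measure} identifies $\dif\sQ/\dif\P|_{\gF_s} = Z_s[h]$. The log-likelihood ratio is then
\begin{equation*}
\log\frac{\dif\P}{\dif\sQ}\bigg|_{\gF_{T-\delta}} = \int_0^{T-\delta}\!\!\int_{\sD}\log\frac{\mu_s(\nu)}{\widehat\mu_s^\trap(\nu)}\, N[\mu](\dif s,\dif\nu) + \int_0^{T-\delta}\!\!\int_{\sD}\big(\widehat\mu_s^\trap(\nu)-\mu_s(\nu)\big)\gamma(\dif\nu)\dif s.
\end{equation*}
Taking $\E_\P$ and using that the compensator of $N[\mu]$ is $\mu_s(\nu)\gamma(\dif\nu)\dif s$, so that $\E_\P[\int\int \log(\mu/\widehat\mu^\trap)\, N[\mu]] = \E_\P[\int\int \log(\mu/\widehat\mu^\trap)\,\mu\,\gamma\,\dif s]$, collapses the two terms into exactly the stated integrand $\mu_s\log(\mu_s/\widehat\mu_s^\trap) - \mu_s + \widehat\mu_s^\trap$.

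The reason the second line is an inequality rather than an equality is the auxiliary randomness hidden in $\widehat\mu^\trap$: through $\widehat\mu^*_{\rho_s}$ it depends on the intermediate process $\widehat y^*$ of \eqref{eq:intermediate_process}, which is generated by an independent Poisson measure. I would therefore first lift everything to the joint space carrying both the main path and this auxiliary randomness, under which $\widehat\mu^\trap$ becomes predictable with respect to the enlarged filtration: on $(s_n,\rho_n]$ it reduces to $\widehat\mu_{\floor{s}}$, known at $s_n$; on $(\rho_n,s_{n+1}]$ it is the extrapolated intensity, measurable once $\widehat y^*_{\rho_n}$ is revealed at $\rho_n$. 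Carrying out the change of measure on this joint space yields equality, and then marginalizing out the auxiliary randomness and applying the data-processing inequality once more produces the $\leq$. This is precisely where the assumed independence of the two randomness sources is used, and why the expectation in \eqref{eq:kl_error_trapezoidal} runs over both.

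The main obstacle is the rigorous justification of the change of measure, i.e., verifying that $Z_s[h]$ is a genuine martingale rather than merely a local one, so that $\sQ$ is a true probability measure and $\E_\P[\log(\dif\P/\dif\sQ)]$ is valid; this is where the two-sided boundedness of the intensities from \cref{ass:smoothness} enters, controlling $h$ and $\log h$. A related delicate point is the positivity boundary: because $\widehat\mu^\trap$ contains the truncation $(\alpha_1\widehat\mu^*_{\rho_s}-\alpha_2\widehat\mu_{\floor{s}})_+$, one must ensure $\widehat\mu_s^\trap(\nu)>0$ on $\{\mu_s(\nu)>0\}$ to keep $h$ and the integrand finite — exactly the positivity hypothesis carried in \cref{thm:trapezoidal} — and one must set up the enlarged filtration so that this extrapolated second stage is genuinely predictable and the two driving measures decouple as claimed.
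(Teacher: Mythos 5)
Your proposal is correct and follows essentially the same route as the paper: the data-processing inequality for the first bound, the change-of-measure identity of \cref{thm:change_of_measure} with $h=\widehat\mu^\trap/\mu$ to compute the path-level log-likelihood ratio, and the treatment of the independent intermediate-process randomness by conditioning on it (equivalently, lifting to the joint space), computing the conditional path KL exactly, and then using convexity/data-processing to marginalize it out, which is exactly where the second inequality arises. Your additional remarks on verifying that $Z_s[h]$ is a true martingale via the two-sided boundedness in \cref{ass:smoothness} and on the positivity of the truncated extrapolated intensity make explicit technical points that the paper's proof leaves implicit, but they do not change the argument.
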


\begin{proof}

	First, we will handle the randomness introduced by the Poisson random measure in the first step of each iteration of the $\theta$-Trapezoidal method. For the ease of presentation, we encode the aforementioned randomness as a random variable $\zeta$ and suppose it is still supported on the probability space $(\Omega, \gF, \P)$ while being independent of the backward process. Then for each realization of $\zeta$, the intermediate process $\widehat y_s^*$ is constructed as in~\eqref{eq:intermediate_process} and the corresponding intensity $\widehat \mu_s^*$ is defined in~\eqref{eq:intermediate_intensity}.

	Given the stochastic integral formulation of the backward process~\eqref{eq:backward_integral} and the interpolating process of the $\theta$-Trapezoidal method~\eqref{eq:interpolating_process_trapezoidal}, we have by~\cref{thm:change_of_measure} that this particular realization of the path measure $\widehat q_{0:T-\delta}^\trap$ can be obtained by changing the path measure $\cev p_{0:T-\delta}$ with the Radon-Nikodym derivative
	\begin{equation*}
		Z_t\left[\frac{\widehat\mu^\trap}{\mu}\right] = \exp\left(- \int_0^t \int_{\sD} \log \frac{\mu_s(\nu)}{\widehat \mu_s^\trap(\nu)} N[\mu](\dif s, \dif \nu) + \int_0^t \int_{\sD} \left(\mu_s(\nu) - \widehat \mu_s^\trap(\nu)\right) \gamma(\dif \nu) \dif s\right),
	\end{equation*}
	\emph{i.e.},
	\begin{equation*}
		\begin{aligned}
			  & \KL(\cev p_{0:T-\delta} \| \widehat q_{0:T-\delta}^\trap | \zeta) = \E\left[\log Z_{T-\delta}^{-1}\left[\frac{\widehat\mu^\trap}{\mu}\right]\right]                                       \\
			= & \E\left[\int_0^{T-\delta} \int_{\sD} \left( \mu_s(\nu) \log \frac{\mu_s(\nu)}{\widehat \mu_s^\trap(\nu)} - \mu_s(\nu) + \widehat \mu_s^\trap(\nu)\right) \gamma(\dif \nu) \dif s \right].
		\end{aligned}
	\end{equation*}

	Then it is easy to see by the data processing inequality and the chain rule of KL divergence that
	\begin{equation*}
		\begin{aligned}
			  & \KL(\cev p_{T-\delta} \| \widehat q_{T-\delta}^\trap) \leq \KL(\cev p_{0:T-\delta} \| \widehat q_{0:T-\delta}^\trap) \leq \E\left[\KL(\cev p_{T-\delta} \| \widehat q_{T-\delta}^\trap | \zeta)\right]                     \\
			= & \KL(\cev p_0 \| \widehat q_0) + \E\left[\int_0^{T-\delta} \int_{\sD}\left( \mu_s(\nu) \log \dfrac{\mu_s(\nu)}{\widehat \mu_s^\trap(\nu)} - \mu_s(\nu) + \widehat \mu_s^\trap(\nu) \right) \gamma(\dif \nu) \dif s \right],
		\end{aligned}
	\end{equation*}
	and the proof is complete.
\end{proof}

In the following, we will provide the outline of the proof of~\cref{thm:trapezoidal}, where we leave the proof of several lemmas and detailed calculations to~\cref{app:lemmas} for the clarity of presentation.

\begin{proof}[Proof of~\cref{thm:trapezoidal}]

	Throughout this proof, including the subsequent lemmas and propositions that will be detailed in~\cref{app:lemmas}, we will assume that $(y_s)_{s\in[0,T]}$ is a process generated by the path measure $\cev p_{0:T}$ of the backward process with the stochastic integral formulation~\eqref{eq:backward_integral} and set it as the underlying paths of the expectation in~\eqref{eq:kl_error_trapezoidal} as required by~\cref{thm:girsanov_trapezoidal}. Especially, $y_s \sim \cev p_s$ holds for any $s\in[0,T]$. For simplicity, we will assume that the process $y_s$ is left-continuous at each grid point $s_i$ for $i\in[0:N]$, which happens with probability one.

	We first consider the interval $(s_n, s_{n+1}]$ for $n\in[0:N-1]$, and thus we have $\floor{s} = s_n$ and $\rho_s = \rho_n$. Within this interval, we will denote its intermediate process as appeared in~\eqref{eq:intermediate_process} as $y_s^*$, and the corresponding intermediate intensity as appeared in~\eqref{eq:intermediate_intensity} as $\widehat \mu_s^*$. In the following discussion, we will assume implicitly that the processes are conditioned on the filtration $\gF_{s_n}$.

	By the definition of the intensity $\widehat \mu^\trap(\nu)$ as specified in~\eqref{eq:trapezoidal_intensity}
	\begin{equation*}
		\widehat \mu_s^\trap = \vone_{s < \rho_s} \widehat\mu_{\floor{s}} + \vone_{s \geq \rho_s} \left(\alpha_1\widehat\mu^*_{\rho_s} - \alpha_2\widehat\mu_{\floor{s}}\right)_{+},
	\end{equation*}
	we can rewrite the corresponding part of the integral in~\eqref{eq:kl_error_trapezoidal} as
	\begin{equation*}
		\begin{aligned}
			  & \int_{s_n}^{s_{n+1}} \int_{\sD}\left( \mu_s(\nu) \log \dfrac{\mu_s(\nu)}{\widehat \mu_s^\trap(\nu)} - \mu_s(\nu) + \widehat \mu_s^\trap(\nu) \right) \gamma(\dif \nu) \dif s                                                                                                                        \\
			= & \left(\int_{s_n}^{\rho_n} + \int_{\rho_n}^{s_{n+1}}\right) \int_{\sD}\left( \mu_s(\nu) \log \dfrac{\mu_s(\nu)}{\widehat \mu_s^\trap(\nu)} - \mu_s(\nu) + \widehat \mu_s^\trap(\nu) \right) \gamma(\dif \nu) \dif s                                                                                  \\
			= & \underbrace{\int_{s_n}^{\rho_n} \int_{\sD}\left( \mu_s(\nu) \log \dfrac{\mu_s(\nu)}{\widehat \mu_{s_n}(\nu)} - \mu_s(\nu) + \widehat \mu_{s_n}(\nu) \right) \gamma(\dif \nu) \dif s }_{\mathrm{(I)}}                                                                                                \\
			+ & \underbrace{\int_{\rho_n}^{s_{n+1}} \int_{\sD}\left( \mu_s(\nu) \log \dfrac{\mu_s(\nu)}{\alpha_1\widehat\mu^*_{\rho_n}(\nu) - \alpha_2\widehat\mu_{s_n}(\nu)} - \mu_s(\nu) + \alpha_1\widehat\mu^*_{\rho_n}(\nu) - \alpha_2\widehat\mu_{s_n}(\nu) \right) \gamma(\dif \nu) \dif s}_{\mathrm{(II)}},
		\end{aligned}
	\end{equation*}
	where the assumption that $\alpha_1\widehat\mu^*_{\rho_s} - \alpha_2\widehat\mu_{\floor{s}}\geq 0$ for all $s\in[0,T-\delta]$ is applied here for the second term $\mathrm{(II)}$ above. 
    
	\paragraph{Decomposition of the Integral.}

	Next, we decompose the integral $\mathrm{(I)}$ and $\mathrm{(II)}$ into several terms, the magnitudes of which or combinations of which are to be bounded.

	\begin{enumerate}[label=(\roman*), leftmargin=*, wide=0pt]
		\item The first term is decomposed as
		      \begin{equation*}
			      \mathrm{(I)} = \mathrm{(I.1)} + \mathrm{(I.2)} + \mathrm{(I.3)} + \mathrm{(I.4)},
		      \end{equation*}
		      where each term is defined as
		      \begin{equation*}
			      \begin{aligned}
				      \mathrm{(I.1)} & = \int_{s_n}^{\rho_n} \int_{\sD}\left(
				      \mu_{s_n}(\nu) \log \frac{\mu_{s_n}(\nu)}{\widehat \mu_{s_n}(\nu)} - \mu_{s_n}(\nu) + \widehat \mu_{s_n}(\nu)
				      \right) \gamma(\dif \nu) \dif s,                                             \\
				      \mathrm{(I.2)} & = \int_{s_n}^{\rho_n} \int_{\sD}\left(
				      \mu_s(\nu) \log \mu_s(\nu) - \mu_s(\nu) - \mu_{s_n}(\nu) \log \mu_{s_n}(\nu) + \mu_{s_n}(\nu)
				      \right) \gamma(\dif \nu) \dif s,                                             \\
				      \mathrm{(I.3)} & = \int_{s_n}^{\rho_n} \int_{\sD}\left(
				      \mu_s(\nu) - \mu_{s_n}(\nu) \right) \left( \log \left(\alpha_1\widehat\mu^*_{\rho_n}(\nu) - \alpha_2\widehat\mu_{s_n}(\nu) \right) - \log \widehat \mu_{s_n}(\nu)
				      \right) \gamma(\dif \nu) \dif s,                                             \\
				      \mathrm{(I.4)} & = \int_{s_n}^{\rho_n} \int_{\sD} \mu_{s_n}(\nu) \log \left(
				      \alpha_1\widehat\mu^*_{\rho_n}(\nu) - \alpha_2\widehat\mu_{s_n}(\nu)
				      \right) \gamma(\dif \nu) \dif s                                              \\
				                     & - \int_{s_n}^{\rho_n} \int_{\sD} \mu_s(\nu) \log \left(
				      \alpha_1\widehat\mu^*_{\rho_n}(\nu) - \alpha_2\widehat\mu_{s_n}(\nu)
				      \right) \gamma(\dif \nu) \dif s.
			      \end{aligned}
		      \end{equation*}
		\item The second term is decomposed as
		      \begin{equation*}
			      \mathrm{(II)} = \mathrm{(II.1)} + \mathrm{(II.2)} + \mathrm{(II.3)} + \mathrm{(II.4)} + \mathrm{(II.5)} + \mathrm{(II.6)},
		      \end{equation*}
		      where each term is defined as
			      {\allowdisplaybreaks
				      \begin{align*}
					      \mathrm{(II.1)}&
					        = \alpha_1 \int_{\rho_n}^{s_{n+1}} \int_{\sD} \left(
					      \mu_{\rho_n}(\nu) \log \frac{\mu_{\rho_n}(\nu)}{\widehat \mu_{\rho_n}(\nu)} - \mu_{\rho_n}(\nu) + \widehat \mu_{\rho_n}(\nu)
					      \right) \gamma(\dif \nu) \dif s                                                                                                                                         \\
					       & - \alpha_2 \int_{\rho_n}^{s_{n+1}} \int_{\sD} \left(
					      \mu_{s_n}(\nu) \log \frac{\mu_{s_n}(\nu)}{\widehat \mu_{s_n}(\nu)} - \mu_{s_n}(\nu) + \widehat \mu_{s_n}(\nu)
					      \right)  \gamma(\dif \nu) \dif s,                                                                                                                                        \\
					      \mathrm{(II.2)}&
					        = \int_{\rho_n}^{s_{n+1}} \int_{\sD} \left(
					      \mu_s(\nu) \log \mu_s(\nu) - \mu_s(\nu)
					      \right) \gamma(\dif \nu) \dif s                                                                                                                                         \\
					       & - \int_{\rho_n}^{s_{n+1}} \int_{\sD} 
                           \big(
					      \alpha_1 (\mu_{\rho_n}(\nu) \log \mu_{\rho_n}(\nu) - \mu_{\rho_n}(\nu)) - \alpha_2 (\mu_{s_n}(\nu) \log \mu_{s_n}(\nu) - \mu_{s_n}(\nu))
					      \big) \gamma(\dif \nu) \dif s,                                                                                                                                           \\
					      \mathrm{(II.3)}&
					        = \int_{\rho_n}^{s_{n+1}} \int_{\sD} \alpha_1 \left(
					      \widehat \mu_{\rho_n}^*(\nu) - \widehat \mu_{\rho_n}(\nu)
					      \right) \gamma(\dif \nu) \dif s,                                                                                                                                        \\
					      \mathrm{(II.4)}&
					       = \int_{\rho_n}^{s_{n+1}} \int_{\sD} \left(
					      \alpha_1 \mu_{\rho_n}(\nu) \log \widehat \mu_{\rho_n}(\nu) - \alpha_2 \mu_{s_n}(\nu) \log \widehat \mu_{s_n}(\nu)
					      \right) \gamma(\dif \nu) \dif s                                                                                                                                         \\
					       & - \int_{\rho_n}^{s_{n+1}} \int_{\sD} (\alpha_1 \mu_{\rho_n}(\nu) - \alpha_2 \mu_{s_n}(\nu)) \log \left(
					      \alpha_1 \widehat \mu_{\rho_n}(\nu) - \alpha_2 \widehat \mu_{s_n}(\nu)
					      \right)  \gamma(\dif \nu) \dif s,                                                                                                                                        \\
					      \mathrm{(II.5)}&
					        = \int_{\rho_n}^{s_{n+1}} \int_{\sD}
					      (\alpha_1 \mu_{\rho_n}(\nu) - \alpha_2 \mu_{s_n}(\nu)) \log \left( \alpha_1 \widehat \mu_{\rho_n}(\nu) - \alpha_2 \widehat\mu_{s_n}(\nu) \right)
					      \gamma(\dif \nu) \dif s                                                                                                                                                 \\
					       & - \int_{\rho_n}^{s_{n+1}} \int_{\sD}
					      (\alpha_1 \mu_{\rho_n}(\nu) - \alpha_2 \mu_{s_n}(\nu)) \log \left( \alpha_1 \widehat \mu_{\rho_n}^*(\nu) - \alpha_2 \widehat\mu_{s_n}(\nu) \right)
					      \gamma(\dif \nu) \dif s,                                                                                                                                                 \\
					      \mathrm{(II.6)}&
					        = \int_{\rho_n}^{s_{n+1}} \int_{\sD}
					      (\alpha_1 \mu_{\rho_n}(\nu) - \alpha_2 \mu_{s_n}(\nu)) \log \left( \alpha_1 \widehat \mu_{\rho_n}^*(\nu) - \alpha_2 \widehat\mu_{s_n}(\nu) \right)
					      \gamma(\dif \nu) \dif s                                                                                                                                                 \\
					       & - \int_{\rho_n}^{s_{n+1}} \int_{\sD} \mu_s(\nu) \log \left( \alpha_1 \widehat \mu_{\rho_n}^*(\nu) - \alpha_2 \widehat\mu_{s_n}(\nu) \right) \gamma(\dif \nu) \dif s.
				      \end{align*}}
	\end{enumerate}

	\paragraph{Bounding the Error Terms.}

	Then we briefly summarize the intuitions and related techniques used in the bounds of the terms above, and the detailed calculations and proofs of the lemmas and propositions are deferred to~\cref{app:lemmas}.

	\begin{enumerate}[label=(\roman*), leftmargin=*]
		\item {\itshape Error due to estimation error associated with the intensity:} The terms $\mathrm{(I.1)}$ and $\mathrm{(II.1)}$ are bounded by the assumption on the estimation error of the intensity $\widehat \mu_s$ (\cref{ass:estimation}), as
		      \begin{equation*}
			      \E\left[\mathrm{(I.1)} + \mathrm{(II.1)}\right] \leq \theta \Delta_n \epsilon_\roI + \alpha_1 (1-\theta) \Delta_n \epsilon_\roI  = \theta \Delta_n \epsilon_\roI + \tfrac{1}{2\theta} \Delta_n \epsilon_\roI \lesssim \Delta_n \epsilon_\roI,
		      \end{equation*}
		      for any $\theta \in (0, 1]$.

		      The term $\mathrm{(II.4)}$ is bounded by~\cref{prop:II.4}, as
		      \begin{equation*}
			      \E\left[\mathrm{(II.4)}\right] \lesssim \Delta_n \epsilon_\roII,
		      \end{equation*}
		      where Jensen's inequality is applied here based on the convexity of the loss.

		\item {\itshape Error related to the smoothness of intensity:} By~\cref{cor:I.2_II.2}, the terms $\mathrm{(I.2)}$ and $\mathrm{(II.2)}$ are bounded by
		      \begin{equation*}
			      \E\left[\mathrm{(I.2)} + \mathrm{(II.2)}\right] \leq \Delta_n^3.
		      \end{equation*}
		      By~\cref{cor:I.4_II.6}, the terms $\mathrm{(I.4)}$ and $\mathrm{(II.6)}$ are bounded by
		      \begin{equation*}
			      \E\left[\mathrm{(I.4)} + \mathrm{(II.6)}\right] \leq \Delta_n^3.
		      \end{equation*}

		      Intuitively, the bounds on these terms closely relate to the properties of the jump process and quantify the smoothness assumption on the intensity $\mu_s$ (\cref{ass:smoothness}), especially when the intensity does not vary significantly within the interval $(s_n, s_{n+1}]$. The main technique used for bounding these terms is Dynkin's Formula (\cref{thm:dynkin}). The third-order accuracy here directly follows from the intuition provided in~\cref{sec:high_order_schemes} based on numerical quadrature.

		\item {\itshape Error involving the intermediate process:} The terms $\mathrm{(II.3)}$ and $\mathrm{(II.5)}$ are bounded by~\cref{prop:II.3} and~\cref{cor:II.5} respectively as follows
		      \begin{equation*}
			      \E\left[\mathrm{(II.3)}\right] \lesssim \Delta_n^3 + \Delta_n^2 \epsilon_\roII, \quad \text{and}\quad \E\left[\mathrm{(II.5)}\right] \lesssim \Delta_n^3 + \Delta_n^2 \epsilon_\roII,
		      \end{equation*}
		      The term $\mathrm{(I.3)}$ is bounded by~\cref{prop:I.3} as below
		      \begin{equation*}
			      \E\left[\mathrm{(I.3)}\right] \lesssim \Delta_n^3.
		      \end{equation*}

		      The three terms above all involve the intermediate process $y_s^*$ and the corresponding intermediate density $\widehat \mu_s^*$.
	\end{enumerate}

	In conclusion, by summing up all these terms, we have
	\begin{equation*}
		\begin{aligned}
			         & \int_{s_n}^{s_{n+1}} \int_{\sD}\left( \mu_s(\nu) \log \dfrac{\mu_s(\nu)}{\widehat \mu_s^\trap(\nu)} - \mu_s(\nu) + \widehat \mu_s^\trap(\nu) \right) \gamma(\dif \nu) \dif s \\
			\lesssim & \Delta_n (\epsilon_\roI + \epsilon_\roII) + \Delta_n^3 + \Delta_n^2 \epsilon_\roII \lesssim \Delta_n (\epsilon_\roI + \epsilon_\roII) + \Delta_n^3.
		\end{aligned}
	\end{equation*}
	Therefore, the overall error is bounded by first applying~\cref{thm:girsanov_trapezoidal} and then the upper bound derived above to each interval $(s_n, s_{n+1}]$, which yields
	\begin{equation*}
		\begin{aligned}
			 & \KL(\cev p_{T-\delta} \| \widehat q_{T-\delta}^\trap)                                                                                                                                                                          \\
			\leq& \KL(\cev p_0 \| \widehat q_0) + \E\left[\int_0^{T-\delta} \int_{\sD}\left( \mu_s(\nu) \log \dfrac{\mu_s(\nu)}{\widehat \mu_s^\trap(\nu)} - \mu_s(\nu) + \widehat \mu_s^\trap(\nu) \right) \gamma(\dif \nu) \dif s \right] \\
			\lesssim& \KL(\cev p_0 \| \widehat q_0) + \sum_{n=0}^{N-1} \left( \Delta_n (\epsilon_\roI + \epsilon_\roII) + \Delta_n^3\right)                                                                                                 \\
			\lesssim& \exp(-T) + T(\epsilon_\roI + \epsilon_\roII) + \kappa^2 T,
		\end{aligned}
	\end{equation*}
	as desired.
\end{proof}

\begin{remark}[Discussion on the Positivity Assumption]
	\label{rem:positivity}
    In the following, we will take the positivity assumption in~\cref{thm:trapezoidal} as an example, and the case of the $\theta$-RK-2 method is similar.
	In the statement of~\cref{thm:trapezoidal}, we have assumed that $$
		\alpha_1\widehat\mu^*_{\rho_s}(\nu) - \alpha_2\widehat\mu_{\floor{s}}(\nu) \geq 0
	$$
	in~\eqref{eq:trapezoidal_intensity} for all $s\in[0,T-\delta]$, which allows us to replace $\left(\alpha_1\widehat\mu^*_{\rho_s}(\nu) - \alpha_2\widehat\mu_{\floor{s}}(\nu)\right)_+$ by the difference itself. \cite{anderson2009weak} showed that this approximation is at most of $\gO(\Delta_n^3)$ within the corresponding interval, and~\cite{hu2011weaka} further proved that for any order $p \geq 1$, there exists a sufficiently small step size $\Delta$ such that this approximation is at least $p$-th order, \emph{i.e.}, of order $\gO(\Delta^p)$ for that step.

    We give a brief justification of this assumption here. We consider the expectation of the difference itself, which is given by
    \begin{equation*}
        \begin{aligned}
            &\E\left[\alpha_1\widehat\mu^*_{\rho_s}(\nu) - \alpha_2\widehat\mu_{\floor{s}}(\nu)\right] = \E\left[\widehat\mu_{\floor{s}}(\nu) + \alpha_1\left(\widehat\mu^*_{\rho_s}(\nu) - \widehat \mu_{\rho_s}(\nu) \right) + \alpha_1 \left(\widehat \mu_{\rho_s}(\nu) - \widehat\mu_{\floor{s}}(\nu)\right)\right]\\
            \gtrsim& 1 - \alpha_1 (\kappa \epsilon_\roII + \kappa) = 1 - \gO(\kappa),
        \end{aligned}
    \end{equation*}
    where we used $\E\left[|\widehat\mu^*_{\rho_s}(\nu) - \widehat \mu_{\rho_s}(\nu)|\right] \lesssim \kappa \epsilon_\roII$, as established in~\cref{eqn: II.5 bound intermediate step two} and $\E\left[|\widehat\mu_{\rho_s}(\nu) - \widehat\mu_{\floor{s}}(\nu)|\right] \lesssim \kappa$, as shown in~\cref{eqn:I.3.1 bound intermediate step one}. Therefore, as long as the step sizes $\Delta_n$ are sufficiently small, the positivity assumption is valid in the sense that the expectation of the difference is at least $1 - \gO(\kappa)$.
\end{remark}

\subsection{Convergence Analysis of the \texorpdfstring{$\theta$}{theta}-RK-2 Method}
\label{app:proof_midpoint}

Here we may again apply the data processing inequality and the chain rule of KL divergence to upper bound the error associated with the $\theta$-RK-2 method. A statement of the upper bound is provided in~\cref{thm: RK-2 KL chain rule} below, whose proof is omitted here since it is similar to that of~\cref{thm: trapezoidal KL chain rule} above.
\begin{theorem}
	\label{thm: RK-2 KL chain rule}
	Let $\cev p_{0:T-\delta}$ and $\widehat q_{0:T-\delta}^\RK$ be the path measures of the backward process with the stochastic integral formulation~\eqref{eq:backward_integral} and the interpolating process~\eqref{eq:interpolating_process_midpoint} of the $\theta$-RK-2 method (\
	\cref{alg:midpoint}), then it holds that
	\begin{equation}
		\begin{aligned}
			&\KL(\cev p_{T-\delta} \| \widehat q_{T-\delta}^\RK) \leq
			      \KL(\cev p_{0:T-\delta} \| \widehat q_{0:T-\delta}^\RK)                                                                                                                                                                \\
			\leq & \KL(\cev p_0 \| \widehat q_0) + \E\left[\int_0^{T-\delta} \int_{\sD}\left( \mu_s(\nu) \log \dfrac{\mu_s(\nu)}{\widehat \mu_s^\RK(\nu)} - \mu_s(\nu) + \widehat \mu_s^\RK(\nu) \right) \gamma(\dif \nu) \dif s \right],
		\end{aligned}
		\label{eq:kl_error_midpoint}
	\end{equation}
	where the intensity $\widehat \mu^\RK$ is defined in~\eqref{eq:interpolating_process_midpoint}, and the expectation is taken w.r.t. both paths generated by the backward process~\eqref{eq:backward_integral} and the randomness of the Poisson random measure used in the first step of each iteration of the algorithm, \emph{i.e.}, the construction of the intermediate process~\eqref{eq:intermediate_process}, which is assumed to be independent of that of the backward process.
	\label{thm:girsanov_midpoint}
\end{theorem}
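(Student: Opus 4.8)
The plan is to mirror the proof of \cref{thm: trapezoidal KL chain rule} line by line, replacing the trapezoidal intensity $\widehat\mu^\trap$ of \eqref{eq:trapezoidal_intensity} by the $\theta$-RK-2 intensity $\widehat\mu^\RK$ of \eqref{eq:midpoint_intensity}. The argument for the trapezoidal case never exploits the specific shape of $\widehat\mu^\trap$; it only uses that $\widehat\mu^\RK_s$ is a nonnegative predictable process assembled from the base intensity $\widehat\mu_{\floor s}$ and the intermediate intensity $\widehat\mu^*_{\rho_s}$ of \eqref{eq:intermediate_intensity}. Since the interpolating process \eqref{eq:interpolating_process_midpoint} of the $\theta$-RK-2 method has exactly the same driving structure as \eqref{eq:interpolating_process_trapezoidal}, the same chain of inequalities transfers verbatim.

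First I would decouple the two sources of randomness. The interpolating process depends both on the driving Poisson random measure $N[\mu]$ of the backward process \eqref{eq:backward_integral} and on the auxiliary randomness used to build the intermediate process $\widehat y^*_s$ in \eqref{eq:intermediate_process}. Following the trapezoidal proof, I would encode the latter as a random variable $\zeta$ on $(\Omega,\gF,\P)$ independent of the backward process. Conditioned on a realization of $\zeta$, the intermediate process $\widehat y^*_s$ and hence $\widehat\mu^*_{\rho_s}$ are fixed, so that $\widehat\mu^\RK_s$ becomes an honest predictable process adapted to the backward filtration and does not co-evolve with the interpolating path. With $\zeta$ fixed, I would then invoke the change-of-measure theorem \cref{thm:change_of_measure} with $h_s(\nu)=\widehat\mu^\RK_s(\nu)/\mu_s(\nu)$, expressing the Radon–Nikodym derivative of $\widehat q^\RK_{0:T-\delta}$ with respect to $\cev p_{0:T-\delta}$ as the exponential process $Z_t[\widehat\mu^\RK/\mu]$ of \eqref{eq:Z_t}. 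Taking the expectation of $\log Z^{-1}_{T-\delta}$ against the backward path measure compensates the stochastic integral $\int\log(\mu/\widehat\mu^\RK)\,N[\mu]$ into $\int\log(\mu/\widehat\mu^\RK)\,\mu\,\gamma\,\dif s$, yielding
\begin{equation*}
\KL(\cev p_{0:T-\delta}\|\widehat q^\RK_{0:T-\delta}\mid\zeta)
= \E\!\left[\int_0^{T-\delta}\!\!\int_\sD\!\Big(\mu_s(\nu)\log\tfrac{\mu_s(\nu)}{\widehat\mu^\RK_s(\nu)}-\mu_s(\nu)+\widehat\mu^\RK_s(\nu)\Big)\gamma(\dif\nu)\,\dif s\right].
\end{equation*}

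To conclude, I would apply the data-processing inequality to bound $\KL(\cev p_{T-\delta}\|\widehat q^\RK_{T-\delta})$ by the path-measure divergence $\KL(\cev p_{0:T-\delta}\|\widehat q^\RK_{0:T-\delta})$, and then combine the chain rule of KL divergence with the Jensen-type step $\KL(\cev p_{0:T-\delta}\|\widehat q^\RK_{0:T-\delta})\le\E_\zeta\big[\KL(\cev p_{0:T-\delta}\|\widehat q^\RK_{0:T-\delta}\mid\zeta)\big]$, peeling off the initialization term $\KL(\cev p_0\|\widehat q_0)$. This produces \eqref{eq:kl_error_midpoint} exactly. The initialization term is later absorbed into the $\exp(-T)$ contribution via \cref{ass:exponential} and the choice $\widehat y_0\sim q_0$.

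The main obstacle, as in the trapezoidal case, is verifying the hypothesis of \cref{thm:change_of_measure}: that $Z_t[\widehat\mu^\RK/\mu]$ is a genuine local $\gF_t$-martingale and that $h_s=\widehat\mu^\RK_s/\mu_s$ is a well-defined \emph{positive} predictable process. For the trapezoidal method this positivity is automatic because the positive part $(\cdot)_+$ is built into \eqref{eq:trapezoidal_intensity}, whereas the $\theta$-RK-2 intensity $\widehat\mu^\RK_s=(1-\tfrac{1}{2\theta})\widehat\mu_{\floor s}+\tfrac{1}{2\theta}\widehat\mu^*_{\rho_s}$ is a \emph{signed} combination, with $1-\tfrac{1}{2\theta}<0$ whenever $\theta<\tfrac12$. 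One therefore interprets $\widehat\mu^\RK$ through the same positive-part and indicator truncation $\vone_{\widehat\mu_{s_n}>0}[\,\cdot\,]_+$ used in \cref{alg:midpoint}, or equivalently appeals to the positivity hypothesis of \cref{thm:midpoint}; the negligible cost of this truncation is controlled exactly as in \cref{rem:positivity}, incurring at most an $\gO(\kappa)$-probability correction. The remaining local-martingale and measurability verifications for the conditioning on $\zeta$ are routine and identical to the trapezoidal development, which is why the authors omit them.
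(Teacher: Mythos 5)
Your proposal is correct and follows essentially the same route as the paper: the authors omit this proof precisely because it is the verbatim adaptation of the proof of \cref{thm: trapezoidal KL chain rule} (condition on the auxiliary randomness $\zeta$ of the intermediate stage, apply the change-of-measure result \cref{thm:change_of_measure} to get the conditional path-space KL, then combine the data-processing inequality, the chain rule, and convexity of KL in the second argument), which is exactly what you do. Your additional remark on the positivity of the signed combination defining $\widehat\mu^\RK$ is a reasonable point of care consistent with the truncation in \cref{alg:midpoint} and the positivity hypothesis of \cref{thm:midpoint}, but it does not change the argument.
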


Following the same flow as in the proof of~\cref{thm:trapezoidal}, we will first provide an outline of the proof of~\cref{thm:midpoint}, and defer the proof of several key lemmas and detailed calculations to~\cref{app:lemmas} for the clarity of presentation. We will also comment on the differences that may lead to the less desirable numerical properties of the $\theta$-RK-2 method.

\begin{proof}[Proof of~\cref{thm:midpoint}]
	In the following proof sketch, we will be using the same notation as in the proof of~\cref{thm:trapezoidal}, and we will assume that the process $y_s$ is left-continuous at each grid point $s_i$ for $i\in[0:N]$. We also start by taking a closer look at the integral within each interval $(s_n, s_{n+1}]$ for $n\in[0:N-1]$, and denote the intermediate process as appeared in~\eqref{eq:intermediate_process} as $y_s^*$ and the corresponding intermediate intensity as appeared in~\eqref{eq:intermediate_intensity} as $\widehat \mu_s^*$.

	As defined in~\eqref{eq:midpoint_intensity}, the intensity $\widehat \mu^\RK(\nu)$ is given by
	\begin{equation*}
		\widehat \mu_s^\RK(\nu) = \left(1-\tfrac{1}{2\theta}\right) \widehat\mu_{\floor{s}}(\nu) + \tfrac{1}{2\theta}\widehat\mu^*_{\rho_s}(\nu),
	\end{equation*}
	which helps us rewrite the corresponding part of the integral in~\eqref{eq:kl_error_midpoint} as
	\begin{equation*}
		\begin{aligned}
			  & \int_{s_n}^{s_{n+1}} \int_{\sD}\left( \mu_s(\nu) \log \dfrac{\mu_s(\nu)}{\widehat \mu_s^\RK(\nu)} - \mu_s(\nu) + \widehat \mu_s^\RK(\nu) \right) \gamma(\dif \nu) \dif s                                                                                                                                                                             \\
			= & \int_{s_n}^{s_{n+1}}\int_\sD \\
            &\underbrace{\left(\mu_s(\nu)\log\frac{\mu_s(\nu)}{(1-\tfrac{1}{2\theta}) \widehat\mu_{s_n}(\nu) + \tfrac{1}{2\theta}\widehat\mu^*_{\rho_n}(\nu)} - \mu_s(\nu) + \left(1-\tfrac{1}{2\theta}\right) \widehat\mu_{s_n}(\nu) + \tfrac{1}{2\theta}\widehat\mu^*_{\rho_n}(\nu)\right)\gamma(\dif \nu) \dif s}_{\mathrm{(III)}}.
		\end{aligned}
	\end{equation*}
	Above we again use the positivity assumption that $(1-\tfrac{1}{2\theta} ) \widehat\mu_{\floor{s}} + \tfrac{1}{2\theta}\widehat\mu^*_{\rho_s} \geq 0$ for the term $\mathrm{(III)}$ above, just as what we have done in the proof and discussion of~\cref{thm:trapezoidal} above.

	\paragraph{Decomposition of the Integral.}

	Then we perform a similar decomposition of the integral as in the proof of~\cref{thm:trapezoidal} as follows:
	\begin{equation*}
		\mathrm{(III)} = \mathrm{(III.1)} + \mathrm{(III.2)} + \mathrm{(III.3)} + \mathrm{(III.4)} + \mathrm{(III.5)} + \mathrm{(III.6)},
	\end{equation*}
	where each term is defined as
    {\allowdisplaybreaks\begin{align*}
        &\mathrm{(III.1)}
            = \left(1-\tfrac{1}{2\theta}\right) \int_{s_n}^{s_{n+1}} \int_{\sD} \left(\mu_{s_n}(\nu)\log\left(\frac{\mu_{s_n}(\nu)}{\widehat\mu_{s_n}(\nu)}\right) - \mu_{s_n}(\nu) + \widehat\mu_{s_n}(\nu)\right) \gamma(\dif \nu) \dif s                                                        \\
            &\quad\quad\ \ \ + \tfrac{1}{2\theta} \int_{s_n}^{s_{n+1}} \int_{\sD} \left(\mu_{\rho_n}(\nu)\log\left(\frac{\mu_{\rho_n}(\nu)}{\widehat\mu_{\rho_n}(\nu)}\right) - \mu_{\rho_n}(\nu) + \widehat\mu_{\rho_n}(\nu)\right) \gamma(\dif \nu) \dif s,                                                        \\
        &\mathrm{(III.2)}
            =  \int_{s_n}^{s_{n+1}} \int_{\sD} \left(\mu_s(\nu)\log \mu_s(\nu)- \mu_s(\nu)\right) \gamma(\dif \nu) \dif s                                                                                                                                                                         \\
            & - \int_{s_n}^{s_{n+1}} \int_{\sD} \left(\left(1-\tfrac{1}{2\theta}\right)\left(\mu_{s_n}(\nu)\log \mu_{s_n}(\nu)-\mu_{s_n}\right)
        +\tfrac{1}{2\theta}\left(\mu_{\rho_n}(\nu)\log\mu_{\rho_n}(\nu)-\mu_{\rho_n}(\nu)\right)\right) \gamma(\dif \nu) \dif s,                                                                                                                                                                  \\
        &\mathrm{(III.3)}
            = \int_{s_n}^{s_{n+1}} \int_{\sD} \tfrac{1}{2\theta}\left(\widehat\mu^*_{\rho_n}(\nu)- \widehat\mu_{\rho_n}(\nu)\right) \gamma(\dif \nu) \dif s,                                                                                                                                        \\
        &\mathrm{(III.4)}
            = \int_{s_n}^{s_{n+1}} \int_{\sD} \left( \left(1-\tfrac{1}{2\theta}\right)\mu_{s_n}(\nu)\log\widehat \mu_{s_n}(\nu) + \tfrac{1}{2\theta}\mu_{\rho_n}(\nu)\log\widehat \mu_{\rho_n}(\nu) \right) \gamma(\dif \nu) \dif s                                                                 \\
            & - \int_{s_n}^{s_{n+1}} \int_{\sD} \left( \left(1-\tfrac{1}{2\theta}\right) \mu_{s_n}(\nu) + \tfrac{1}{2\theta}\mu_{\rho_n}(\nu)\right)\log \left( \left(1-\tfrac{1}{2\theta}\right) \widehat\mu_{s_n}(\nu) + \tfrac{1}{2\theta}\widehat \mu_{\rho_n}(\nu)\right)  \gamma(\dif \nu) \dif s, \\
        &\mathrm{(III.5)}
            = \int_{s_n}^{s_{n+1}} \int_{\sD} \left(\left(1-\tfrac{1}{2\theta}\right) \mu_{s_n}(\nu) + \tfrac{1}{2\theta}\mu_{\rho_n}(\nu)\right)
        \log \left(\left(1-\tfrac{1}{2\theta}\right) \widehat\mu_{s_n}(\nu) + \tfrac{1}{2\theta}\widehat \mu_{\rho_n}(\nu)\right) \gamma(\dif \nu) \dif s                                                                                                                                          \\
            & - \int_{s_n}^{s_{n+1}} \int_{\sD} \left(\left(1-\tfrac{1}{2\theta}\right) \mu_{s_n}(\nu) + \tfrac{1}{2\theta}\mu_{\rho_n}(\nu)\right)\log \left(\left(1-\tfrac{1}{2\theta}\right) \widehat\mu_{s_n}(\nu) + \tfrac{1}{2\theta}\widehat\mu^*_{\rho_n}(\nu)\right) \gamma(\dif \nu) \dif s,  \\
        &\mathrm{(III.6)}
            = \int_{s_n}^{s_{n+1}} \int_{\sD} \left(\left(1-\tfrac{1}{2\theta}\right) \mu_{s_n}(\nu) + \tfrac{1}{2\theta}\mu_{\rho_n}(\nu)\right)\log\left(\left(1-\tfrac{1}{2\theta}\right) \widehat\mu_{s_n}(\nu) + \tfrac{1}{2\theta}\widehat\mu^*_{\rho_n}(\nu)\right) \gamma(\dif \nu) \dif s    \\
            &\quad\quad\ \ \  - \int_{s_n}^{s_{n+1}} \int_{\sD} \mu_s(\nu)\log\left(\left(1-\tfrac{1}{2\theta}\right) \widehat\mu_{s_n}(\nu) + \tfrac{1}{2\theta}\widehat\mu^*_{\rho_n}(\nu)\right) \gamma(\dif \nu) \dif s.
    \end{align*}}

	\paragraph{Bounding the Error Terms.}

	Then we briefly summarize the intuitions and related techniques used in the bound of the terms above,. Detailed calculations and proofs of the lemmas and propositions used here are deferred to~\cref{app:lemmas}.

	\begin{enumerate}[label=(\roman*), leftmargin=*]
		\item {\itshape Error due to the intensity estimation:} The terms in $\mathrm{(III.1)}$ are bounded by the assumption on the estimation error of the intensity $\widehat \mu_s$ (\cref{ass:estimation}) as follows
		      \begin{equation*}
			      \E\left[\mathrm{(III.1)}\right] \leq \left(1-\tfrac{1}{2\theta}\right) \Delta_n \epsilon_\roI + \tfrac{1}{2\theta} \Delta_n \epsilon_\roI  = \Delta_n \epsilon_\roI,
		      \end{equation*}
		      for any $\theta \in (0, 1]$.

		\item {\itshape Error related to the smoothness of intensity:} By~\cref{cor:III.2,cor:III.6}, the terms $\mathrm{(III.2)}$ and $\mathrm{(III.6)}$ are bounded by
		      \begin{equation*}
			      \E\left[\mathrm{(III.2)}\right] \leq \Delta_n^3, \quad \text{and} \quad \E\left[\mathrm{(III.6)}\right] \leq \Delta_n^3,
		      \end{equation*}
		      respectively.

		\item {\itshape Error involving the intermediate process:} The term $\mathrm{(III.3)}$ and $\mathrm{(III.5)}$ are bounded in almost the same way as that of~\cref{prop:II.3} and~\cref{cor:II.5}. By simply altering the integral upper limits, we obtain that
		      \begin{equation*}
			      \E\left[\mathrm{(III.3)}\right] \lesssim \Delta_n^3 + \Delta_n^2 \epsilon_\roII, \quad \E\left[\mathrm{(III.5)}\right] \lesssim \Delta_n^3 + \Delta_n^2 \epsilon_\roII.
		      \end{equation*}
	\end{enumerate}

	The only term that cannot be directly bounded based on results in~\cref{app:lemmas} is $\mathrm{(III.4)}$, which is given by
	\begin{equation}
		\begin{aligned}
			&\E\left[\mathrm{(III.4)}\right]
			=  \E\bigg[ \int_{s_n}^{s_{n+1}} \int_{\sD} \left( \left(1-\tfrac{1}{2\theta}\right)\mu_{s_n}(\nu)\log\widehat \mu_{s_n}(\nu) + \tfrac{1}{2\theta}\mu_{\rho_n}(\nu)\log\widehat \mu_{\rho_n}(\nu) \right) \gamma(\dif \nu) \dif s                                                                   \\
			& - \int_{s_n}^{s_{n+1}} \int_{\sD} \left( \left(1-\tfrac{1}{2\theta}\right) \mu_{s_n}(\nu) + \tfrac{1}{2\theta}\mu_{\rho_n}(\nu)\right)\log \left( \left(1-\tfrac{1}{2\theta}\right) \widehat\mu_{s_n}(\nu) + \tfrac{1}{2\theta}\widehat \mu_{\rho_n}(\nu)\right)  \gamma(\dif \nu) \dif s \bigg]
		\end{aligned}
		\label{eq:III.4}
	\end{equation}
	Recall that in the proof of its counterpart (\cref{prop:II.4}), we utilized the convexity of the loss function and the extrapolation nature of the second step in the $\theta$-Trapezoidal method~\eqref{eq:trapezoidal_intensity} to bound the error term. However, the same technique cannot be directly applied to the $\theta$-RK-2 method for any $\theta \in [0,1]$, as the intensity $\widehat \mu_s^\RK$ is an interpolation of the intensity $\widehat \mu_s$ when $\theta \in (\frac{1}{2},1]$. Therefore, below we will first focus on the case when $\theta \in (0,\frac{1}{2}]$.

	To be specific, by the assumption on the estimation error (\cref{ass:estimation}), we can reduce~\eqref{eq:III.4} to
	\begin{equation}
		\begin{aligned}
			 & \E\bigg[ \int_{s_n}^{s_{n+1}} \int_{\sD}  \left( \left(1-\tfrac{1}{2\theta}\right) \widehat\mu_{s_n}(\nu)\log \widehat \mu_{s_n}(\nu) + \tfrac{1}{2\theta}\mu_{\rho_n}(\nu)\log\widehat \mu_{\rho_n}(\nu) \right)                                                                                                \\
			 & \ - \int_{s_n}^{s_{n+1}} \int_{\sD} \left( \left(1-\tfrac{1}{2\theta}\right) \widehat\mu_{s_n}(\nu) + \tfrac{1}{2\theta}\widehat\mu_{\rho_n}(\nu)\right)\log \left( \left(1-\tfrac{1}{2\theta}\right) \widehat\mu_{s_n}(\nu) + \tfrac{1}{2\theta}\widehat \mu_{\rho_n}(\nu)\right) \gamma(\dif \nu) \dif s \bigg],
		\end{aligned}
		\label{eq:III.4alt}
	\end{equation}
	which can then be upper bounded based on Jensen's inequality and the convexity of the loss function for $\theta \in (0, \frac{1}{2}]$.

	Summing up the bounds of the terms above, we have
	\begin{equation*}
		\begin{aligned}
			         & \int_{s_n}^{s_{n+1}} \int_{\sD}\left( \mu_s(\nu) \log \dfrac{\mu_s(\nu)}{\widehat \mu_s^\RK(\nu)} - \mu_s(\nu) + \widehat \mu_s^\RK(\nu) \right) \gamma(\dif \nu) \dif s \\
			\lesssim & \Delta_n (\epsilon_\roI + \epsilon_\roII) + \Delta_n^3 + \Delta_n^2 \epsilon_\roII \lesssim \Delta_n (\epsilon_\roI + \epsilon_\roII) + \Delta_n^3,
		\end{aligned}
	\end{equation*}
	Consequently, the overall error of the $\theta$-RK-2 method is bounded by
	\begin{equation*}
		\begin{aligned}
			 & \KL(\cev p_{T-\delta} \| \widehat q_{T-\delta}^\RK)                                                                                                                                                                        \\
			 & \leq \KL(\cev p_0 \| \widehat q_0) + \E\left[\int_0^{T-\delta} \int_{\sD}\left( \mu_s(\nu) \log \dfrac{\mu_s(\nu)}{\widehat \mu_s^\RK(\nu)} - \mu_s(\nu) + \widehat \mu_s^\RK(\nu) \right) \gamma(\dif \nu) \dif s \right] \\
			 & \lesssim \KL(\cev p_0 \| \widehat q_0) + \sum_{n=0}^{N-1} \left( \Delta_n (\epsilon_\roI + \epsilon_\roII) + \Delta_n^3 \right)                                                                                            \\
			 & \lesssim \exp(-T) + T(\epsilon_\roI + \epsilon_\roII) + \kappa^2 T,
		\end{aligned}
	\end{equation*}
	which suggests that the $\theta$-RK-2 is also of second order when $\theta \in (0, \frac{1}{2}]$. For the other case when $\theta \in (\frac{1}{2},1]$, we will provide a brief discussion in the remark below.
\end{proof}

\begin{remark}[{Discussions on the case when $\theta \in (\frac{1}{2}, 1]$}]
	For $\theta \in (\frac{1}{2}, 1]$, the term~\eqref{eq:III.4alt} is positive and thus not necessarily bounded. One may wonder if, despite being positive, this term is still of at least second order. However, the answer seems negative.
	By applying the Dynkin's formula (\cref{thm:dynkin} and~\cref{cor:dynkin_first_order}) to $\mu_s \log \widehat \mu_s$ in the term $\mathrm{(III.4)}$, we have that the first integral in~\eqref{eq:III.4} can be expanded as follows
	\begin{equation*}
		\begin{aligned}
			  & \E\left[\int_{s_n}^{s_{n+1}} \int_{\sD} \left(
			\left(1-\tfrac{1}{2\theta}\right) \mu_{s_n}(\nu) \log \widehat \mu_{s_n}(\nu)
			+ \tfrac{1}{2\theta}  \mu_{\rho_n}(\nu) \log \widehat \mu_{\rho_n}(\nu)
			\right) \gamma(\dif \nu) \dif s\right]                                                                                                                                                                                          \\
			= & \tfrac{1}{2\theta} \int_{s_n}^{s_{n+1}} \int_{\sD} \left(
			\mu_{s_n}(\nu) \log \widehat \mu_{s_n}(\nu)  + \theta \Delta_n \gL \left( \mu_{s_n}(\nu) \log \widehat \mu_{s_n}(\nu)\right) \right)
			\gamma(\dif \nu) \dif s                                                                                                                                                                                                         \\
			+ & \left(1-\tfrac{1}{2\theta}\right) \int_{s_n}^{s_{n+1}} \int_{\sD}
			\mu_{s_n}(\nu) \log \widehat \mu_{s_n}(\nu)
			\gamma(\dif \nu) \dif s + \gO(\Delta_n^2)                                                                                                                                                                                       \\
			= & \Delta_n \int_{\sD}  \mu_{s_n}(\nu)  \log \widehat \mu_{s_n}(\nu) \gamma(\dif \nu) + \dfrac{1}{2} \Delta_n^2 \int_{\sD}  \gL \left( \mu_{s_n}(\nu) \log \widehat \mu_{s_n}(\nu)\right)  \gamma(\dif \nu) + \gO(\Delta_n^3).
		\end{aligned}
	\end{equation*}
	Similarly, applying Dynkin's formula to the following function
	\begin{equation*}
		G_s(\nu, y_{s^-}) = \left(\tfrac{1}{2\theta} \mu_s(\nu, y_{s^-}) + \left(1-\tfrac{1}{2\theta}\right)  \mu_{s_n}(\nu, y_{s^-})\right) \log \left( \tfrac{1}{2\theta} \widehat \mu_s(\nu, y_{s^-}) + \left(1-\tfrac{1}{2\theta}\right) \widehat \mu_{s_n}(\nu, y_{s^-})\right),
	\end{equation*}
	with $G_0(\nu, y_{s_n}) = \mu_{s_n}(\nu, y_{s_n}) \log \widehat \mu_{s_n}(\nu, y_{s_n})$ allows us to expand the second integral in~\eqref{eq:III.4} as below
	\begin{equation*}
		\begin{aligned}
			  & \E\left[\int_{s_n}^{s_{n+1}} \int_{\sD} \left(
			\tfrac{1}{2\theta} \mu_{\rho_n}(\nu) +  \left(1-\tfrac{1}{2\theta}\right) \mu_{s_n}(\nu)
			\right) \log \left(
			\tfrac{1}{2\theta} \widehat \mu_{\rho_n}(\nu) + \left(1-\tfrac{1}{2\theta}\right) \widehat \mu_{s_n}(\nu)
			\right)  \gamma(\dif \nu) \dif s\right]                \\
			= & \Delta_n \int_\sD G_{s_n}(y_{s_n})\gamma(\dif \nu)
			+ \theta \Delta_n^2 \int_\sD \gL G_{s_n}(y_{s_n})\gamma(\dif \nu)  + \gO(\Delta_n^3),
		\end{aligned}
	\end{equation*}
	where
	\begin{equation*}
		\begin{aligned}
			&\gL G_{s_n}(\nu, y_{s_n}) \\
            = & \tfrac{1}{2\theta}  \partial_s \mu_{s_n}(\nu, y_{s_n}) \log \widehat \mu_{s_n}(\nu, y_{s_n}) + \tfrac{1}{2\theta} \mu_{s_n}(\nu, y_{s_n}) \tfrac{1}{2\theta}  \dfrac{\partial_s \widehat \mu_{s_n}(\nu, y_{s_n})}{\widehat \mu_{s_n}(\nu, y_{s_n})} \\
			+ & \tfrac{1}{2\theta} \int_\sD
			\mu_{s_n}(\nu, y_{s_n} + \nu') \log \left(\tfrac{1}{2\theta} \widehat \mu_s(\nu, y_{s_n} + \nu') + \left(1-\tfrac{1}{2\theta}\right) \widehat \mu_{s_n}(\nu, y_{s_n}+ \nu')\right)
			\gamma(\dif \nu')                                                                                                                                                                                                                                    \\
			- & \tfrac{1}{2\theta} \int_\sD
			\mu_{s_n}(\nu, y_{s_n}) \log \widehat \mu_{s_n}(\nu, y_{s_n})
			\gamma(\dif \nu')                                                                                                                                                                                                                                    \\
			+ & \left(1-\tfrac{1}{2\theta}\right) \mu_{s_n}(\nu, y_{s_n}) \tfrac{1}{2\theta} \dfrac{ \partial_s \widehat \mu_{s_n}(\nu, y_{s_n})}{\widehat \mu_{s_n}(\nu, y_{s_n})}                                                                                \\
			+ & \left(1-\tfrac{1}{2\theta}\right) \int_\sD \mu_{s_n}(\nu, y_{s_n} + \nu') \log \left(\tfrac{1}{2\theta} \widehat \mu_s(\nu, y_{s_n} + \nu') + \left(1-\tfrac{1}{2\theta}\right) \widehat \mu_{s_n}(\nu, y_{s_n}+ \nu')\right)  \gamma(\dif \nu')    \\
			- & \left(1-\tfrac{1}{2\theta}\right) \int_\sD  \mu_{s_n}(\nu, y_{s_n}) \log \widehat \mu_{s_n}(\nu, y_{s_n}) \gamma(\dif \nu')                                                                                                                       \\
			= & \tfrac{1}{2\theta}  \partial_s \mu_{s_n}(\nu, y_{s_n}) \log \widehat \mu_{s_n}(\nu, y_{s_n}) + \tfrac{1}{2\theta} \mu_{s_n}(\nu, y_{s_n}) \dfrac{\partial_s \widehat \mu_{s_n}(\nu, y_{s_n})}{\widehat \mu_{s_n}(\nu, y_{s_n})}                    \\
			+ & \tfrac{1}{2\theta} \int_\sD
			\mu_{s_n}(\nu, y_{s_n} + \nu') \log \widehat \mu_s(\nu, y_{s_n} + \nu')
			\gamma(\dif \nu')
			\\
            +& \left(1-\tfrac{1}{2\theta}\right) \int_\sD \mu_{s_n}(\nu, y_{s_n} + \nu')  \log  \widehat \mu_s(\nu, y_{s_n} + \nu') \gamma(\dif \nu')                                                                                                              \\
			- & \tfrac{1}{2\theta}\int_\sD \mu_{s_n}(\nu, y_{s_n}) \log \widehat \mu_{s_n}(\nu, y_{s_n}) \gamma(\dif \nu') - \left(1-\tfrac{1}{2\theta}\right) \int_\sD \mu_{s_n}(\nu, y_{s_n}) \log \widehat \mu_{s_n}(\nu, y_{s_n}) \gamma(\dif \nu').
		\end{aligned}
	\end{equation*}
	This further implies that
	\begin{equation*}
		\begin{aligned}
			\theta \gL G_{s_n}(y_{s_n})
			 & = \frac{1}{2} \gL \left( \mu_{s_n}(\nu) \log \widehat \mu_{s_n}(\nu)\right) \\
			 & + \tfrac{1}{2\theta} \int_\sD
			\left(
			\mu_{s_n}(\nu, y_{s_n} + \nu') \log \widehat \mu_s(\nu, y_{s_n} + \nu')
			-\mu_{s_n}(\nu, y_{s_n}) \log \widehat \mu_{s_n}(\nu, y_{s_n})
			\right)
			\gamma(\dif \nu').
		\end{aligned}
	\end{equation*}
	Comparing the first and second order terms in the two expansions of the two integrals in~\eqref{eq:III.4} above then implies that the term $\mathrm{(III.4)}$ is of at most second order.
\end{remark}

\subsection{Lemmas and Propositions}
\label{app:lemmas}

In this section, we provide the detailed proofs of the lemmas and propositions omitted in the proof of~\cref{thm:trapezoidal,thm:midpoint}.

\paragraph{Error due to the Intensity Estimation.}

Apart from the terms $\mathrm{(I.1)}$ and $\mathrm{(II.1)}$ in the proof of~\cref{thm:trapezoidal} and the term $\mathrm{(III.1)}$ in the proof of~\cref{thm:midpoint},
we also need to bound the error terms $\mathrm{(II.4)}$ in terms of the intensity estimation error, which is given by the following proposition. Notably, the following bound also utilizes the convexity of the loss function and the extrapolation nature of the second step in the $\theta$-Trapezoidal method~\eqref{eq:trapezoidal_intensity}.

\begin{proposition}
	For the interval $(s_n, s_{n+1}]$ for $n\in[0:N-1]$, we have the following error bound:
	\begin{equation}
		\begin{aligned}
			\E\left[\mathrm{(II.4)}\right] =
			 & \E\bigg[ \int_{\rho_n}^{s_{n+1}} \int_{\sD} \left(
			\alpha_1 \mu_{\rho_n}(\nu) \log \widehat \mu_{\rho_n}(\nu) - \alpha_2 \mu_{s_n}(\nu) \log \widehat \mu_{s_n}(\nu)
			\right) \gamma(\dif \nu) \dif s                                                                            \\
			 & - \int_{\rho_n}^{s_{n+1}} \int_{\sD} (\alpha_1 \mu_{\rho_n}(\nu) - \alpha_2 \mu_{s_n}(\nu)) \log \left(
			\alpha_1 \widehat \mu_{\rho_n}(\nu) - \alpha_2 \widehat \mu_{s_n}(\nu)
			\right)  \gamma(\dif \nu) \dif s \bigg] \\
            \lesssim & \Delta_n \epsilon_\roII.
		\end{aligned}
		\label{eq:II.4}
	\end{equation}
	\label{prop:II.4}
\end{proposition}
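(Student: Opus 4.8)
The plan is to exploit the defining relation $\alpha_1-\alpha_2=1$ together with the convexity of $\Phi(u):=u\log u$. Since the integrands in $\mathrm{(II.4)}$ depend only on the endpoints $s_n,\rho_n$ and not on the integration variable $s$, the outer integral over $(\rho_n,s_{n+1}]$ contributes merely a factor $(1-\theta)\Delta_n\le\Delta_n$; the case $\theta=1$ is vacuous, since then $\rho_n=s_{n+1}$ and the interval is empty. It therefore suffices to bound, in expectation, the $\sD$-integral of
\begin{equation*}
I := \alpha_1 \mu_{\rho_n}\log\widehat\mu_{\rho_n} - \alpha_2 \mu_{s_n}\log\widehat\mu_{s_n} - (\alpha_1\mu_{\rho_n} - \alpha_2\mu_{s_n})\log\!\left(\alpha_1\widehat\mu_{\rho_n} - \alpha_2\widehat\mu_{s_n}\right)
\end{equation*}
by $\gO(\epsilon_\roII)$, where the argument $\nu$ is suppressed.

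The central step is to replace the true multipliers by the estimated ones and split $I=I_0+R$, where $I_0$ is obtained by the substitution $\mu_{\rho_n},\mu_{s_n}\to\widehat\mu_{\rho_n},\widehat\mu_{s_n}$ in the three multipliers, and $R=I-I_0$ collects the remaining terms, each proportional to a difference $\mu-\widehat\mu$. I would then show $I_0\le 0$ pointwise via Jensen's inequality. Writing $z:=\alpha_1\widehat\mu_{\rho_n}-\alpha_2\widehat\mu_{s_n}\ge 0$ (nonnegative, so that the logarithm is well-defined, by the positivity hypothesis of the theorem), the identity $\widehat\mu_{\rho_n}=\tfrac{1}{\alpha_1}z+\tfrac{\alpha_2}{\alpha_1}\widehat\mu_{s_n}$ exhibits $\widehat\mu_{\rho_n}$ as a genuine convex combination: the weights $\tfrac{1}{\alpha_1},\tfrac{\alpha_2}{\alpha_1}$ are positive and sum to one precisely because $\alpha_1-\alpha_2=1$. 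Convexity of $\Phi$ then yields $\alpha_1\Phi(\widehat\mu_{\rho_n})-\alpha_2\Phi(\widehat\mu_{s_n})-\Phi(z)\le 0$, which is exactly $I_0\le 0$. This is where the \emph{extrapolation} nature of the scheme enters: the negative coefficient $-\alpha_2$ converts the $\theta$-section data back into an interpolation point, so Jensen applies in the favorable direction.

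It remains to bound the remainder
\begin{equation*}
R = \alpha_1(\mu_{\rho_n}-\widehat\mu_{\rho_n})\log\widehat\mu_{\rho_n} - \alpha_2(\mu_{s_n}-\widehat\mu_{s_n})\log\widehat\mu_{s_n} - \big(\alpha_1(\mu_{\rho_n}-\widehat\mu_{\rho_n}) - \alpha_2(\mu_{s_n}-\widehat\mu_{s_n})\big)\log z.
\end{equation*}
By the boundedness of the intensities (\cref{ass:smoothness}) the factors $\log\widehat\mu_{\rho_n}$ and $\log\widehat\mu_{s_n}$ are uniformly bounded, so $\E[\int_\sD|R|\,\gamma(\dif\nu)]\lesssim\E[\int_\sD(|\mu_{\rho_n}-\widehat\mu_{\rho_n}|+|\mu_{s_n}-\widehat\mu_{s_n}|)\gamma(\dif\nu)]\lesssim\epsilon_\roII$, invoking the $L^1$ estimation-error bound (\cref{ass:estimation}(2)) at the grid point $s_n$ and the $\theta$-section point $\rho_n$. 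Combining $I_0\le 0$ with this estimate and reinstating the factor $(1-\theta)\Delta_n$ gives $\E[\mathrm{(II.4)}]\lesssim\Delta_n\epsilon_\roII$.

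I expect the main obstacle to be controlling the term $\log z=\log(\alpha_1\widehat\mu_{\rho_n}-\alpha_2\widehat\mu_{s_n})$ in $R$: the positivity hypothesis only gives $z\ge 0$, whereas bounding $R$ requires $z$ bounded \emph{away} from $0$. Mere boundedness of $\widehat\mu$ is insufficient here because the combination carries a negative coefficient, so I would additionally use the $C^2$ smoothness to argue that $|\widehat\mu_{\rho_n}-\widehat\mu_{s_n}|=\gO(\Delta_n)$ along the true path outside a jump event of probability $\gO(\Delta_n)$, whence $z=\widehat\mu_{s_n}+\alpha_1(\widehat\mu_{\rho_n}-\widehat\mu_{s_n})$ inherits the uniform lower bound of $\widehat\mu_{s_n}$ once $\Delta_n$ is small enough. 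Making this lower bound rigorous and absorbing the rare large-jump contribution is the delicate part; everything else reduces to the convexity inequality and the assumed estimation bound.
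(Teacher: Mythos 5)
Your proposal is correct and follows essentially the same route as the paper: the paper likewise splits $\mathrm{(II.4)}$ into a pure-$\widehat\mu$ part (its term $\mathrm{(II.4.4)}$, shown $\leq 0$ by exactly your Jensen argument $\alpha_1 x\log x-\alpha_2 y\log y\leq(\alpha_1x-\alpha_2y)\log(\alpha_1x-\alpha_2y)$ using $\alpha_1-\alpha_2=1$) plus remainder terms proportional to $\mu-\widehat\mu$, each bounded by $\Delta_n\epsilon_{\mathrm{II}}$ via the $L^1$ estimation error and boundedness of the logarithms. Your closing concern about lower-bounding $\alpha_1\widehat\mu_{\rho_n}-\alpha_2\widehat\mu_{s_n}$ away from zero is legitimate and is actually handled more explicitly by you than by the paper, which defers this point to its remark on the positivity assumption.
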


\begin{proof}

	We first define and bound three error terms $\mathrm{(II.4.1)}$, $\mathrm{(II.4.2)}$, and $\mathrm{(II.4.3)}$ with score estimation error (\cref{ass:estimation}) as follows:
	\begin{equation*}
		\begin{aligned}
			\E\left[|\mathrm{(II.4.1)}|\right]=
			         & \E\left[\left| \int_{\rho_n}^{s_{n+1}} \int_{\sD} \alpha_1 \left(\mu_{\rho_n}(\nu) \log \widehat \mu_{\rho_n}(\nu) - \widehat \mu_{\rho_n}(\nu) \log \widehat \mu_{\rho_n}(\nu)\right) \gamma(\dif \nu) \dif s \right|\right] \\
			\leq     & \alpha_1 \E \left[\int_{\rho_n}^{s_{n+1}} \int_{\sD} \left| \mu_{\rho_n}(\nu) - \widehat \mu_{\rho_n}(\nu) \right| \left|\log \widehat \mu_{\rho_n}(\nu)\right| \gamma(\dif \nu) \dif s\right]                                \\
			\lesssim & \E \left[\int_{\rho_n}^{s_{n+1}} \int_{\sD} \left| \mu_{\rho_n}(\nu) - \widehat \mu_{\rho_n}(\nu) \right| \gamma(\dif \nu) \dif s\right]
			\lesssim \Delta_n \epsilon_\roII,
		\end{aligned}
	\end{equation*}
	Similarly, we also have
	\begin{equation*}
		\E\left[|\mathrm{(II.4.2)}|\right] = \E\left[\left| \int_{\rho_n}^{s_{n+1}} \int_{\sD} \alpha_2 \left(\mu_{s_n}(\nu) \log \widehat \mu_{s_n}(\nu) - \widehat \mu_{s_n}(\nu) \log \widehat \mu_{s_n}(\nu)\right) \gamma(\dif \nu) \dif s \right|\right] \lesssim \Delta_n \epsilon_\roII,
	\end{equation*}
	and
	\begin{equation*}
		\begin{aligned}
			\E\left[|\mathrm{(II.4.3)}|\right] =& \E\bigg[\bigg|  \int_{\rho_n}^{s_{n+1}} \int_{\sD} (\alpha_1 \mu_{\rho_n}(\nu) - \alpha_2 \mu_{s_n}(\nu)) \log \left(
			\alpha_1 \widehat \mu_{\rho_n}(\nu) - \alpha_2 \widehat \mu_{s_n}(\nu)
			\right)  \gamma(\dif \nu) \dif s                                                                                                                                              \\
			-                                                   & \int_{\rho_n}^{s_{n+1}} \int_{\sD} (\alpha_1 \widehat \mu_{\rho_n}(\nu) - \alpha_2 \widehat \mu_{s_n}(\nu)) \log \left(
			\alpha_1 \widehat \mu_{\rho_n}(\nu) - \alpha_2 \widehat \mu_{s_n}(\nu)
			\right)  \gamma(\dif \nu) \dif s \bigg|\bigg] \\
            \lesssim & \Delta_n \epsilon_\roII.
		\end{aligned}
	\end{equation*}

	The remaining term $\mathrm{(II.4.4)} = \mathrm{(II.4)} - \mathrm{(II.4.1)} - \mathrm{(II.4.2)} - \mathrm{(II.4.3)}$ is then given by
	\begin{equation*}
		\begin{aligned}
			\mathrm{(II.4.4)}
			 & = \int_{\rho_n}^{s_{n+1}} \int_{\sD} \left(
			\alpha_1 \widehat \mu_{\rho_n}(\nu) \log \widehat \mu_{\rho_n}(\nu) - \alpha_2 \widehat \mu_{s_n}(\nu) \log \widehat \mu_{s_n}(\nu)
			\right) \gamma(\dif \nu) \dif s                                                                                              \\
			 & - \int_{\rho_n}^{s_{n+1}} \int_{\sD} (\alpha_1 \widehat \mu_{\rho_n}(\nu) - \alpha_2 \widehat \mu_{s_n}(\nu)) \log \left(
			\alpha_1 \widehat \mu_{\rho_n}(\nu) - \alpha_2 \widehat \mu_{s_n}(\nu)
			\right)  \gamma(\dif \nu) \dif s \leq 0,
		\end{aligned}
	\end{equation*}
	where the last inequality follows from Jensen's inequality, \emph{i.e.,}
	\begin{equation*}
		\alpha_1 x \log x - \alpha_2 y \log y \leq (\alpha_1 x - \alpha_2 y) \log (\alpha_1 x - \alpha_2 y),
	\end{equation*}
	for $\alpha_1, \alpha_2 \geq 0$ and $\alpha_1 - \alpha_2 = 1$. Therefore, by summing up the terms above, we have
	\begin{equation*}
		\E\left[\mathrm{(II.4)}\right] \leq \E\left[\mathrm{(II.4.1)} + \mathrm{(II.4.2)} + \mathrm{(II.4.3)} + \mathrm{(II.4.4)}\right] \lesssim \Delta_n \epsilon_\roII,
	\end{equation*}
	and the proof is complete.
\end{proof}

\paragraph{Error Related to the Smoothness of Intensity.}

Below we first present the Dynkin's formula, which is the most essential tool for the proof of the error related to the smoothness of the intensity.

\begin{theorem}[Dynkin's Formula]
	Let $(y_t)_{t\in[0, \tau]}$ be the following process:
	\begin{equation*}
		y_t = y_0 + \int_0^t \int_\sD \nu N[\mu](\dif s, \dif \nu),
	\end{equation*}
	where $N[\mu](\dif s, \dif \nu)$ is a Poisson random measure with intensity $\mu$ of the form $\mu_s(\nu, y_{s^-})$. For any $f \in C^1([0, \tau] \times \sX)$, we define the generator of the process $(y_t)_{t\in[0, \tau]}$ as below
	\begin{equation}
		\gL f_t(y) = \lim_{\tau\to 0^+} \left[\dfrac{f_{t+\tau}(y_{t+\tau}) - f_t(y_t)}{\tau}\bigg| y_t = y\right] = \partial_t f_t(y) + \int_\sD \left( f_t(y + \nu) - f_t(y) \right) \mu_t(\nu, y) \gamma(\dif \nu).
		\label{eq:generator}
	\end{equation}
	Then we have that
	\begin{equation*}
		\E\left[f_t(y_t)\right] = f_0(y_0) + \E\left[\int_0^t \gL f_s(y_s) \dif s\right].
	\end{equation*}
	\label{thm:dynkin}
\end{theorem}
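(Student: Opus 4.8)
The plan is to prove Dynkin's formula via a pathwise change-of-variables (Itô) identity for the pure-jump process $(y_t)$, followed by taking expectations and applying the compensation formula associated with the evolving-intensity Poisson random measure. Since $\sX = [S]^d$ is finite (equivalently $\sD$ is finite) and $\gamma$ is a finite measure, the integrated intensity $\int_\sD \mu_s(\nu, y_{s^-})\gamma(\dif\nu)$ is bounded on $[0,t]$ by \cref{ass:smoothness}, so the process has \emph{finite jump activity}: almost surely only finitely many jumps occur on any compact interval, and between consecutive jumps $y_s$ is constant in its spatial value.

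First I would establish the pathwise identity. Enumerate the (finitely many) jump times $0 < \sigma_1 < \cdots < \sigma_K \le t$ of $y$ on $[0,t]$, with magnitudes $\nu_k := y_{\sigma_k} - y_{\sigma_k^-}$. On each interjump interval the only change in $f_s(y_s)$ comes from the explicit time dependence, contributing $\int \partial_s f_s(y_s)\,\dif s$ (here $f \in C^1$ suffices, and no second-order term arises because the process has no diffusive part). At each jump time, $f$ changes by $f_{\sigma_k}(y_{\sigma_k^-} + \nu_k) - f_{\sigma_k}(y_{\sigma_k^-})$. Telescoping across the interjump intervals and writing the sum over jumps as an integral against the counting measure $N[\mu]$ yields
\begin{equation*}
  f_t(y_t) = f_0(y_0) + \int_0^t \partial_s f_s(y_s)\,\dif s + \int_0^t \int_\sD \big( f_s(y_{s^-}+\nu) - f_s(y_{s^-}) \big)\, N[\mu](\dif s,\dif\nu).
\end{equation*}

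Next I would take expectations. The integrand $g_s(\nu) := f_s(y_{s^-}+\nu) - f_s(y_{s^-})$ is predictable (left-continuous in $s$ through $y_{s^-}$) and bounded, since $\sX$ is finite and $f \in C^1([0,t]\times\sX)$. By the definition of the evolving-intensity measure (\cref{def:poisson_random_measure_evolving}), the compensated measure $\widetilde N[\mu](\dif s,\dif\nu) := N[\mu](\dif s,\dif\nu) - \mu_s(\nu,y_{s^-})\gamma(\dif\nu)\,\dif s$ is a martingale measure; combined with the boundedness just noted, the integral against $\widetilde N[\mu]$ is a genuine mean-zero martingale, so the compensation formula gives
\begin{equation*}
  \E\!\left[\int_0^t \int_\sD g_s(\nu)\, N[\mu](\dif s,\dif\nu)\right] = \E\!\left[\int_0^t \int_\sD g_s(\nu)\, \mu_s(\nu,y_{s^-})\gamma(\dif\nu)\,\dif s\right].
\end{equation*}
Substituting this into the pathwise identity and grouping with the $\partial_s f$ term reproduces exactly the generator $\gL f_s$ of~\eqref{eq:generator}, completing the proof.

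The main obstacle is this expectation-interchange step: rigorously justifying that the integral against $\widetilde N[\mu]$ has zero mean (i.e.\ is a true martingale rather than merely a local one) and that Fubini applies. This is precisely where finiteness of $\sX$ and $\gamma$, together with the two-sided bounds on $\mu$ in \cref{ass:smoothness}, are essential: they guarantee $\E\big[\int_0^t\int_\sD |g_s(\nu)|\,\mu_s(\nu,y_{s^-})\gamma(\dif\nu)\,\dif s\big] < \infty$. If one prefers to avoid explicitly enumerating jump times, the same identity can be derived from the thinning construction of~\cref{rem:construction}, expressing $N[\mu]$ through the homogeneous measure $N(\dif\tau,\dif\nu,\dif\xi)$ and the indicator $\vone_{0\le\xi\le\mu_\tau(\nu)}$, for which the compensation formula is standard.
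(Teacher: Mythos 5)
Your proof is correct, but it takes a different route from the paper's. The paper does not derive Dynkin's formula from scratch: it invokes Thms.~1.19 and 1.22 of {\O}ksendal--Sulem, identifying the process with a jump diffusion with zero drift and diffusion coefficients and jump coefficient $\nu\,\vone_{0\le\xi\le\mu_t(\nu,y_{t^-})}$ on the augmented space of \cref{rem:construction}, and notes that the finiteness of $\gamma$ permits working with the uncompensated Poisson random measure; the generator formula then falls out by integrating out the auxiliary variable $\xi$. You instead give a self-contained argument: a pathwise telescoping identity over the (a.s.\ finitely many) jump times, followed by the compensation formula for the predictable, bounded integrand. The two arguments are the same in substance --- both reduce to the It\^o formula for pure-jump processes plus the martingale property of the compensated integral --- but yours makes explicit the points the paper outsources, in particular why the compensated integral is a genuine (not merely local) martingale, which is exactly where the finiteness of $\sX$ and $\gamma$ and the boundedness of $\mu$ from \cref{ass:smoothness} enter. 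Your version is longer but more transparent about the hypotheses actually used; the paper's is shorter at the cost of requiring the reader to unpack the coefficient identification in the cited theorems. One cosmetic point you handle implicitly and correctly: replacing $y_{s^-}$ by $y_s$ in the final $\dif s$-integral is harmless because the jump times form a Lebesgue-null set.
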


\begin{proof}
	The definition and the form of the generator $\gL$, as well as the Dynkin's formula are all well-known in the literature of jump processes. We refer readers to detailed discussions on these topics in~\cite{oksendal2019stochastic}.

	Here we take $X(t) = (t, y_t)$, $z = (\nu, \xi)$, $\alpha(t, X(t)) = 0$, $\sigma(t, X(t)) = 0$, $\gamma(t, X(t^-), z) = \nu \vone_{0 \leq \xi \leq \mu_t(\nu, y_{t^-})}$ in the statement of Thm.~1.19 in~\cite{oksendal2019stochastic} and replace the compensated Poisson random measure $\widetilde N(dt, dz)$ with the Poisson random measure $N(\dif s, \dif \nu, \dif \xi)$ defined as~\cref{rem:construction}.
	Then we are allowed to use the ordinary Poisson random measure instead of the compensated one since we are working with a finite measure $\gamma(\dif \nu)$.

	From Thm.~1.22 in~\cite{oksendal2019stochastic}, we have that
	\begin{equation*}
		\begin{aligned}
			\gL f_t(y) & = \partial_t f_t(y) + \int_\sD \int_\R \left( f_t(y + \nu \vone_{0 \leq \xi \leq \mu_t(\nu, y)}) - f_t(y) \right) \gamma(\dif \nu) \dif \xi \\
			           & = \partial_t f_t(y) + \int_\sD \left( f_t(y + \nu) - f_t(y) \right) \mu_t(\nu, y) \gamma(\dif \nu),
		\end{aligned}
	\end{equation*}
	and the proof is complete.
\end{proof}

In many cases below, we will need the following first-order expansion of the expectation of the function $f_t(y_t)$ by assuming the second-order smoothness of the function $f$. 

\begin{corollary}
	Suppose that the process $(y_t)_{t\in[0, \tau]}$ and the generator $\gL$ are defined as in~\cref{thm:dynkin}.
	If we further assume that $f \in C^2([0, \tau] \times \sX)$, then it holds that
	\begin{equation*}
		\E\left[f_t(y_t)\right] = f_0(y_0) + t \gL f_0(y_0) + \gO(t^2).
	\end{equation*}
	\label{cor:dynkin_first_order}
\end{corollary}

\begin{proof}
	We expand the function $f_s(y_s)$ from $t = 0$ as follows
	\begin{equation*}
		\begin{aligned}
			\E\left[f_t(y_t)\right] = & f_0(y_0) + \E\left[\int_0^t \gL f_s(y_s) \dif s\right]                                                            \\
			=                         & f_0(y_0) + \E\left[\int_0^t \gL \left(f_0(y_0) + \int_0^s \gL f_\sigma(y_\sigma) \dif \sigma\right) \dif s\right] \\
			=                         & f_0(y_0) + \gL f_0(y_0) t + \E\left[\int_0^t \int_0^s \gL^2 f_\sigma(y_\sigma) \dif \sigma \dif s\right],
		\end{aligned}
	\end{equation*}
	where $\gL^2$ is the second-order generator of the process $(y_t)_{t\in[0, \tau]}$ defined as follows
	\begin{equation*}
		\begin{aligned}
			\gL^2 f_\sigma(y) & = \gL \left(\partial_\sigma f_\sigma(y) +  \int_\sD \left( f_\sigma(y + \nu) - f_\sigma(y) \right) \mu_\sigma( \nu) \gamma(\dif \nu) \right)                               \\
			                  & = \partial_\sigma^2 f_\sigma(y) + 2\int_\sD \left( \partial_\sigma f_\sigma(y + \nu) - \partial_\sigma f_\sigma(y) \right)\mu_\sigma( \nu) \gamma(\dif \nu)                \\
			                  & + \int_\sD \left( f_\sigma(y + \nu) - f_\sigma(y) \right) \partial_\sigma \mu_\sigma( \nu) \gamma(\dif \nu)                                                                \\
			                  & + \int_\sD \int_\sD \big(f_\sigma(y+\nu+\nu') - f_\sigma(y+\nu') - f_\sigma(y+\nu) + f_\sigma(y) \big) \mu_\sigma(\nu)\mu_\sigma(\nu') \gamma(\dif \nu) \gamma(\dif \nu'),
		\end{aligned}
	\end{equation*}
	which is bounded uniformly by a constant based on the assumption on the smoothness of the function $f$ up to the second order and the boundedness of the measure $\gamma(\dif \nu)$. Therefore, the second-order term above is of magnitude $\gO(t^2)$, and the proof is complete.
\end{proof}

The following lemma provides a general recipe for bounding a combination of errors, which resembles standard analysis performed for numerical quadratures. In fact, the following lemma can be easily proved by Taylor expansion when the process $(y_t)_{t\in[0, \tau]}$ is constant, \emph{i.e.}, $y_t \equiv y$. ~\cref{cor:dynkin_first_order} offers an analogous approach to perform the expansion when the process $(y_t)_{t\in[0, \tau]}$ is not constant.

\begin{lemma}
	For any function $f \in C^2([0, \tau] \times \sX)$ and the true backward process $(y_t)_{t\in[0, \tau]}$ defined in~\eqref{eq:backward_integral}, it holds that
	\begin{equation*}
		\left|\E\left[ \int_0^{\theta\tau} f_0(y_0) \dif s + \int_{\theta \tau}^\tau \left(\alpha_1 f_{\theta \tau}(y_{\theta \tau}) - \alpha_2 f_0(y_0) \right)\dif s - \int_0^\tau f_s(y_s) \dif s \right]\right| \lesssim \tau^3.
	\end{equation*}
	\label{lem:integral_error}
\end{lemma}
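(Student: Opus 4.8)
The plan is to condition on $\gF_0$ (so that $y_0$, and hence $f_0(y_0)$, is fixed) and reduce the claim to a deterministic quadrature estimate for the scalar map $s \mapsto \E[f_s(y_s)]$, exactly as the remark preceding the lemma suggests: the role played by the time derivative $\partial_s$ in the constant case $y_t \equiv y$ is taken over by the generator $\gL$. Concretely, I would first carry out the two trivial $s$-integrations in the bracket, so that the quantity to bound becomes $\theta\tau f_0(y_0) + (1-\theta)\tau\bigl(\alpha_1 f_{\theta\tau}(y_{\theta\tau}) - \alpha_2 f_0(y_0)\bigr) - \int_0^\tau f_s(y_s)\dif s$. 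Taking expectations and applying \cref{cor:dynkin_first_order} to $f$ yields the first-order expansion $\E[f_{\theta\tau}(y_{\theta\tau})] = f_0(y_0) + \theta\tau\,\gL f_0(y_0) + \gO(\tau^2)$, and, applied inside the integral, $\E[f_s(y_s)] = f_0(y_0) + s\,\gL f_0(y_0) + \gO(s^2)$, which integrates over $[0,\tau]$ to $\tau f_0(y_0) + \tfrac{\tau^2}{2}\gL f_0(y_0) + \gO(\tau^3)$.

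Substituting these expansions and grouping by $f_0(y_0)$ and $\gL f_0(y_0)$ is the heart of the argument. The coefficient of $\tau f_0(y_0)$ is $\theta + (1-\theta)(\alpha_1 - \alpha_2) - 1$, which vanishes because $\alpha_1 - \alpha_2 = 1$. The coefficient of $\tau^2\,\gL f_0(y_0)$ is $(1-\theta)\theta\alpha_1 - \tfrac12$, which vanishes because the definition $\alpha_1 = \tfrac{1}{2\theta(1-\theta)}$ gives precisely $(1-\theta)\theta\alpha_1 = \tfrac12$. Thus the two leading orders cancel identically, and what survives is the collection of remainders: the $\gO(\tau^2)$ error of the $f_{\theta\tau}$ expansion is multiplied by the prefactor $(1-\theta)\tau\alpha_1$ and so contributes $\gO(\tau^3)$, while the integrated $\gO(s^2)$ error already contributes $\gO(\tau^3)$. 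Hence the whole expression is $\gO(\tau^3)$, as claimed.

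The point I would emphasize — and the only place requiring care — is that a \emph{single} first-order Dynkin expansion suffices, so the hypothesis $f \in C^2$ is exactly what is needed and no appeal to a (generally unavailable) third-order expansion is required. For this to work the $\gO(\tau^2)$ remainder in \cref{cor:dynkin_first_order} must be uniform in the starting state $y_0$, since in the application the lemma is invoked on each subinterval $(s_n, s_{n+1}]$ with a random starting point; this uniformity holds because that remainder is controlled by $\sup|\gL^2 f|$, finite by $f \in C^2$ together with the boundedness of the intensities (\cref{ass:smoothness}) and the finiteness of $\gamma$ on $\sD$. I would also note that the surviving $\tau^3$ term does not vanish for general $\theta$ — a direct computation in the constant-process model gives a coefficient proportional to $\bigl(\tfrac{\theta}{4} - \tfrac16\bigr)\gL^2 f_0(y_0)$ — which is consistent with the scheme being genuinely second order (local error $\gO(\tau^3)$, global error $\gO(\tau^2)$); since the lemma only asserts an $\gO(\tau^3)$ bound, no further cancellation is sought. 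The main obstacle is therefore conceptual rather than computational: recognizing that $\alpha_1$ and $\alpha_2$ are engineered precisely to annihilate the $f_0$ and $\gL f_0$ contributions, after which the remainder estimate is routine.
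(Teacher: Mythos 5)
Your proposal is correct and follows essentially the same route as the paper's proof: a first-order Dynkin expansion of $\E[f_{\theta\tau}(y_{\theta\tau})]$ and $\E[f_s(y_s)]$ via \cref{cor:dynkin_first_order}, followed by the observation that $\alpha_1-\alpha_2=1$ kills the $\tau f_0(y_0)$ coefficient and $\alpha_1\theta(1-\theta)=\tfrac12$ kills the $\tau^2\gL f_0(y_0)$ coefficient, leaving only $\gO(\tau^3)$ remainders. Your additional remarks on uniformity of the $\gO(\tau^2)$ remainder and on the non-vanishing of the surviving third-order term are sound but not needed beyond what the paper records.
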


\begin{proof}
	Let $\mathcal L$ be the generator defined in~\cref{thm:dynkin}. By applying the Dynkin's formula (\cref{thm:dynkin} and~\cref{cor:dynkin_first_order}) to the function $f_t(y_t)$ and plugging in the expression of the generator $\mathcal{L}$, we have that
	\begin{equation*}
		\begin{aligned}
			  & \E\left[  \int_0^{\theta\tau} f_0(y_0) \dif s - \alpha_2 \int_{\theta \tau}^\tau f_0(y_0) \dif s + \alpha_1 \int_{\theta \tau}^\tau f_{\theta \tau}(y_{\theta \tau}) \dif s - \int_0^\tau f_s(y_s) \dif s \right] \\
			= & \theta \tau f_0(y_0) - \alpha_2 (1 - \theta) \tau f_0(y_0) + \alpha_1 (1-\theta) \tau \left(f_0(y_0) + \theta \tau \gL f_0(y_0)\right) \\
            - & \int_0^\tau \left(f_0(y_0) + s\gL f_0(y_0)\right) \dif s + \gO(\tau^3)   \\
			= & \left(\theta - \alpha_2 (1 - \theta) + \alpha_1 (1-\theta) - 1\right) \tau f_0(y_0) + \alpha_1 (1-\theta) \theta \tau^2 \gL f_0(y_0) - \dfrac{\tau^2}{2} \gL f_0(y_0) + \gO(\tau^3),
		\end{aligned}
	\end{equation*}
	which is of the order $\gO(\tau^3)$ by noticing that
	\begin{equation*}
		\begin{aligned}
			\theta -\alpha_2 (1-\theta) + \alpha_1 (1-\theta) -1 & = \left(\tfrac{1}{2\theta(1-\theta)} - \tfrac{\theta^2 + (1-\theta)^2}{2\theta(1-\theta)}\right) (1-\theta) - (1 - \theta) =0 \\
			\alpha_1 (1-\theta) \theta - \tfrac{1}{2}            & = \tfrac{1}{2\theta(1-\theta)} (1-\theta) \theta  - \tfrac{1}{2} = 0,
		\end{aligned}
	\end{equation*}
	and the proof is complete.
\end{proof}

We remark that in~\cref{thm:dynkin,cor:dynkin_first_order,lem:integral_error}, the smoothness of the function $f$ implies that its derivatives up to the relevant order are bounded by constants independent of the time step $\tau$. This condition is verified in the subsequent proofs.

Then we are ready to bound some of the error terms in the proof of~\cref{thm:trapezoidal} with~\cref{lem:integral_error}.

\begin{corollary}
	For the interval $(s_n, s_{n+1}]$ for $n\in[0:N-1]$, we have the following error bound:
	\begin{equation*}
		\begin{aligned}
			  & \left|\E\left[\mathrm{(I.2)} + \mathrm{(II.2)}\right]\right|                                  \\
			= & \bigg|\E\bigg[
			\int_{s_n}^{s_{n+1}} \int_{\sD}\left(
			\mu_s(\nu) \log \mu_s(\nu) - \mu_s(\nu)
			\right) \gamma(\dif \nu) \dif s                                                                   \\
			  & \ - \int_{s_n}^{\rho_n} \int_{\sD}\left(  \mu_{s_n}(\nu) \log \mu_{s_n}(\nu) + \mu_{s_n}(\nu)
			\right) \gamma(\dif \nu) \dif s                                                                   \\
			  & \ - \int_{\rho_n}^{s_{n+1}} \int_{\sD} \big(
			\alpha_1 (\mu_{\rho_n}(\nu) \log \mu_{\rho_n}(\nu) - \mu_{\rho_n}(\nu)) - \alpha_2 (\mu_{s_n}(\nu) \log \mu_{s_n}(\nu) - \mu_{s_n}(\nu))
			\big) \gamma(\dif \nu) \dif s
			\bigg] \bigg| \\
            \lesssim&  \Delta_n^3.
		\end{aligned}
	\end{equation*}
	\label{cor:I.2_II.2}
\end{corollary}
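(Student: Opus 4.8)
The plan is to recognize the combination $\mathrm{(I.2)} + \mathrm{(II.2)}$ as a direct instance of~\cref{lem:integral_error}. For each fixed jump $\nu \in \sD$ I would introduce the scalar function
\[
    f_s(y) := \mu_s(\nu, y) \log \mu_s(\nu, y) - \mu_s(\nu, y),
\]
so that the integrands of both $\mathrm{(I.2)}$ and $\mathrm{(II.2)}$ are expressed through $f_s$ evaluated along the true backward process $(y_s)$: indeed $\mathrm{(I.2)} = \int_{\sD}\int_{s_n}^{\rho_n}(f_s - f_{s_n})\,\dif s\,\gamma(\dif\nu)$ and $\mathrm{(II.2)} = \int_{\sD}\int_{\rho_n}^{s_{n+1}}(f_s - \alpha_1 f_{\rho_n} + \alpha_2 f_{s_n})\,\dif s\,\gamma(\dif\nu)$. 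Adding these and cancelling the common piece $\int_{s_n}^{s_{n+1}} f_s\,\dif s$, the sum telescopes into
\[
    \mathrm{(I.2)} + \mathrm{(II.2)} = \int_{\sD}\!\left( \int_{s_n}^{s_{n+1}}\! f_s\,\dif s - \int_{s_n}^{\rho_n}\! f_{s_n}\,\dif s - \int_{\rho_n}^{s_{n+1}}\!\left(\alpha_1 f_{\rho_n} - \alpha_2 f_{s_n}\right)\dif s \right)\gamma(\dif\nu),
\]
which, after the translation $s \mapsto s - s_n$ with $\tau = \Delta_n$ and $\theta\tau = \rho_n - s_n$, is precisely the negative of the quantity bounded by~\cref{lem:integral_error}, integrated over $\nu$.

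Next I would verify that $f_s(y)$ meets the regularity hypothesis of~\cref{lem:integral_error}, namely $f \in C^2([s_n,s_{n+1}] \times \sX)$ with derivatives bounded uniformly in $\nu$. This is where~\cref{ass:smoothness} enters: the map $x \mapsto x\log x - x$ is smooth on $(0,\infty)$, with derivatives $\log x$ and $1/x$, and since the true intensity $\mu_s(\nu,\cdot)$ is $C^2$ in time and bounded both above and away from zero on $[0,T-\delta]$, the composition $f_s = \mu_s\log\mu_s - \mu_s$ inherits $C^2$ regularity with bounds depending only on the two-sided bounds on $\mu$. In particular the generator actions $\gL f$ and $\gL^2 f$ that arise in the Dynkin expansion underlying~\cref{lem:integral_error} (via~\cref{thm:dynkin,cor:dynkin_first_order}) are controlled by constants independent of the step index $n$ and of $\nu$.

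Finally I would apply~\cref{lem:integral_error} pointwise in $\nu$ to get a bound of order $\Delta_n^3$ for each integrand, then integrate over $\sD$. Interchanging $\E$ and $\int_{\sD}$ is justified by Fubini since $\gamma$ is the finite counting measure and everything in sight is bounded; as $\gamma(\sD) < \infty$ and the per-$\nu$ constant is uniform, the aggregated bound remains $\gO(\Delta_n^3)$. I expect the main obstacle to be not the telescoping algebra but the bookkeeping of these smoothness constants: one must confirm that $\mu_s\log\mu_s - \mu_s$ and all generator terms appearing in the expansion admit bounds uniform in $\nu$, so that summing the per-$\nu$ cubic errors against $\gamma$ does not degrade the order. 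This uniformity is exactly what the upper and lower bounds of~\cref{ass:smoothness} provide.
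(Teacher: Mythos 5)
Your proposal is correct and follows essentially the same route as the paper: the paper also proves this corollary by a direct application of \cref{lem:integral_error} with $f_s(y_s)=\int_{\sD}\mu_s(\nu)\log\mu_s(\nu)\gamma(\dif\nu)$, relying on \cref{ass:smoothness} for the required $C^2$ regularity and two-sided bounds. The only cosmetic difference is that you apply the lemma pointwise in $\nu$ and then integrate against the finite measure $\gamma$ (correctly noting the needed uniformity in $\nu$), whereas the paper applies it once to the $\nu$-integrated function; the underlying Dynkin expansion and cancellation of the first- and second-order terms are identical.
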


\begin{proof}
	The bound is obtained by applying~\cref{lem:integral_error} with $f$ being the function $$f_s(y_s) = \int_{\sD} \mu_s(\nu) \log \mu_s(\nu) \gamma(\dif \nu),$$
	Strictly speaking, $f_s(y_s)$ is actually in the form of $f_s(y_{s^-})$, but the argument can be easily extended to this case by assuming time continuity of the function $f$.
\end{proof}

\begin{corollary}
	For the interval $(s_n, s_{n+1}]$ for $n\in[0:N-1]$, we have the following error bound:
	\begin{equation*}
		\begin{aligned}
			  & \left|\E\left[\mathrm{(I.4)} + \mathrm{(II.6)}\right]\right| \\
			= & \bigg|\E\bigg[
			\int_{s_n}^{\rho_n} \int_{\sD} \mu_{s_n}(\nu) \log \left(
			\alpha_1\widehat\mu^*_{\rho_n}(\nu) - \alpha_2\widehat\mu_{s_n}(\nu)
			\right) \gamma(\dif \nu) \dif s                                  \\
			  & \ + \int_{\rho_n}^{s_{n+1}} \int_{\sD}
			(\alpha_1 \mu_{\rho_n}(\nu) - \alpha_2 \mu_{s_n}(\nu)) \log \left( \alpha_1 \widehat \mu_{\rho_n}^*(\nu) - \alpha_2 \widehat\mu_{s_n}(\nu) \right)
			\gamma(\dif \nu) \dif s                                          \\
			  & \ - \int_{s_n}^{s_{n+1}} \int_{\sD} \mu_s(\nu) \log \left(
			\alpha_1\widehat\mu^*_{\rho_n}(\nu) - \alpha_2\widehat\mu_{s_n}(\nu)
			\right) \gamma(\dif \nu) \dif s
			\bigg] \bigg| \lesssim \Delta_n^3.
		\end{aligned}
	\end{equation*}
	\label{cor:I.4_II.6}
\end{corollary}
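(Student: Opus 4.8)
The plan is to recognize this bound as another instance of the general quadrature estimate \cref{lem:integral_error}, in exactly the way \cref{cor:I.2_II.2} was handled. The crucial observation is that the argument of the logarithm, $\alpha_1\widehat\mu^*_{\rho_n}(\nu) - \alpha_2\widehat\mu_{s_n}(\nu)$, does \emph{not} depend on the integration time $s$: once we condition on $\gF_{s_n}$ together with the independent randomness $\zeta$ used to build the intermediate process \eqref{eq:intermediate_process}, both $\widehat\mu^*_{\rho_n}$ (through $y^*_{\rho_n}$) and $\widehat\mu_{s_n}$ are fixed functions of $\nu$ over the whole interval $(s_n, s_{n+1}]$. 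I would therefore set
\[
g(\nu) := \log\bigl(\alpha_1\widehat\mu^*_{\rho_n}(\nu) - \alpha_2\widehat\mu_{s_n}(\nu)\bigr), \qquad f_s(y) := \int_\sD \mu_s(\nu, y)\, g(\nu)\,\gamma(\dif\nu),
\]
so that $f_{s_n}(y_{s_n})$, $f_{\rho_n}(y_{\rho_n})$, and $f_s(y_s)$ reproduce, respectively, the three integrands appearing in $\mathrm{(I.4)}+\mathrm{(II.6)}$; note that the intermediate-process quantities enter only through $g$, whereas the $\mu_\bullet$ factors are taken along the true backward process.

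With this choice the three summands match the template of \cref{lem:integral_error} after shifting the origin to $s_n$ and taking $\tau = \Delta_n$ with section point $\rho_n = s_n + \theta\Delta_n$: the first integral $\int_{s_n}^{\rho_n} f_{s_n}(y_{s_n})\,\dif s$, the second $\int_{\rho_n}^{s_{n+1}}\bigl(\alpha_1 f_{\rho_n}(y_{\rho_n}) - \alpha_2 f_{s_n}(y_{s_n})\bigr)\,\dif s$, and the subtracted full-interval integral $\int_{s_n}^{s_{n+1}} f_s(y_s)\,\dif s$ align termwise. Invoking \cref{lem:integral_error} conditionally on $\gF_{s_n}$ and $\zeta$ then yields the claimed $\gO(\Delta_n^3)$ bound, and taking the outer expectation preserves it provided the implied constant is uniform in the conditioning.

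The verification of the hypotheses of \cref{lem:integral_error} is the substantive step. I need $f \in C^2([s_n, s_{n+1}]\times\sX)$ with derivatives bounded uniformly in $\nu$, in $y$, and---most importantly---over the realization of $\zeta$, since only a uniform constant survives the outer expectation. Smoothness in $s$ is inherited directly from the $C^2$ regularity of $s\mapsto\mu_s(\nu,\cdot)$ in \cref{ass:smoothness}(1), as $g(\nu)$ is constant in $s$ and thus does not affect the time derivatives; the finiteness of $\gamma$ lets these bounds pass through the $\nu$-integral, and the discreteness of $\sX$ makes the spatial differences entering the generator automatically controlled. The delicate point is the boundedness of the factor $g(\nu)$ itself: this is where the positivity hypothesis of \cref{thm:trapezoidal} and the two-sided bounds of \cref{ass:smoothness}(2) are essential, since together they guarantee that wherever $\mu_{s_n}(\nu)$ or $\mu_{\rho_n}(\nu)$ carries mass the extrapolated intensity $\alpha_1\widehat\mu^*_{\rho_n}(\nu) - \alpha_2\widehat\mu_{s_n}(\nu)$ stays bounded above and away from zero, rendering $g$ bounded and hence $f$ and its generator-derivatives uniformly controlled. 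I expect this control of $g$ under extrapolation---rather than the quadrature bookkeeping, which is identical to \cref{cor:I.2_II.2}---to be the main obstacle, mirroring the role the same positivity assumption plays throughout the analysis of the $\theta$-Trapezoidal method.
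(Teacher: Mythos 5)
Your proposal is correct and follows essentially the same route as the paper: both reduce $\mathrm{(I.4)}+\mathrm{(II.6)}$ to \cref{lem:integral_error} by exploiting the independence of the Poisson randomness driving the intermediate process from that driving the true backward process, so that the logarithmic factor is a fixed (time-independent) function over $(s_n,s_{n+1}]$. The only difference is cosmetic — the paper averages the $\log$ term over the intermediate randomness before invoking the lemma, whereas you condition on that randomness, apply the lemma, and take the outer expectation afterwards — and your added discussion of why the positivity assumption and \cref{ass:smoothness} keep $g$ bounded correctly fills in a step the paper leaves implicit.
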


\begin{proof}
	Note that the intermediate process $y_s^*$ defined in~\eqref{eq:intermediate_process} is driven by a Poisson random measure that is independent of the Poisson random measure driving the process $y_s$ within the interval $(s_n, s_{n+1}]$. Therefore, the error bound is obtained by
	\begin{enumerate}[label=(\arabic*)]
		\item Taking the expectation w.r.t. the intermediate process $y_s^*$ and thus the intermediate intensity $\widehat \mu_s^*$, and
		\item Then applying~\cref{lem:integral_error} with $f$ being the following function $$f_s(y_s) = \int_{\sD} \mu_s(\nu) \E\left[\log \left(\alpha_1\widehat\mu^*_{\rho_n}(\nu) - \alpha_2\widehat\mu_{s_n}(\nu)\right)\right]\gamma(\dif \nu).$$
	\end{enumerate}
	The result follows directly.
\end{proof}

Now we turn to the error term $\mathrm{(III.6)}$ in~\cref{thm:midpoint}, for which we need the following variant of~\cref{lem:integral_error}.

\begin{lemma}
	For any function $f \in C^2([0, \tau] \times \sX)$ and the true backward process $(y_t)_{t\in[0, \tau]}$ defined in~\eqref{eq:backward_integral}, it holds that
	\begin{equation*}
		\left|\E\left[ \int_0^\tau \left(\left(1-\tfrac{1}{2\theta}\right) f_0(y_0) + \tfrac{1}{2\theta}f_{\theta \tau}(y_{\theta \tau}) \right)\dif s - \int_0^\tau f_s(y_s) \dif s \right]\right| \lesssim \tau^3.
	\end{equation*}
	\label{lem:integral_error_var}
\end{lemma}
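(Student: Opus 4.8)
The plan is to mirror the proof of~\cref{lem:integral_error}, replacing the trapezoidal-type quadrature used there with the midpoint-type rule that underlies the $\theta$-RK-2 scheme, and to carry out the expansion with Dynkin's formula (\cref{thm:dynkin}) together with its first-order version (\cref{cor:dynkin_first_order}). The key structural observation is that the integrand $(1-\tfrac{1}{2\theta}) f_0(y_0) + \tfrac{1}{2\theta} f_{\theta\tau}(y_{\theta\tau})$ does not depend on the integration variable $s$, so the first integral contributes merely a factor of $\tau$, and the whole problem reduces to comparing two short-time expansions around the initial point $y_0$.

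First I would apply~\cref{cor:dynkin_first_order} to the process started at $y_0$ to write
\[
\E\left[f_{\theta\tau}(y_{\theta\tau})\right] = f_0(y_0) + \theta\tau\,\gL f_0(y_0) + \gO(\tau^2),
\]
and likewise $\E[f_s(y_s)] = f_0(y_0) + s\,\gL f_0(y_0) + \gO(\tau^2)$ for every $s\in[0,\tau]$. Substituting these into the two integrals, taking expectations, and using that the weighted evaluation is constant in $s$, the first term becomes
\[
\tau\left[f_0(y_0) + \tfrac{1}{2\theta}\,\theta\tau\,\gL f_0(y_0)\right] + \gO(\tau^3) = \tau f_0(y_0) + \tfrac{1}{2}\tau^2\,\gL f_0(y_0) + \gO(\tau^3),
\]
while the exact integral expands as $\E\int_0^\tau f_s(y_s)\,\dif s = \tau f_0(y_0) + \tfrac{1}{2}\tau^2\,\gL f_0(y_0) + \gO(\tau^3)$. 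Subtracting the two expansions, the zeroth-order term $\tau f_0(y_0)$ and the first-order term $\tfrac12\tau^2\,\gL f_0(y_0)$ cancel \emph{exactly}, leaving only a $\gO(\tau^3)$ remainder, which is the claim.

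The one quantitative point worth emphasizing is that the weight $\tfrac{1}{2\theta}$ is precisely what is required: the factor $\theta\tau$ from evaluating at the $\theta$-section point combines with $\tfrac{1}{2\theta}$ to reproduce exactly the coefficient $\tfrac12$ of the first-order term generated by the exact integral, irrespective of $\theta\in(0,1]$. This is the process-level analog of the elementary fact that a weighted midpoint rule is second-order accurate for smooth integrands, with~\cref{cor:dynkin_first_order} playing the role of Taylor expansion for the non-constant process $y_s$. I do not expect a genuine obstacle here, since the statement is a routine variant of~\cref{lem:integral_error}; the only care needed is bookkeeping the cancellation and confirming that the $\gO(\tau^2)$ remainders above are \emph{uniform} in $s$. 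The latter follows exactly as in the proof of~\cref{cor:dynkin_first_order}: $C^2$ smoothness of $f$ and finiteness of $\gamma$ bound the second-order generator $\gL^2 f$ by a constant independent of $\tau$, so integrating a uniform $\gO(\tau^2)$ error over $[0,\tau]$ yields the dominant $\gO(\tau^3)$ contribution.
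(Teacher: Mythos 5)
Your proposal is correct and follows essentially the same route as the paper's proof: both apply Dynkin's formula via \cref{cor:dynkin_first_order} to expand $\E[f_{\theta\tau}(y_{\theta\tau})]$ and $\E[f_s(y_s)]$ to first order in the generator, integrate, and observe that the zeroth- and first-order terms cancel exactly, leaving an $\gO(\tau^3)$ remainder. The uniformity of the $\gO(\tau^2)$ remainder in $s$, which you rightly flag, is exactly what the paper's use of the bounded second-order generator $\gL^2 f$ provides.
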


\begin{proof}
	The proof is similar to that of~\cref{lem:integral_error}. Specifically, we let $\mathcal L$ be the generator defined in~\cref{thm:dynkin}, apply the Dynkin's formula (\cref{thm:dynkin} and~\cref{cor:dynkin_first_order}) to the function $f_t(y_t)$ and plug in the expression of the generator $\mathcal{L}$, which yields
	\begin{equation*}
		\begin{aligned}
			  & \E\left[  \int_0^\tau \left(\left(1-\tfrac{1}{2\theta}\right) f_0(y_0) + \tfrac{1}{2\theta} f_{\theta \tau}(y_{\theta \tau}) \right)\dif s - \int_0^\tau f_s(y_s) \dif s \right]                                                  \\
			= & \left(1-\tfrac{1}{2\theta}\right) \tau f_0(y_0) + \tfrac{1}{2\theta}  \int_0^\tau \left(f_0(y_0) + \theta \tau \gL f_0(y_0)\right) \dif s - \int_0^\tau \left(f_0(y_0) + s\gL f_0(y_0)\right) \dif s + \gO(\tau^3) \\
            = & \gO(\tau^3),
		\end{aligned}
	\end{equation*}
	as desired.
\end{proof}

\begin{corollary}
	For the interval $(s_n, s_{n+1}]$ for $n\in[0:N-1]$, we have the following error bound:
	\begin{equation*}
		\begin{aligned}
			  &\quad \left|\E\left[\mathrm{(III.2)}\right]\right|                                                                                     \\
			 &= \bigg|\E\bigg[
			\int_{s_n}^{s_{n+1}} \int_{\sD} \left(\mu_s(\nu)\log \mu_s(\nu)- \mu_s(\nu)\right) \gamma(\dif \nu) \dif s                           \\
			  & - \int_{s_n}^{s_{n+1}} \int_{\sD} \left(\left(1-\tfrac{1}{2\theta}\right)\left(\mu_{s_n}(\nu)\log \mu_{s_n}(\nu)-\mu_{s_n}\right)
			+\tfrac{1}{2\theta}\left(\mu_{\rho_n}(\nu)\log\mu_{\rho_n}(\nu)-\mu_{\rho_n}(\nu)\right)\right) \gamma(\dif \nu) \dif s
			\bigg] \bigg| \\
            &\lesssim  \Delta_n^3.
		\end{aligned}
	\end{equation*}
	\label{cor:III.2}
\end{corollary}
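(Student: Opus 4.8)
The plan is to recognize the term $\mathrm{(III.2)}$ as a direct instance of the quadrature estimate in \cref{lem:integral_error_var}, exactly as \cref{cor:I.2_II.2} was obtained from \cref{lem:integral_error} in the $\theta$-Trapezoidal analysis. To this end I would introduce the function
\begin{equation*}
	f_s(y_s) = \int_{\sD} \left( \mu_s(\nu) \log \mu_s(\nu) - \mu_s(\nu) \right) \gamma(\dif \nu),
\end{equation*}
so that the first integral appearing in $\mathrm{(III.2)}$ is exactly $\int_{s_n}^{s_{n+1}} f_s(y_s) \dif s$, while the subtracted integral is $\int_{s_n}^{s_{n+1}} \big( (1-\tfrac{1}{2\theta}) f_{s_n}(y_{s_n}) + \tfrac{1}{2\theta} f_{\rho_n}(y_{\rho_n}) \big) \dif s$. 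Thus $\mathrm{(III.2)}$ is, up to sign, precisely the integrand difference controlled by \cref{lem:integral_error_var}.

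Next I would align the interval with the lemma's normalization. Since $\rho_n = (1-\theta) s_n + \theta s_{n+1} = s_n + \theta \Delta_n$, shifting the interval $(s_n, s_{n+1}]$ to start at the origin and setting $\tau = \Delta_n$ identifies $y_{\rho_n}$ with $y_{\theta \tau}$ and $y_{s_n}$ with $y_0$ in the statement of \cref{lem:integral_error_var}. The lemma then yields $\left|\E\left[\mathrm{(III.2)}\right]\right| \lesssim \Delta_n^3$ immediately. Note that this is the step where the interpolation weights $(1-\tfrac{1}{2\theta}, \tfrac{1}{2\theta})$ of the $\theta$-RK-2 scheme enter, in contrast to the extrapolation weights $(\alpha_1, -\alpha_2)$ handled by \cref{lem:integral_error} for the $\theta$-Trapezoidal method.

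The only hypothesis to verify is that $f \in C^2([s_n, s_{n+1}] \times \sX)$ with derivatives bounded uniformly in $\Delta_n$. This follows from \cref{ass:smoothness}: the scalar map $u \mapsto u \log u - u$ is smooth on any interval bounded away from $0$ and $\infty$, and by the upper and lower bounds on $\mu_s$ together with its $C^2$-regularity in $s$, the composition $\mu_s \log \mu_s - \mu_s$ is $C^2$ in $s$ with bounded derivatives; integrating against the finite measure $\gamma$ preserves this. As in \cref{cor:I.2_II.2}, $f_s(y_s)$ is formally of the form $f_s(y_{s^-})$, but time-continuity of $f$ lets us pass between the two without affecting the bound.

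I expect no substantial obstacle, since the entire analytic content has been front-loaded into \cref{lem:integral_error_var}. The one point deserving care is the uniform boundedness of the second-order generator $\gL^2$ applied to $f$, which underlies the $\gO(\tau^2)$ remainder invoked through \cref{cor:dynkin_first_order} inside that lemma; this is exactly where the lower bound on $\mu_s$ in \cref{ass:smoothness} is essential, as it prevents $\log \mu_s$ and its derivatives from blowing up and thereby keeps all constants independent of $\Delta_n$.
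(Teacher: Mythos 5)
Your proposal is correct and follows essentially the same route as the paper: the paper's proof is a one-line application of \cref{lem:integral_error_var} with $f_s(y_s)=\int_{\sD}\mu_s(\nu)\log\mu_s(\nu)\,\gamma(\dif\nu)$ (your inclusion of the $-\mu_s(\nu)$ term in $f$ is an immaterial variant, and your verification of the $C^2$/boundedness hypothesis via \cref{ass:smoothness} matches what the paper defers to its remark after \cref{lem:integral_error}).
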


\begin{proof}
	By applying~\cref{lem:integral_error_var} with $f$ being the function
	\begin{equation*}
		f_s(y_s) = \int_{\sD} \mu_s(\nu) \log \mu_s(\nu) \gamma(\dif \nu),
	\end{equation*}
	we have that the result follows directly.
\end{proof}

\begin{corollary}
	For any $n\in[0:N-1]$ and the corresponding interval $(s_n, s_{n+1}]$, we have the following error bound:
	\begin{equation*}
		\begin{aligned}
			  & \left|\E\left[\mathrm{(III.6)}\right]\right|                                                                                                                                                                                                                                 \\
			= & \bigg|\E\bigg[
			\int_{s_n}^{s_{n+1}} \int_{\sD} \left(\left(1-\tfrac{1}{2\theta}\right) \mu_{s_n}(\nu) + \tfrac{1}{2\theta}\mu_{\rho_n}(\nu)\right)\log\left(\left(1-\tfrac{1}{2\theta}\right) \widehat\mu_{s_n}(\nu) + \tfrac{1}{2\theta}\widehat\mu^*_{\rho_n}(\nu)\right) \gamma(\dif \nu) \dif s \\
			  & \ - \int_{s_n}^{s_{n+1}} \int_{\sD} \mu_s(\nu)\log\left(\left(1-\tfrac{1}{2\theta}\right) \widehat\mu_{s_n}(\nu) + \tfrac{1}{2\theta}\widehat\mu^*_{\rho_n}(\nu)\right) \gamma(\dif \nu) \dif s
			\bigg] \bigg| \lesssim \Delta_n^3.
		\end{aligned}
	\end{equation*}
	\label{cor:III.6}
\end{corollary}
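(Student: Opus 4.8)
The plan is to mirror the proof of \cref{cor:I.4_II.6}, replacing the extrapolation-type quadrature lemma \cref{lem:integral_error} with its interpolation-type counterpart \cref{lem:integral_error_var}, which is precisely the identity underlying the $\theta$-RK-2 weights $(1-\tfrac{1}{2\theta},\,\tfrac{1}{2\theta})$. The crucial structural observation is that the logarithmic factor
\[
\ell(\nu) := \log\!\left(\left(1-\tfrac{1}{2\theta}\right)\widehat\mu_{s_n}(\nu) + \tfrac{1}{2\theta}\widehat\mu^*_{\rho_n}(\nu)\right),
\]
which appears identically in both integrals of $\mathrm{(III.6)}$, depends only on the state $y_{s_n}$ (through $\widehat\mu_{s_n}$) and on the intermediate process $y^*$ (through $\widehat\mu^*_{\rho_n}$). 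It is therefore measurable with respect to $\gF_{s_n}$ together with the Poisson random measure driving $y^*$, which by construction is independent of the randomness of the true backward process $y_s$ on $(s_n, s_{n+1}]$.

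First I would condition on $\gF_{s_n}$ and integrate out the intermediate randomness, setting
\[
g(\nu) := \E\!\left[\ell(\nu)\mid \gF_{s_n}\right],
\]
a function that is deterministic given $\gF_{s_n}$. Since the intermediate process is independent of $y_s$ on this interval, conditional independence lets me replace $\ell(\nu)$ by $g(\nu)$ in each of the three contributions to $\E[\mathrm{(III.6)}]$: the factors $\mu_{s_n}(\nu)$, $\mu_{\rho_n}(\nu)$, and $\mu_s(\nu)$ all belong to the true process and are conditionally independent of $\ell$. I would then define
\[
f_s(y_s) = \int_{\sD} \mu_s(\nu)\, g(\nu)\,\gamma(\dif\nu),
\]
so that $f_{s_n}(y_{s_n})$, $f_{\rho_n}(y_{\rho_n})$, and $f_s(y_s)$ reproduce the three terms of $\mathrm{(III.6)}$ exactly. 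Because $\rho_n = s_n + \theta\Delta_n$, applying \cref{lem:integral_error_var} on $(s_n, s_{n+1}]$ with time step $\tau = \Delta_n$ yields $|\E[\mathrm{(III.6)}]| \lesssim \Delta_n^3$ directly.

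The main obstacle is verifying the regularity hypothesis of \cref{lem:integral_error_var}, namely that $f \in C^2$ with derivatives bounded by constants independent of $\Delta_n$. The $s$-dependence of $f$ enters only through $\mu_s$, which is $C^2$ with bounded derivatives by \cref{ass:smoothness}(1), so it remains to confirm that $g(\nu)$ is bounded. This in turn reduces to boundedness of $\ell(\nu)$: the upper and lower bounds on both intensities in \cref{ass:smoothness}(2), combined with the standing positivity assumption $(1-\tfrac{1}{2\theta})\widehat\mu_{\floor{s}} + \tfrac{1}{2\theta}\widehat\mu^*_{\rho_s} \geq 0$ of \cref{thm:midpoint}, guarantee that the argument of the logarithm lies in a compact subinterval of $(0,\infty)$ almost everywhere. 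Hence $\ell$, and therefore $g$, is bounded, differentiation under the integral sign is justified, and the $\gO(\Delta_n^3)$ constants do not blow up. This is exactly the point flagged in the remark following \cref{lem:integral_error}, and it is the only place where the $\theta$-RK-2 positivity requirement enters the bound for this term.
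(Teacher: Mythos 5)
Your proposal is correct and follows essentially the same route as the paper: integrate out the independent intermediate randomness so the logarithmic factor becomes a fixed function $g(\nu)=\E[\log(\cdot)]$, then apply \cref{lem:integral_error_var} to $f_s(y_s)=\int_{\sD}\mu_s(\nu)\,g(\nu)\,\gamma(\dif\nu)$ on $(s_n,s_{n+1}]$. Your additional discussion of the regularity hypothesis and the role of the positivity assumption is more explicit than the paper's one-line verification, but it is the same argument.
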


\begin{proof}
	Following the arguments in the proof of~\cref{cor:I.4_II.6}, the error bound is obtained by first taking the expectation w.r.t. the intermediate process $y_s^*$ and thus the intermediate intensity $\widehat \mu_s^*$, and then applying~\cref{lem:integral_error_var} with $f$ being the function
	\begin{equation*}
		f_s(y_s) = \int_{\sD}  \mu_s(\nu) \E\left[\log\left(\left(1-\tfrac{1}{2\theta}\right) \widehat\mu_{s_n}(\nu) + \tfrac{1}{2\theta}\widehat\mu^*_{\rho_n}(\nu)\right)\right] \gamma(\dif \nu),
	\end{equation*}
	as desired.
\end{proof}

\paragraph{Error involving the Intermediate Process.}

\begin{proposition}
	For the interval $(s_n, s_{n+1}]$ with $n\in[0:N-1]$, we have the following error bound:
	\begin{equation*}
		\E\left[\mathrm{(II.3)}\right] = \E\left[\int_{\rho_n}^{s_{n+1}} \int_{\sD} \left(
		\widehat \mu_{\rho_n}^*(\nu) - \widehat \mu_{\rho_n}(\nu)
		\right) \gamma(\dif \nu) \dif s \right]
		\lesssim \Delta_n^3 + \Delta_n^2 \epsilon_\roII.
	\end{equation*}
	\label{prop:II.3}
\end{proposition}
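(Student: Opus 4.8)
The plan is to exploit that the integrand of $\mathrm{(II.3)}$ is independent of the integration variable $s$, reducing the claim to a bound on the \emph{signed} mean discrepancy between the estimated intensity evaluated along the intermediate process~\eqref{eq:intermediate_process} and along the true backward process~\eqref{eq:backward_integral}, which I would then control to first order via Dynkin's formula. Since neither $\widehat\mu^*_{\rho_n}$ nor $\widehat\mu_{\rho_n}$ depends on $s$, the $\dif s$ integral only contributes the factor $s_{n+1}-\rho_n=(1-\theta)\Delta_n$, so (the constant $\alpha_1$ being absorbed into $\lesssim$)
\[
\E[\mathrm{(II.3)}]=(1-\theta)\Delta_n\,\E\Bigl[\int_\sD\bigl(\widehat\mu^*_{\rho_n}(\nu)-\widehat\mu_{\rho_n}(\nu)\bigr)\gamma(\dif\nu)\Bigr],
\]
and it suffices to show the inner expectation is $\lesssim\Delta_n^2+\Delta_n\epsilon_\roII$. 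The crucial observation, from~\eqref{eq:intermediate_intensity}, is that both quantities are the \emph{same} estimated intensity $g_\nu(y):=\cev{\widehat s}{\vphantom{\widehat s}}_{\rho_n}(y,y+\nu)\,\cev Q\vphantom{Q}^0_{\rho_n}(y,y+\nu)$ evaluated at two different states, namely $\widehat\mu^*_{\rho_n}(\nu)=g_\nu(y^*_{\rho_n})$ and $\widehat\mu_{\rho_n}(\nu)=g_\nu(y_{\rho_n})$, where $y^*_{\rho_n}$ and $y_{\rho_n}$ both start from $y_{s_n}$ at time $s_n$ and evolve for a time $\theta\Delta_n$.

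Conditioning on $\gF_{s_n}$, I would apply the first-order expansion of~\cref{cor:dynkin_first_order} to $g_\nu$ along each process. The intermediate process is a compound Poisson process with the frozen intensity $\widehat\mu_{s_n}(\cdot,y_{s_n})$, so its generator at $y_{s_n}$ is $\gL^* g_\nu(y_{s_n})=\int_\sD(g_\nu(y_{s_n}+\nu')-g_\nu(y_{s_n}))\,\widehat\mu_{s_n}(\nu',y_{s_n})\gamma(\dif\nu')$, while the true backward process has generator $\gL g_\nu(y_{s_n})=\int_\sD(g_\nu(y_{s_n}+\nu')-g_\nu(y_{s_n}))\,\mu_{s_n}(\nu',y_{s_n})\gamma(\dif\nu')$ (the $\partial_s$ part of~\eqref{eq:generator} vanishes as $g_\nu$ carries no explicit $s$-dependence). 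The zeroth-order terms $g_\nu(y_{s_n})$ cancel exactly, leaving
\[
\E\bigl[g_\nu(y^*_{\rho_n})-g_\nu(y_{\rho_n})\bigm|\gF_{s_n}\bigr]=\theta\Delta_n\,(\gL^*-\gL)\,g_\nu(y_{s_n})+\gO(\Delta_n^2),
\]
where the remainders are uniform in $\nu$ since~\cref{ass:smoothness} makes $g_\nu$ and the second-order generators $\gL^2 g_\nu$, $(\gL^*)^2 g_\nu$ bounded and $\gamma$ is finite.

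The first-order term then depends only on the estimation gap at the grid point $s_n$, because $(\gL^*-\gL)g_\nu(y_{s_n})=\int_\sD(g_\nu(y_{s_n}+\nu')-g_\nu(y_{s_n}))(\widehat\mu_{s_n}(\nu')-\mu_{s_n}(\nu'))\gamma(\dif\nu')$; using boundedness of $g_\nu$ and finiteness of $\gamma$, integrating over $\nu$ and taking the outer expectation bounds this by $\theta\Delta_n\,\E[\int_\sD|\widehat\mu_{s_n}(\nu')-\mu_{s_n}(\nu')|\gamma(\dif\nu')]\lesssim\Delta_n\epsilon_\roII$ via part~(2) of~\cref{ass:estimation} at $s_n$, while the integrated remainder stays $\gO(\Delta_n^2)$. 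Multiplying the resulting bound $\Delta_n^2+\Delta_n\epsilon_\roII$ by $(1-\theta)\Delta_n$ gives $\E[\mathrm{(II.3)}]\lesssim\Delta_n^3+\Delta_n^2\epsilon_\roII$. The main obstacle is conceptual rather than computational: one must work with the \emph{signed} mean difference rather than $\E|\cdot|$ (which is only $\gO(\Delta_n)$, since the two independently driven processes differ with probability $\gO(\Delta_n)$), so that the leading-order $\tau$-leaping contributions cancel and leave the estimation gap as the sole first-order source; the secondary subtlety is instantiating the two distinct generators correctly and verifying that~\cref{ass:smoothness} furnishes step-uniform control of the second-order remainders.
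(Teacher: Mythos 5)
Your proposal is correct and follows essentially the same route as the paper's proof: both exploit the cancellation of the zeroth-order terms in the \emph{signed} mean difference, apply the first-order Dynkin expansion (\cref{cor:dynkin_first_order}) along the intermediate process (frozen intensity $\widehat\mu_{s_n}$) and the true backward process, identify the generator difference $(\gL^*-\gL)$ as an integral against $\widehat\mu_{s_n}-\mu_{s_n}$, and invoke part~(2) of~\cref{ass:estimation} to get the $\Delta_n\epsilon_\roII$ first-order contribution, with the remaining $\dif s$ integration supplying the extra factor of $\Delta_n$. Your observation that one must track the signed difference rather than $\E|\cdot|$ is exactly the mechanism the paper's proof relies on, and your bookkeeping of the elapsed time as $\theta\Delta_n$ is if anything slightly more careful than the paper's.
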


\begin{proof}

	First, we rewrite the error term $\mathrm{(II.3)}$ as
	\begin{equation}
		\begin{aligned}
			\E\left[\mathrm{(II.3)}\right]
			 & = \E\left[\int_{\rho_n}^{s_{n+1}} \int_{\sD} \left(
			\widehat \mu_{\rho_n}^*(\nu) - \widehat \mu_{\rho_n}(\nu)
			\right) \gamma(\dif \nu) \dif s \right]                \\
			 & \lesssim \int_{\rho_n}^{s_{n+1}} \int_{\sD} \left(
			\E\left[\widehat \mu_{\rho_n}^*(\nu)\right]
			- \E\left[\widehat \mu_{\rho_n}(\nu)\right]
			\right) \gamma(\dif \nu) \dif s.
		\end{aligned}
		\label{eq:II.3.2.2}
	\end{equation}

	Then we expand the integrand by applying the Dynkin's formula (\cref{thm:dynkin} and~\cref{cor:dynkin_first_order}) to the function $\widehat \mu_s(\nu)$ w.r.t. the intermediate process $(y_s^*)_{s\in[s_n, \rho_n]}$ and the process $(y_s)_{s\in[s_n, \rho_n]}$ respectively as follows
	\begin{equation*}
		\begin{aligned}
			  & \E\left[\widehat \mu_{\rho_n}^*(\nu)\right]
			- \E\left[\widehat \mu_{\rho_n}(\nu)\right]                                                       \\
			= & \E\left[\widehat \mu_{s_n}(\nu)
				+ \gL^* \widehat \mu_{s_n}(\nu) \Delta_n + \gO(\Delta_n^2)\right]
			- \E\left[\widehat \mu_{s_n}(\nu) + \gL \widehat \mu_{s_n}(\nu) \Delta_n + \gO(\Delta_n^2)\right] \\
			= & \E\left[(\gL^* - \gL) \widehat \mu_{s_n}(\nu) \Delta_n\right] + \gO(\Delta_n^2),
		\end{aligned}
	\end{equation*}
	where the generators $\gL^*$ and $\gL$ are defined as in~\eqref{eq:generator} w.r.t. the processes $(y_s^*)_{s\in[s_n, \rho_n]}$ and $(y_s)_{s\in[s_n, \rho_n]}$, respectively, \emph{i.e.}, for any function $f \in C^1([s_n, \rho_n] \times \sX)$, we have
	\begin{equation}
		\label{eqn:different generators}
		\begin{aligned}
			\gL^* f_s(y) & = \partial_s f_s(y) + \int_\sD \left( f_s(y + \nu) - f_s(y) \right) \widehat \mu_{s_n}(\nu) \gamma(\dif \nu), \\
			\gL f_s(y)   & = \partial_s f_s(y) + \int_\sD \left( f_s(y + \nu) - f_s(y) \right) \mu_s(\nu) \gamma(\dif \nu).
		\end{aligned}
	\end{equation}
	Therefore, for the term $\E\left[\left|(\gL^* - \gL) \widehat \mu_{s_n}(\nu)\right|\right]$ evaluated at $s = s_n$, we have
	\begin{equation}
		\label{eqn:bound on difference between two generators}
		\begin{aligned}
			\E\left[\left|(\gL^* - \gL) \widehat \mu_{s_n}(\nu)\right|\right]
			 & = \E \left[\left|\int_\sD \left( \widehat \mu_{s_n}(y + \nu) - \widehat \mu_{s_n}(y) \right) \left(\widehat \mu_{s_n}(\nu) - \mu_{s_n}(\nu)\right) \gamma(\dif \nu)\right| \right] \\
			 & \lesssim \E \left[\int_\sD \left| \widehat \mu_{s_n}(\nu) - \mu_{s_n}(\nu) \right| \gamma(\dif \nu) \right] \lesssim \epsilon_\roII,
		\end{aligned}
	\end{equation}
	where we used the assumption on the estimation error (\cref{ass:estimation}) in the last inequality. Then we can further reduce~\eqref{eq:II.3.2.2} to
	\begin{equation*}
		\int_{\rho_n}^{s_{n+1}} \int_{\sD} \left(
		\E\left[\widehat \mu_{\rho_n}^*(\nu)\right]
		- \E\left[\widehat \mu_{\rho_n}(\nu)\right]
		\right) \gamma(\dif \nu) \dif s
		\lesssim  \int_{\rho_n}^{s_{n+1}} \left(\epsilon_\roII \Delta_n + \gO(\Delta_n^2)\right) \dif s \lesssim \epsilon_\roII \Delta_n^2 + \Delta_n^3,
	\end{equation*}
	and the proof is complete.
\end{proof}

\begin{corollary}
	For the interval $(s_n, s_{n+1}]$ for $n\in[0:N-1]$, we have the following error bound:
	\begin{equation*}
		\begin{aligned}
			\E\left[\mathrm{(II.5)}\right] = & \E\bigg[\int_{\rho_n}^{s_{n+1}} \int_{\sD}
			(\alpha_1 \mu_{\rho_n}(\nu) - \alpha_2 \mu_{s_n}(\nu)) \log \left( \alpha_1 \widehat \mu_{\rho_n}(\nu) - \alpha_2 \widehat\mu_{s_n}(\nu) \right)
			\gamma(\dif \nu) \dif s                                                       \\
			                                 & \ - \int_{\rho_n}^{s_{n+1}} \int_{\sD}
			(\alpha_1 \mu_{\rho_n}(\nu) - \alpha_2 \mu_{s_n}(\nu)) \log \left( \alpha_1 \widehat \mu_{\rho_n}^*(\nu) - \alpha_2 \widehat\mu_{s_n}(\nu) \right)
			\gamma(\dif \nu) \dif s \bigg]\\
			\lesssim& \Delta_n^3 + \Delta_n^2 \epsilon_\roII.
		\end{aligned}
	\end{equation*}
	\label{cor:II.5}
\end{corollary}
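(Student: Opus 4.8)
The plan is to exploit that the two integrands in $\mathrm{(II.5)}$ are identical except for the replacement of $\widehat\mu^*_{\rho_n}$ by $\widehat\mu_{\rho_n}$ inside the logarithm, so that the whole term measures only the discrepancy between the estimated intensity evaluated along the intermediate process $y^*$ and along the true process $y$ at the $\theta$-section point. First I would merge the two integrals and write
$$\mathrm{(II.5)} = \int_{\rho_n}^{s_{n+1}}\int_\sD (\alpha_1\mu_{\rho_n}(\nu) - \alpha_2\mu_{s_n}(\nu))\Big(\log(\alpha_1\widehat\mu_{\rho_n}(\nu) - \alpha_2\widehat\mu_{s_n}(\nu)) - \log(\alpha_1\widehat\mu^*_{\rho_n}(\nu) - \alpha_2\widehat\mu_{s_n}(\nu))\Big)\gamma(\dif\nu)\dif s.$$
Both arguments of the logarithms are nonnegative by the standing positivity hypothesis and are bounded away from $0$ and $\infty$ by the two-sided bounds of \cref{ass:smoothness}, so the mean value theorem gives $|\log(\alpha_1\widehat\mu_{\rho_n} - \alpha_2\widehat\mu_{s_n}) - \log(\alpha_1\widehat\mu^*_{\rho_n} - \alpha_2\widehat\mu_{s_n})| \lesssim |\widehat\mu_{\rho_n}(\nu) - \widehat\mu^*_{\rho_n}(\nu)|$. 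Since the prefactor $|\alpha_1\mu_{\rho_n} - \alpha_2\mu_{s_n}|$ is likewise bounded and the time integral has length $(1-\theta)\Delta_n$, this reduces the claim to the estimate
$$\int_\sD \E\big[|\widehat\mu^*_{\rho_n}(\nu) - \widehat\mu_{\rho_n}(\nu)|\big]\gamma(\dif\nu) \lesssim \Delta_n\epsilon_\roII + \Delta_n^2,$$
after which multiplication by the interval length yields $\E[\mathrm{(II.5)}] \lesssim \Delta_n^2\epsilon_\roII + \Delta_n^3$, as desired.

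The core of the argument is therefore this last display, which is the analogue of \cref{prop:II.3} but now for the expectation of the absolute difference rather than the difference of expectations. I would prove it by coupling: construct $y^*$ and $y$ on the common augmented Poisson space of \cref{rem:construction}, both started from $y_{s_n}$ and driven by the same base measure $N(\dif\tau,\dif\nu,\dif\xi)$, with $y$ jumping according to the thinning threshold $\mu_\tau$ and $y^*$ according to $\widehat\mu_{s_n}$. As long as the two processes have not yet separated they occupy a common state $y_{\tau^-}$, so a given atom $(\tau,\nu,\xi)$ splits them only when $\xi$ lies between $\mu_\tau(\nu,y_{\tau^-})$ and $\widehat\mu_{s_n}(\nu,y_{\tau^-})$; hence the probability that they differ at $\rho_n$ is at most $\E\big[\int_{s_n}^{\rho_n}\int_\sD |\mu_\tau(\nu) - \widehat\mu_{s_n}(\nu)|\gamma(\dif\nu)\dif\tau\big]$. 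Splitting $|\mu_\tau - \widehat\mu_{s_n}| \leq |\mu_\tau - \mu_{s_n}| + |\mu_{s_n} - \widehat\mu_{s_n}|$ and bounding the first difference by $\gO(\Delta_n)$ through the $C^2$-smoothness of \cref{ass:smoothness} and the second by $\epsilon_\roII$ through \cref{ass:estimation}, the separation probability is $\lesssim \Delta_n(\Delta_n + \epsilon_\roII)$. On the no-separation event $y^*_{\rho_n} = y_{\rho_n}$ forces $\widehat\mu^*_{\rho_n}(\nu) = \widehat\mu_{\rho_n}(\nu)$ exactly, while on its complement the integrand is $\gO(1)$ by the upper bound on the intensity; summing over the finitely many $\nu\in\sD$ then gives the required $\Delta_n\epsilon_\roII + \Delta_n^2$.

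The delicate point — and the reason this term is genuinely harder than \cref{prop:II.3} — is precisely the passage to the expectation of the \emph{absolute} difference. In \cref{prop:II.3} the coefficient is trivial, so one may subtract the two Dynkin expansions and the leading $\gO(\Delta_n)$ contributions cancel up to the generator gap $(\gL^* - \gL)\widehat\mu_{s_n} = \gO(\epsilon_\roII)$; here the bounded-but-nonconstant prefactor $\alpha_1\mu_{\rho_n} - \alpha_2\mu_{s_n}$ is correlated with $\widehat\mu_{\rho_n}$, and a naive sign-preserving Dynkin expansion leaves an uncancelled carr\'e-du-champ term of size $\gO(\Delta_n)$ per unit time, which would only yield the insufficient bound $\gO(\Delta_n^2)$. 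The coupled construction circumvents this by forcing $y^*$ and $y$ to agree pathwise except on the rare separation event, so that the $\epsilon_\roII$-gain is recovered at the level of the absolute value rather than merely in expectation; the main thing to verify carefully is that this coupling preserves the conditional law of the intermediate process given $\gF_{s_n}$, so that it may legitimately be used to bound the expectation appearing in \cref{thm:girsanov_trapezoidal}. The remaining work — carrying the $\gO(\Delta_n^2)$ remainders from the Dynkin and mean-value steps and checking that the $C^2$ bounds are uniform in $n$ — is routine and parallels the treatment of \cref{prop:II.3,cor:I.4_II.6}.
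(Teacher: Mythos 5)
Your reduction of $\mathrm{(II.5)}$ to the single estimate $\E\left[\left|\widehat\mu^*_{\rho_n}(\nu)-\widehat\mu_{\rho_n}(\nu)\right|\right]\lesssim \Delta_n\epsilon_\roII+\Delta_n^2$ --- via the positivity and two-sided boundedness of the extrapolated intensities and the Lipschitz property of the logarithm, followed by multiplication by the interval length --- is exactly the paper's first step (see \eqref{eqn: II.5 bound intermediate step one}). Where you diverge is in how that key estimate is established, and there your argument has a genuine gap. The quantity $\E\left[\left|\widehat\mu_{\rho_n}(\nu,y^*_{\rho_n})-\widehat\mu_{\rho_n}(\nu,y_{\rho_n})\right|\right]$ is a functional of the \emph{joint} law of $(y_{\rho_n},y^*_{\rho_n})$, and in both the algorithm and the analysis the intermediate process is driven by Poisson randomness that is \emph{independent} of the randomness driving the backward process (this is stipulated explicitly in \cref{thm:girsanov_trapezoidal}). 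Your thinning construction replaces this independent pair by a synchronously coupled one. That preserves the conditional marginal law of $y^*$ given $\gF_{s_n}$ --- which is precisely the property you propose to ``verify carefully'' --- but preserving the marginal is not the relevant condition: the expectation of an absolute difference is not determined by the marginals. Under the actual independent coupling the two paths disagree at time $\rho_n$ with probability $\Theta(\Delta_n)$ even when $\widehat\mu\equiv\mu$, since each process jumps independently with probability of order $\Delta_n$; your ``no-separation'' event therefore does not have probability $1-\gO(\Delta_n(\Delta_n+\epsilon_\roII))$ under the law you actually need, and the argument as written bounds a different number.

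The paper's proof avoids path couplings altogether: it expands $\E[\widehat\mu^*_{\rho_n}(\nu)]$ and $\E[\widehat\mu_{\rho_n}(\nu)]$ to first order via Dynkin's formula (\cref{thm:dynkin}, \cref{cor:dynkin_first_order}) with respect to the two generators $\gL^*$ (frozen intensity $\widehat\mu_{s_n}$) and $\gL$ (true intensity $\mu_s$) given in \eqref{eqn:different generators}; the zeroth-order terms cancel exactly, and the first-order terms differ by $\Delta_n(\gL^*-\gL)\widehat\mu_{s_n}(\nu)$, whose expected absolute value is $\lesssim\epsilon_\roII$ by \cref{ass:estimation} --- this is the chain \eqref{eqn:bound on difference between two generators}--\eqref{eqn: II.5 bound intermediate step two}, yielding the stated $\Delta_n\epsilon_\roII+\gO(\Delta_n^2)$. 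To repair your version you would have to either retreat to the difference of expectations (where the choice of coupling is immaterial, as in \cref{prop:II.3}) or explicitly handle the $\Theta(\Delta_n)$-probability event on which the independently driven paths disagree and show that its contribution to $\mathrm{(II.5)}$ cancels to the required order; neither is done in your write-up.
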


\begin{proof}
	Since the two integrands in $\mathrm{(II.5)}$ only differ by replacing $\widehat \mu_{\rho_n}^*(\nu)$ with $\widehat \mu_{\rho_n}(\nu)$, we have the following upper bound by using the assumption on the boundedness of the intensities (\cref{ass:smoothness} (II))
	\begin{equation}
		\label{eqn: II.5 bound intermediate step one}
		\begin{aligned}
			&\E\left[\mathrm{(II.5)}\right] \\
            \lesssim & \E\left[\int_{\rho_n}^{s_{n+1}} \int_{\sD}
			\left|\alpha_1 \mu_{\rho_n}(\nu) - \alpha_2 \mu_{s_n}(\nu)\right| \dfrac{1}{\alpha_1 \widehat \mu_{\rho_n}(\nu) - \alpha_2 \widehat\mu_{s_n}(\nu)} \alpha_1 \left|\widehat \mu_{\rho_n}(\nu) - \widehat \mu_{\rho_n}^*(\nu)\right| \gamma(\dif \nu) \dif s\right] \\
			\lesssim                                & \E\left[\int_{\rho_n}^{s_{n+1}} \int_{\sD} \left|\widehat \mu_{\rho_n}(\nu) - \widehat \mu_{\rho_n}^*(\nu)\right| \gamma(\dif \nu) \dif s \right]\\
            \lesssim & \Delta_n\E \left[\int_\sD \left| \widehat \mu_{\rho_n}(\nu) - \widehat \mu_{\rho_n}^*(\nu) \right|\right] \gamma(\dif \nu)
		\end{aligned}
	\end{equation}
	Applying the same arguments as in~\cref{prop:II.3}, which uses the generators $\gL$ and $\gL^\ast$ defined in~\eqref{eqn:different generators}, we can bound the RHS above as follows
	\begin{equation}
		\label{eqn: II.5 bound intermediate step two}
		\begin{aligned}
			&\E\left[\left|\widehat\mu^*_{\rho_n}(\nu) - \widehat \mu_{\rho_n}(\nu)\right|\right] \\
            = & \E\left[\left|\left(\widehat \mu_{s_n}(\nu)
				+ \gL^* \widehat \mu_{s_n}(\nu) \Delta_n + \gO(\Delta_n^2)\right)
			-\left(\widehat \mu_{s_n}(\nu) + \gL \widehat \mu_{s_n}(\nu) \Delta_n + \gO(\Delta_n^2)\right)\right|\right]                                                                                                                            \\
			\lesssim                                                                               & \Delta_n\E\left[\left|(\gL^* - \gL) \widehat \mu_{s_n}(\nu)\right|\right] + \gO(\Delta_n^2) \lesssim \Delta_n \epsilon_\roII + \gO(\Delta_n^2)
		\end{aligned}
	\end{equation}
	where the last inequality follows from~\eqref{eqn:bound on difference between two generators}. Substituting~\eqref{eqn: II.5 bound intermediate step two} into~\eqref{eqn: II.5 bound intermediate step one} then yields the desired upper bound.
\end{proof}

\begin{proposition}
	For the interval $(s_n, s_{n+1}]$ with $n\in[0:N-1]$, we have the following error bound:
	\begin{equation*}
		\begin{aligned}
			\E\left[\mathrm{(I.3)}\right]
			  =& \E\left[\int_{s_n}^{\rho_n} \int_{\sD}\left(
				\mu_s(\nu) - \mu_{s_n}(\nu) \right) \left( \log \left(\alpha_1\widehat\mu^*_{\rho_n}(\nu) - \alpha_2\widehat\mu_{s_n}(\nu) \right) - \log \widehat \mu_{s_n}(\nu)
				\right) \gamma(\dif \nu) \dif s\right]\\
			\lesssim& \Delta_n^3.
		\end{aligned}
	\end{equation*}
	\label{prop:I.3}
\end{proposition}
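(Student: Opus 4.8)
The plan is to exploit the product structure of the integrand together with the independence between the true backward process and the intermediate process. Conditioning on $\gF_{s_n}$ (so that $y_{s_n}$ is fixed), I would write the integrand as $A_s(\nu)\,B(\nu)$, where $A_s(\nu) = \mu_s(\nu) - \mu_{s_n}(\nu)$ and $B(\nu) = \log\left(\alpha_1\widehat\mu^*_{\rho_n}(\nu) - \alpha_2\widehat\mu_{s_n}(\nu)\right) - \log\widehat\mu_{s_n}(\nu)$. The factor $A_s(\nu)$ depends only on the true backward process $(y_s)_{s\in(s_n,\rho_n]}$, while $B(\nu)$ does not depend on $s$ and depends only on the intermediate process $y^*_{\rho_n}$. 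As noted in the proof of~\cref{cor:I.4_II.6}, these two processes are driven by independent Poisson random measures, so conditionally on $\gF_{s_n}$ the random variables $A_s(\nu)$ and $B(\nu)$ are independent. Hence $\E[A_s(\nu)B(\nu)\mid\gF_{s_n}] = \E[A_s(\nu)\mid\gF_{s_n}]\,\E[B(\nu)\mid\gF_{s_n}]$, and this factorization is the crux that upgrades two first-order estimates into a single third-order bound.

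Next I would bound each factor separately to order $\gO(\Delta_n)$. For the first factor, applying Dynkin's formula (\cref{thm:dynkin} and~\cref{cor:dynkin_first_order}) to $f_s(y) = \mu_s(\nu, y)$ along the true process gives $\E[\mu_s(\nu)\mid\gF_{s_n}] = \mu_{s_n}(\nu) + \gO(s - s_n)$, so $\E[A_s(\nu)\mid\gF_{s_n}] = \gO(\Delta_n)$ uniformly in $\nu$, using that $\gL\mu$ is bounded under~\cref{ass:smoothness}. For the second factor, I would first use $\alpha_1 - \alpha_2 = 1$ to rewrite the argument of the first logarithm as $\widehat\mu_{s_n}(\nu) + \alpha_1\left(\widehat\mu^*_{\rho_n}(\nu) - \widehat\mu_{s_n}(\nu)\right)$; since $\widehat\mu_{s_n}$ is bounded below and the extrapolated intensity is positive by hypothesis, $\log$ is Lipschitz on the relevant range, yielding $|B(\nu)| \lesssim |\widehat\mu^*_{\rho_n}(\nu) - \widehat\mu_{s_n}(\nu)|$. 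Then, exactly as in the proof of~\cref{prop:II.3} (via the two generators $\gL^*, \gL$ appearing in~\eqref{eqn:different generators}), a Dynkin expansion along the intermediate process gives $\E[|\widehat\mu^*_{\rho_n}(\nu) - \widehat\mu_{s_n}(\nu)|\mid\gF_{s_n}] \lesssim \Delta_n$, whence $\E[|B(\nu)|\mid\gF_{s_n}] \lesssim \Delta_n$.

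Combining the two bounds through the conditional-independence factorization yields $\left|\E[A_s(\nu)B(\nu)\mid\gF_{s_n}]\right| \le |\E[A_s(\nu)\mid\gF_{s_n}]|\,\E[|B(\nu)|\mid\gF_{s_n}] \lesssim \Delta_n^2$, uniformly in $s$ and $\nu$. Integrating over $\nu\in\sD$ (with $\gamma$ finite) and over $s\in(s_n,\rho_n]$, an interval of length $\theta\Delta_n = \gO(\Delta_n)$, and then taking the outer expectation, gives $\left|\E[\mathrm{(I.3)}]\right| \lesssim \Delta_n\cdot\Delta_n^2 = \Delta_n^3$, as claimed. The main obstacle is establishing the conditional-independence factorization cleanly and justifying the logarithmic Lipschitz estimate for $B(\nu)$ near the positivity boundary; once these are in place, the order counting ($\gO(\Delta_n)$ from each of the two conditionally independent factors, times $\gO(\Delta_n)$ from the outer time integral) delivers the third-order bound automatically, in contrast with the naive first-order bound obtainable without the independence observation.
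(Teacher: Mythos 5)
Your proposal is correct and follows essentially the same route as the paper's proof: both exploit the independence of the true backward process and the intermediate process to factor the expectation of the product, bound the factor $\mu_s(\nu)-\mu_{s_n}(\nu)$ by $\gO(\Delta_n)$ via Dynkin's formula, bound the logarithmic factor by $\gO(\Delta_n)$ using $\alpha_1-\alpha_2=1$, the lower bound on the intensities, and the generator expansion of $\widehat\mu^*_{\rho_n}-\widehat\mu_{s_n}$, and then integrate over the interval of length $\theta\Delta_n$. Your explicit conditioning on $\gF_{s_n}$ is just a slightly more careful phrasing of the same independence argument the paper invokes.
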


\begin{proof}
	First, we observe by Dynkin's formula (\cref{thm:dynkin}) that
	\begin{equation*}
		\E\left[|\mu_s(\nu) - \mu_{s_n}(\nu)|\right] = \E\left[\left|\int_{s_n}^s \gL \mu_{s_n} \dif s + \gO(\Delta_n^2) \right|\right] \lesssim \Delta_n,
	\end{equation*}
    and also
    \begin{equation}
        \label{eqn:I.3.1 bound intermediate step one}
        \E\left[|\widehat\mu_s(\nu) - \widehat\mu_{s_n}(\nu)|\right] = \E\left[\left|\int_{s_n}^s \gL^* \widehat\mu_{s_n} \dif s + \gO(\Delta_n^2) \right|\right] \lesssim \Delta_n.
    \end{equation}
	Secondly, applying the given assumption (\cref{ass:smoothness} (II)) on the boundedness of the intensities yields
	\begin{equation}
		\begin{aligned}
			&\E\left[\left|
				\log \left(\alpha_1\widehat\mu^*_{\rho_n}(\nu) - \alpha_2\widehat\mu_{s_n}(\nu) \right) - \log \widehat \mu_{s_n}(\nu)
			\right| \right]   \\
            \lesssim & \dfrac{1}{\widehat \mu_{s_n}(\nu)}\E\left[\left|
				\alpha_1\widehat\mu^*_{\rho_n}(\nu) - \alpha_2\widehat\mu_{s_n}(\nu) - \widehat \mu_{s_n}(\nu)
			\right|\right]                                                                                                                                                        \\
			\lesssim                   & \E\left[\left|
				\alpha_1\widehat\mu^*_{\rho_n}(\nu) - \alpha_2\widehat\mu_{s_n}(\nu) - \widehat \mu_{s_n}(\nu)
			\right|\right]                                                                                                                                                    \\
			\lesssim                       & \E\left[\alpha_1\left|\widehat\mu^*_{\rho_n}(\nu) - \widehat \mu_{s_n}(\nu)\right|\right]\\
            \lesssim                       & \E\left[\left|\widehat\mu^*_{\rho_n}(\nu) - \widehat \mu_{\rho_n}(\nu)\right|\right] + \E\left[\left|\widehat \mu_{\rho_n}(\nu) - \widehat \mu_{s_n}(\nu)\right|\right]\\
            \lesssim & \Delta_n + \Delta_n \epsilon_\roII + \gO(\Delta_n^2) \lesssim \Delta_n
		\end{aligned}
		\label{eq:I.3.1}
	\end{equation}
	where the last inequality follows from~\eqref{eqn: II.5 bound intermediate step two} proved above. Therefore, we may further deduce that
	\begin{equation*}
		\begin{aligned}
			&\E\left[\mathrm{(I.3)}\right]\\
            \leq & \int_{s_n}^{\rho_n} \int_{\sD} \E\left[\left| \mu_s(\nu) - \mu_{s_n}(\nu) \right|\right] \\
            &\quad\quad\quad\ \ \ \E\left[\left|\log \left(\alpha_1\widehat\mu^*_{\rho_n}(\nu) - \alpha_2\widehat\mu_{s_n}(\nu)\right) -  \log \left(\alpha_1\widehat\mu_{\rho_n}(\nu) - \alpha_2\widehat\mu_{s_n}(\nu)\right) \right|\right] \gamma(\dif \nu) \dif s \\
			\lesssim & \Delta_n^3,
		\end{aligned}
	\end{equation*}
	where the first inequality is due to the independency of $y_s$ and $y_s^*$ for $s\in[s_n, \rho_n]$, and the proof is complete.
\end{proof}


\section{Details of Numerical Experiments}
\label{app:exp}

In \cref{app:toy_exp,app:text_exp,app:image_exp,app:dllm_exp}, we present additional numerical results for the 15-dimensional toy model, text generation, image generation, and diffusion LLM, respectively.

\subsection{15-Dimensional Toy Model}
\label{app:toy_exp}

We first derive the closed-form formula of the marginal distributions $\vp_t$ in this model. Recall that the state space $\sX=\{1,2,...,d\}$ with $d=15$, and the initial distribution is $\vp_0\in\Delta^d$. The rate matrix at any time is $\mQ=\frac{1}{d}\mE-\mI$. By solving \eqref{eq:forward}, we see that
$$\vp_t={\rm e}^{t\mQ}\vp_0=\left(\frac{1-{\rm e}^{-t}}{d}\mE+{\rm e}^{-t}\mI\right)\vp_0,$$
and therefore $\vp_t$ converges to the uniform distribution $\vp_\infty=\frac{1}{d}{\bf 1}$ as $t\to\infty$. The formula of $\vp_t$ directly yields the scores $\vs_t(x)=\frac{\vp_t}{p_t(x)}$.

During inference, we initialize at the uniform distribution $\vq_0=\vp_\infty$ and run from time $0$ to $T=12$.
The truncation error of this choice of time horizon is of the magnitude of $10^{-12}$ reflected by $\KL(\vp_T \| \vp_\infty)$, and therefore negligible.
The discrete time points form an arithmetic sequence.

We generate $10^6$ samples for each algorithm and use \texttt{np.bincount} to obtain the empirical distribution $\widehat{\vq}_T$ as the output distribution. Finally, the KL divergence is computed by
$$
	\KL(\vp_0\|\widehat{\vq}_T)=\sum_{i=1}^d p_0(i)\log\frac{p_0(i)}{\widehat{q}_T(i)}.
$$
We also perform bootstrapping for 1000 times to obtain the 95\% confidence interval of the KL divergence, the results are shown by the shaded area in~\cref{fig:toy_model}. The fitted lines are obtained by standard linear regression on the log-log scale with the slopes marked beside each line in~\cref{fig:toy_model}.

\subsection{Text Generation}
\label{app:text_exp}

For text generation, we use the small version of RADD \cite{ou2024your} checkpoint\footnote{\url{https://huggingface.co/JingyangOu/radd-lambda-dce}} trained with $\lambda$-DCE loss. We choose an early stopping time $\delta = 10^{-3}$ for a stable numerical simulation. Since RADD is a masked discrete diffusion model, we can freely choose the noise schedule $\sigma(t)$ used in the inference process. We consider the following log-linear noise schedule used in the model training,
\begin{align}
	\sigma(t) = \frac{1 - \eps}{1 - (1 - \eps)t}, \quad \bar \sigma (t) = \int_{0}^{t} \sigma(s) \mathrm{d} s = -\log ( 1- (1 - \eps)t)
	\label{eq:loglinear}
\end{align}
where we choose $\eps = 10^{-3}$.

The score function $\vs_{\theta}(\vx_t, t)$ used for computing the transition rate matrix can be computed from the RADD score model $\vp_{\theta}$ using the following formula from \cite{ou2024your},
\begin{align}
	\vs^{\theta}_t(\vx_t) = \frac{{\rm e}^{- \bar \sigma(t)}}{1 - {\rm e}^{-\bar \sigma(t)}} \vp_{\theta}(\vx_t),
	\label{eq:mask_score}
\end{align}
where the model $\vp_{\theta}$ is trained to approximate the conditional distribution of the masked positions given all unmasked positions. More specifically, let $d$ be the length of the sequence and $\{1,2,...,S\}$ be the vocabulary set (not including the mask token). Then given a partially masked sequence $\vx=(x^1,...,x^d)$, the model $\vp_\theta(\vx)$ outputs a $d\times S$ matrix whose $(\ell,s)$ element approximates $\P_{\mX\sim\vp_{\rm data}}(x^\ell=s|\mX^{\rm UM}=\vx^{\rm UM})$ when $x^\ell$ is mask, and is $\vone_{X^\ell,s}$ if otherwise. Here, $\vx^{\rm UM}$ represents the unmasked portion of the sequence $\vx$.

We adopt a uniform discretization of the time interval $(\delta, 1]$. For $\theta$-RK-2 and $\theta$-Trapezoidal, we pick $\theta = \frac{1}{2}$. We compare our proposed $\theta$-RK-2 and $\theta$-Trapezoidal with the Euler method, Tweedie $\tau$-leaping, $\tau$-leaping, and we present full results across all NFEs ranging from $16$ to $1024$ in \cref{tab:perplexity_full_llama,tab:unigram_entropy_full_llama}. For each method, we generate $1024$ samples and compute the average perplexities and unigram entropy on both GPT-2 large\footnote{\url{https://huggingface.co/openai-community/gpt2-large}} and LLaMA 3.\footnote{\url{https://huggingface.co/meta-llama/Meta-Llama-3-8B}} All the experiments are run on a single NVIDIA A100 GPU.

\begin{table}[ht]
\caption{Generative perplexity of texts generated by different sampling algorithms on GPT-2 large. Lower values are better, with the best in \textbf{bold}.}
\centering
\resizebox{\linewidth}{!}{
\begin{tabular}{lccccccc}
\toprule
Method       & NFE $=16$                   & NFE $=32$                  & NFE $=64$                  & NFE $=128$      & NFE $=256$                 & NFE $=512$                 & NFE $=1024$                \\
\midrule
FHS                    & $\leq 307.425$              & $\leq 186.594$             & $\leq 141.625$             & $\leq 122.732$           & $\leq 113.310$              & $\leq 113.026$              & $\leq 109.406$              \\
Euler                  & $\leq 277.962$              & $\leq 160.586$             & $\leq 111.597$             & $\leq 86.276$           & $\leq 68.092$              & $\leq 55.622$              & $\leq 44.686$              \\
Tweedie $\tau$-leaping & $\leq 277.133$              & $\leq 160.248$             & $\leq 110.848$             & $\leq 85.738$      & $\leq 70.102$              & $\leq 55.194$              & $\leq 44.257$              \\
$\tau$-leaping         & $\leq 126.835$              & $\leq 96.321$             & $\leq 69.226$              & $\leq 52.366$            & $\leq 41.694$             & $\leq 33.789$              & $\leq 28.797$              \\
Semi-AR              & $\leq 2857.469$             & $\leq 1543.302$            & $\leq 741.184$              & $\leq 360.793$        & $\leq 222.303$             & $\leq 164.162$             & $\leq 147.406$             \\
$\theta$-RK-2          & $\leq 127.363$              & $\leq 109.351$             & $\leq 86.102$              & $\leq 64.317$              & $\leq 49.816$             & $\leq 40.375$              & $\leq 33.971$              \\
$\theta$-Trapezoidal   & $\boldsymbol{\leq 123.585}$ & $\boldsymbol{\leq 89.912}$ & $\boldsymbol{\leq 66.549}$ & $\boldsymbol{\leq 49.051}$  & $\boldsymbol{\leq 39.959}$ & $\boldsymbol{\leq 32.456}$ & $\boldsymbol{\leq 27.553}$ \\
\bottomrule
\end{tabular}
}
\label{tab:perplexity_full_gpt}
\end{table}

\begin{table}[ht]
\caption{Generative perplexity of texts generated by different sampling algorithms on LLaMA 3. Lower values are better, with the best in \textbf{bold}. }
\centering
\resizebox{\linewidth}{!}{
\begin{tabular}{lccccccc}
\toprule
Method       & NFE $=16$                   & NFE $=32$                  & NFE $=64$                  & NFE $=128$      & NFE $=256$                 & NFE $=512$                 & NFE $=1024$                \\
\midrule
FHS                    & $\leq 342.498$              & $\leq 210.742$             & $\leq 155.258$ & $\leq 132.135$         & $\leq 127.526	$             & $\leq 123.013$             & $\leq 120.791$ \\
Euler                  & $\leq 318.413$              & $\leq 175.555$             & $\leq 125.955$             & $\leq 91.051$           & $\leq 75.245$              & $\leq 59.971$              & $\leq 49.406$              \\
Tweedie $\tau$-leaping & $\leq 316.744$              & $\leq 172.941$             & $\leq 121.248$             & $\leq 94.253$      & $\leq 75.403$              & $\leq 59.943$              & $\leq 49.239$              \\
$\tau$-leaping         & $\leq 152.867$              & $\leq 117.930$             & $\leq 86.980$              & $\leq 68.090$            & $\boldsymbol{\leq 53.664} $             & $\leq 44.676$              & $\leq 38.293$              \\
Semi-AR & $\leq 2696.883$ & $\leq 1684.973$ & $\leq 829.391$ & $\leq 410.177$ & $\leq 251.963$ & $\leq 166.927$ & $\leq 162.093$ \\
$\theta$-RK-2          & $\leq 150.439$              & $\leq 132.090$             & $\leq 107.066$              & $\leq 80.742$              & $\leq 63.277$             & $\leq 52.563$              & $\leq 44.687$              \\
$\theta$-Trapezoidal   & $\boldsymbol{\leq 146.027}$ & $\boldsymbol{\leq 113.260}$ & $\boldsymbol{\leq 83.456}$ & $\boldsymbol{\leq 66.071}$  & $\leq 54.307$ & $\boldsymbol{\leq 44.293}$ & $\boldsymbol{\leq 35.524}$ \\
\bottomrule
\end{tabular}
}
\label{tab:perplexity_full_llama}
\end{table}

\begin{table}[ht]
\caption{Unigram entropy of texts generated by different sampling algorithms on LLaMA 3.}
\centering
\resizebox{\linewidth}{!}{
\begin{tabular}{lccccccc}
\toprule
Method       & NFE $=16$  & NFE $=32$  & NFE $=64$  & NFE $=128$ & NFE $=256$ & NFE $=512$ & NFE $=1024$ \\
\midrule
FHS                    & $7.843$    & $7.793$    & $7.748$    & $7.712$    & $7.714$    & $7.716$    & $7.717$     \\
Euler                  & $7.785$    & $7.677$    & $7.594$    & $7.446$    & $7.343$    & $7.158$    & $6.962$     \\
Tweedie $\tau$-leaping & $7.786$    & $7.675$    & $7.564$    & $7.453$    & $7.345$    & $7.151$    & $6.970$     \\
$\tau$-leaping         & $7.048$    & $7.122$    & $7.016$    & $6.890$    & $6.706$    & $6.537$    & $6.407$     \\
Semi-AR                & $8.019$    & $8.056$    & $7.994$    & $7.908$    & $7.836$    & $7.771$    & $7.810$     \\
$\theta$-RK-2          & $6.772$    & $7.017$    & $7.085$    & $7.010$    & $6.831$    & $6.682$    & $6.548$     \\
$\theta$-Trapezoidal   & $7.126$    & $7.163$    & $7.033$    & $6.919$    & $6.740$    & $6.532$    & $6.412$     \\
\bottomrule
\end{tabular}
}
\label{tab:unigram_entropy_full_llama}
\end{table}

From the results in \cref{tab:perplexity_full_gpt,tab:perplexity_full_llama,tab:unigram_entropy_full_llama}, we observe that $\theta$-Trapezoidal almost outperforms all other approaches and generates samplers with better perplexities across almost all NFEs. We also noticed that both the Euler method and Tweedie $\tau$-leaping perform similarly, and are beaten by a large margin by $\theta$-RK-2 and $\tau$-leaping.

\begin{table}[!ht]
\centering
\caption{Percentage of positive extrapolated intensities for different algorithms across NFE values.}
\label{tab:positive_intensity}
\resizebox{\columnwidth}{!}{
\begin{tabular}{lcccccc}
\toprule
Method & NFE $=32$ & NFE $=64$ & NFE $=128$ & NFE $=256$ & NFE $=512$ & NFE $=1024$ \\
\midrule
$\theta$-RK-2 & $97.21\pm3.1$ & $98.31\pm2.0$ & $98.01\pm1.3$ & $99.27\pm0.9$ & $99.44\pm0.7$ & $99.52\pm0.6$ \\
$\theta$-Trapezoidal & $95.67\pm4.8$ & $97.06\pm3.6$ & $98.22\pm2.4$ & $98.87\pm1.6$ & $99.24\pm1.1$ & $99.43\pm0.9$ \\
\bottomrule
\end{tabular}
}
\end{table}

In~\cref{tab:positive_intensity}, we present the percentage of positive extrapolated intensities for different algorithms across NFE values. This partially validates the assumption in our theoretical analysis (\cref{thm:trapezoidal,thm:midpoint}) that the intensity remains positive throughout the sampling process.

\subsection{Image Generation}
\label{app:image_exp}
For the image generation, we use the checkpoint of MaskGIT \cite{chang2022maskgit, besnier2023pytorch} reproduced in Pytorch\footnote{\url{https://github.com/valeoai/Maskgit-pytorch}}. Recall that the MaskGIT is a masked image model which, given a partially masked sequence, outputs the conditional distributions of the masked positions given the unmasked portion, just like the model $\vp_\theta(\cdot)$ in the aforementioned masked text model, RADD. Therefore, by similarly introducing a time noise schedule $\sigma(t)$ (for which we adopt the same log-linear schedule \eqref{eq:loglinear} in our experiment), we obtain a masked discrete diffusion model akin to the RADD. The score function can be computed accordingly using the model output as in \eqref{eq:mask_score}.

We choose an early stopping time $\delta = 10^{-3}$, and adopt a uniform discretization of the time interval $(\delta, 1]$ for $\theta$-RK-2, $\theta$-Trapezoidal, $\tau$-leaping and the Euler method. For parallel decoding, we use a linear randomization strategy in the re-masking step and an $\arccos$ masking scheduler, as recommended in \cite{chang2022maskgit}. For each method, we generate $50$k samples in a class-conditioned way and compute its FID against the validation split of ImageNet. We use classifier-free guidance to enhance generation quality and set the guidance strength to $w = 3$.

\begin{figure}[!ht]
	\begin{center}
		\centerline{\includegraphics[width=0.55\columnwidth]{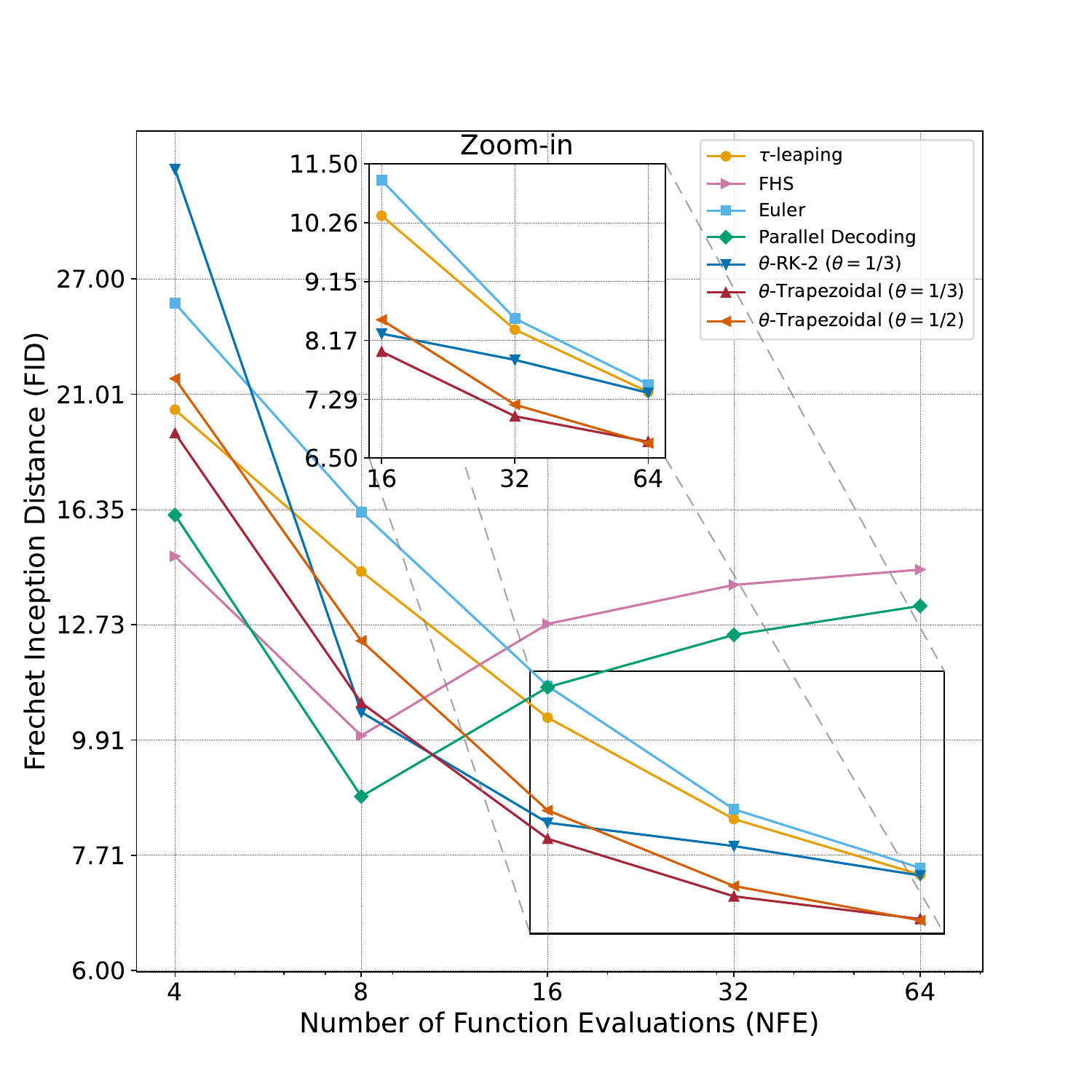}}
		\caption{FID of images generated by sampling algorithms vs. number of function evaluations (NFE) with different parameter choices. Lower values are better.}
		\label{fig:fid_full}
	\end{center}
\end{figure}

We present the full results for NFE ranging from $4$ to $64$ in \cref{fig:fid_full}. All the experiments are run on 1 NVIDIA A100. Notably, $\theta$-Trapezoidal with $\theta = \frac{1}{3}$ is the best-performing method except for extremely low NFE budgets. While $\theta$-Trapezoidal with $\theta = \frac{1}{2}$ in general demonstrates a less competitive performance, it converges to the same generation quality as $\theta = \frac{1}{3}$ in the high NFE regime. We also noticed that when using extrapolation with $\theta = \frac{1}{3}$, $\theta$-RK-2 beats $\tau$-leaping for NFE larger than $8$, which again accords with our theoretical prediction of its competitive performance in $\theta \in (0, \frac{1}{2}]$ regime.

\begin{figure}[!ht]
	\begin{center}
		\centerline{\includegraphics[width=0.7 \columnwidth]{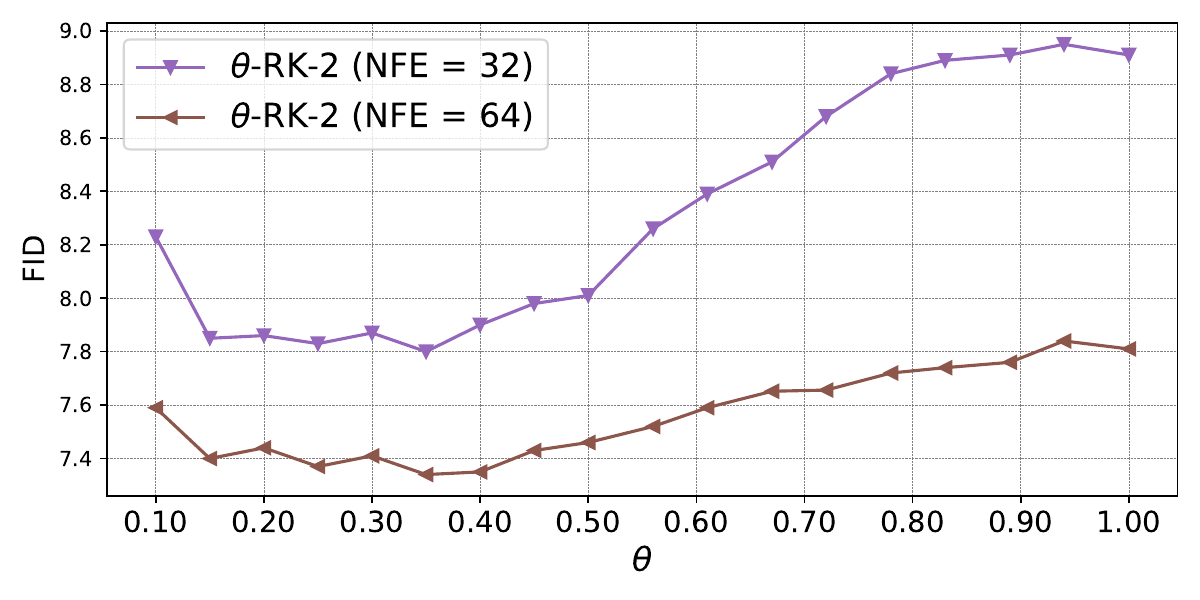}}
		\caption{Sampling quality vs. $\theta\in(0,1]$ in $\theta$-RK-2 algorithm. Sampling quality is quantified through FID. }
		\label{fig:theta_rk2}
	\end{center}
\end{figure}

To investigate the robustness of $\theta$-RK-2 with respect to the choice of $\theta$, we also benchmark its performance across multiple choices at NFE $32$ and $64$, and we present the results in \cref{fig:theta_rk2}. We observe that the performance of $\theta$-RK-2 has a flat landscape around the optimal $\theta$ choices, which fall in the range $[0.15, 0.4]$. In general, as shown by the curve, the method performs better when extrapolation is used to compute the transition rate matrix, confirming the correctness of our theory (\cref{thm:midpoint}) and our discussions. Similar to the behavior of $\theta$-Trapezoidal method in \cref{fig:trap_theta_ablation}, the performance of $\theta$-RK-2 has a flat landscape around the optimal $\theta$ choices, which typically falls in the range $[0.3, 0.5]$. In general, as shown by the curve, both methods exhibit better performances when extrapolations are deployed, which, once again, certifies the validity of our theoretical results.

Finally, we visualize some images generated with $\theta$-Trapezoidal on $6$ different classes in \cref{fig:image_samples}. $\theta$-Trapezoidal consistently generates high-fidelity images that are visually similar to the ground truth ones and well aligned with the concept.

\begin{figure}[!ht]
	\vskip -0.1in
	\begin{center}
		\centerline{\includegraphics[width=0.9\columnwidth]{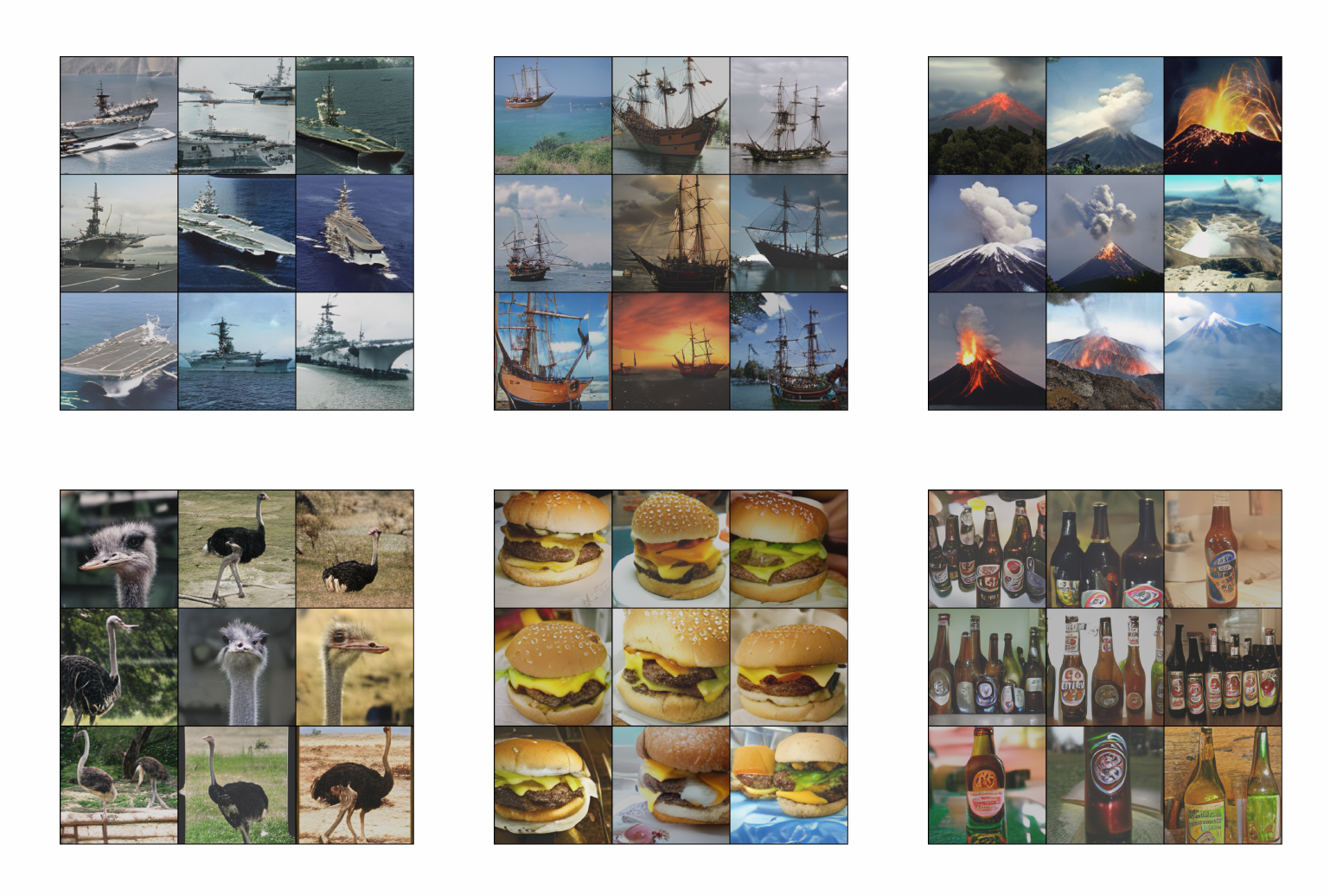}}
		\caption{Visualization of samples generated by $\theta$-Trapezoidal. \textbf{Upper Left}: Aircraft carrier (ImageNet-1k class: \textbf{933});
			\textbf{Upper Middle}: Pirate (ImageNet-1k class: \textbf{724});
			\textbf{Upper Right}: Volcano (ImageNet-1k class: \textbf{980});
			\textbf{Lower Left}: Ostrich (ImageNet-1k class: \textbf{009});
			\textbf{Lower Middle}: Cheeseburger (ImageNet-1k class: \textbf{933});
			\textbf{Lower Right}: Beer bottle (ImageNet-1k class: \textbf{440}).
		}
		\label{fig:image_samples}
	\end{center}
\end{figure}

\subsection{Diffusion Large Language Model and Math Reasoning}
\label{app:dllm_exp}
For the evaluation of diffusion LLMs on math reasoning datasets, we use the checkpoint of LLaDA \cite{nie2025large} after instruction tuning, LLaDA-Instruct 8B\footnote{\url{https://huggingface.co/GSAI-ML/LLaDA-8B-Instruct}}, one of the state-of-the-art diffusion-based LLMs trained natively based on masked discrete diffusion models \cite{ou2024your}. For the math reasoning datasets, we consider GSM8K,\footnote{\url{https://huggingface.co/datasets/openai/gsm8k/viewer/main/train?row=7294}} a standard benchmark for math reasoning consisting of elementary-school-level word problems. Similar to RADD, since LLaDA is also a masked discrete diffusion model, we can freely choose the noise schedule $\sigma(t)$ used in the inference process. We adopt the same log-linear schedule described in \cref{eq:loglinear}, and we choose $\epsilon = 10^{-3}$. The computation of the score function using LLaDA follows the same procedure as is depicted in \cref{app:text_exp}. 

We choose an early stopping time $\delta = 10^{-3}$, and adopt a uniform discretization of the time interval $(\delta, 1]$ for $\theta$-Trapezoidal, with $\theta = \frac{1}{2}$. For Semi-AR with random remasking or confidence-based remasking strategies, we set the block length to $256$, the same as the generation length, to avoid exploiting LLaDA's inherent preference for auto-regressive generation order and ensure a fair comparison with our proposed method. However, we note that our proposed method can also be generalized to a blockwise sampling setting, and we include additional results here for a proof of concept and to demonstrate that LLaDA indeed prefers a left-to-right generation order. We set the sampling temperature to $0.5$ for all evaluated inference methods. The evaluation of generated responses follows a standard pipeline with lm-evaluation harness,\footnote{\url{https://github.com/EleutherAI/lm-evaluation-harness}} a standard LLM evaluation kit, following the instruction in the LLaDA repository.\footnote{\url{https://github.com/ML-GSAI/LLaDA}} Full results are presented in \cref{tab:gsm8k_acc_full}. All experiments are run on 1 NVIDIA A100. 

\begin{table}[!ht]
    \centering
    \caption{Response accuracy on GSM8K with different NFEs.}
 
    \begin{tabular}{cccc}
    \toprule
    Accuracy (\%) & NFE $=64$ & NFE $=128$ & NFE $=256$ \\
    \midrule
    Semi-AR (Conf.) & $33.6$ & $32.0$ & $39.1$ \\
    Semi-AR (Rand.) & $33.8$ & $34.3$ & $40.3$ \\
    $\theta$-Trapezoidal & $35.1$ & $38.4$ & $39.7$ \\
    $\theta$-Trapezoidal (Blocksize $8$) & $50.4$ & $57.4$ & $62.5$ \\
    \bottomrule
    \end{tabular}
    \label{tab:gsm8k_acc_full}
\end{table}

\end{document}